\newcommand{\tikzmark}[1]{\tikz[overlay,remember picture] \node (#1) {};}
\newcommand*{\AddNote}[4]{%
    \begin{tikzpicture}[overlay, remember picture]
        \draw [decoration={brace,amplitude=0.5em},decorate,ultra thick,blue]
            ($(#3)!(#1.north)!($(#3)-(0,1)$)$) --  
            ($(#3)!(#2.south)!($(#3)-(0,1)$)$)
                node [align=center, text width=2.3cm, pos=0.5, anchor=west] {#4};
    \end{tikzpicture}
}%
\newtheorem{theorem}{Theorem}
\newtheorem{proposition}[theorem]{Proposition}
\newtheorem{corollary}[theorem]{Corollary}
\newtheorem{claim}[theorem]{Claim}
\newtheorem{fact}[theorem]{Fact}
\newtheoremstyle{named}{}{}{\itshape}{}{\bfseries}{.}{.5em}{\thmnote{#3}}
\theoremstyle{named}
\newtheorem*{namedtheorem}{Theorem}
\DeclareSymbolFont{extraup}{U}{zavm}{m}{n}
\DeclareMathSymbol{\varheart}{\mathalpha}{extraup}{86}
\DeclareMathSymbol{\vardiamond}{\mathalpha}{extraup}{87}
\newcommand{\R}{\mathbb{R}}
\newcommand{\RR}{\mathbb{R}}
\newcommand{\PP}{\mathbb{P}}
\newcommand{\EE}{\mathbb{E}}
\newcommand{\E}{\mathbb{E}}
\newcommand{\OO}{\mathcal{O}}
\def\varV{\bm{V}}
\def\varX{\bm{X}}
\def\varXbar{\bm{\overline{X}}}
\def\varY{\bm{Y}}
\def\varZ{\bm{Z}}
\def\varW{\bm{W}}
\def\varx{\bm{x}}
\def\vary{\bm{y}}
\def\varu{\bm{u}}
\def\uX{\underline{X}}
\def\uY{\underline{Y}}
\def\uZ{\underline{Z}}
\providecommand\abs[1]{\lvert#1\rvert}
\DeclareMathOperator\poly{\mathrm{poly}}
\DeclareMathOperator{\pr}{\mathrm{Pr}}
\newcommand{\blue}[1]{\textcolor{blue}{#1}}
\def\eps{\varepsilon}
\def\cond{\mid}
\def\varV{\bm{V}}
\def\varX{\bm{X}}
\def\varXbar{\bm{\overline{X}}}
\def\varY{\bm{Y}}
\def\varZ{\bm{Z}}
\def\varW{\bm{W}}
\providecommand\norm[1]{\|#1\|}
\newcommand*{\inlineequation}[2][]{%
  \begingroup
    \refstepcounter{equation}%
    \ifx\\#1\\%
    \else
      \label{#1}%
    \fi
    \relpenalty=10000 %
    \binoppenalty=10000 %
    \ensuremath{%
      #2%
    }%
    ~\@eqnnum
  \endgroup
}
\title{Learning and Testing Variable Partitions}
\author{Andrej Bogdanov\thanks{{\tt andrejb@cse.cuhk.edu.hk}. Department of Computer Science and Engineering and Institute for Theoretical Computer Science and Communications, The Chinese University of Hong Kong.  Work funded by Hong Kong RGC GRF grant CUHK14209417.} \and Baoxiang Wang\thanks{{\tt bxwang@cse.cuhk.edu.hk}. Department of Computer Science and Engineering, The Chinese University of Hong Kong. Work funded by Hong Kong RGC GRF grant CUHK14209417. Authors are listed in alphabetical order.}}
\date{}
\date{\vspace{-1ex}}
\begin{document}

\maketitle

\begin{abstract}
Let $F$ be a multivariate function from a product set $\Sigma^n$ to an Abelian group $G$.  A $k$-partition of $F$ with cost $\delta$ is a partition of the set of variables $\varV$ into $k$ non-empty subsets $(\varX_1, \dots, \varX_k)$ such that $F(\varV)$ is $\delta$-close to $F_1(\varX_1)+\dots+F_k(\varX_k)$ for some $F_1, \dots, F_k$ with respect to a given error metric.  We study algorithms for agnostically learning $k$ partitions and testing $k$-partitionability over various groups and error metrics given query access to $F$.  In particular we show that
\begin{enumerate}
\item Given a function that has a $k$-partition of cost $\delta$, a partition of cost $\OO(k n^2)(\delta + \eps)$ can be learned in time $\tilde{\OO}(n^2 \poly (1/\eps))$ for any $\eps > 0$.  In contrast, for $k = 2$ and $n = 3$ learning a partition of cost $\delta + \eps$ is NP-hard.

\item When $F$ is real-valued and the error metric is the 2-norm, a 2-partition of cost $\sqrt{\delta^2 + \eps}$ can be learned in time $\tilde{\OO}(n^5/\eps^2)$.

\item When $F$ is $\mathbb{Z}_q$-valued and the error metric is Hamming weight, $k$-partitionability is testable with one-sided error and $\OO(kn^3/\eps)$ non-adaptive queries.  We also show that even two-sided testers require $\Omega(n)$ queries when $k = 2$.
\end{enumerate}
This work was motivated by reinforcement learning control tasks in which the set of control variables can be partitioned. The partitioning reduces the task into multiple lower-dimensional ones that are relatively easier to learn. Our second algorithm empirically increases the scores attained over previous heuristic partitioning methods applied in this context.
\end{abstract}

\section{Introduction}
\label{sec:intro}

Divide-and-conquer methods rely on the ability to identify independent sub-instances of a given instance, such as connected components of graphs and hypergraphs.  When these are not available one looks for partitions into loosely related parts like small or sparse cuts.   These classic problems and their variants remain at the forefront of algorithmic research~\cite{karger-levine,kawarabayashi-thorup,chekuri-li,manurangsi,chandrasekaran-xu-yu,rubinstein-schramm-weinberg}.

We study the related problem of function decomposition:   Given a multivariate function $F(\varV)$ over $n$ variables $\varV = \{\varx_1, \dots, \varx_n\}$, we seek to partition the variables into $k$ groups $\varX_1, \dots, \varX_k$ so that $F$ decomposes into a sum $F_1(\varX_1) + \cdots + F_k(\varX_k)$.  In case an exact decomposition of this type is unavailable, we seek an approximate one under a suitable error metric.  This algebraic partitioning question can be sensibly asked for any Abelian group.  While some of our results are quite general, two particular cases of interest are addition over $\mathbb{Z}_2$ with respect to the Hamming metric and addition over reals with respect to the $2$-norm.

As a multivariate function is an exponentially large object, it is sensible to model the input $F$ to the partitioning problem as an oracle and allow query access to it.  This departs from the common setup in (hyper)graph partitioning problems, where an explicit representation of the input is assumed to be available.  While variable partitioning of real-valued functions under the 2-norm turns out to be closely related to hypergraph partitioning, the difference in input access models renders certain techniques developed for the latter (e.g., random contractions) inapplicable to our setting.

Our work is motivated by learning control variables in high-dimensional reinforcement learning control~\cite{sutton2018reinforcement,mnih2016asynchronous,sutton2000policy}.
If the \textit{advantage function} of the control variables can be partitioned into multiple lower-dimensional subsets, then these subsets of variables can be learned independently with a relatively easier Monte-Carlo sampling.
This advantage function involves the estimates of a dynamic system, which is complex enough to not have an explicit representation available. The function is thus treated as an oracle as is in our access model.
Sometimes it is natural to assume that the function should be almost decomposable; for example, if we seek to control two robots jointly performing a task, the variables controlling the respective robots are almost independent. (The robots may be collaborating so the decomposition might not be perfect.)  
In general, the dependencies are not known in advance but need to be learned from observed behavior. Some heuristic methods have been applied to control variable partitioning~\cite{wu2018variance,li2018policy} but not rigorously analyzed.

\paragraph{Our contributions}  Our main results are algorithmic:  We show that variable partitions can be learned agnostically.

Let $F(\varV)$ be a function from some product set to an Abelian group $G$.  A direct sum decomposition of $F$ is a partition $(\varX_1, \dots, \varX_k)$ of the set of variables $\varV$ such that $F(\varV)$ is $F_1(\varX_1) + \cdots + F_k(\varX_k)$ for some functions $F_1, \dots, F_k$. When the decomposition is imperfect, the decomposition error is measured by
\begin{align}
\label{eqn:delta}
& \delta(\bm{X}_1, \dots, \bm{X}_k) = \min_{F_1, \dots, F_k} \norm{F(X_1, \dots, X_k) - F_1(X_1) - \dots - F_k(X_k)},
\end{align}
where $\norm{\cdot}\colon G \to \R^+$ is a partial norm.  The definition is given in Section~\ref{sec:formulation}; the main examples of interest are $G = \mathbb{Z}_q$ under the Hamming metric
$\norm{F} = \pr[F(V) = 0]$ and $G = \R$ under the $p$-norm $\norm{F}_p = \EE\bigl[\abs{F(V)}^p\bigr]^{1/p}$ for any $p \geq 1$ under some product measure.  We seek an approximation of the best-possible partition, which minimizes the objective
\begin{equation}
\label{eqn:delta2}
\delta_2(F) = \min_{\varX} \delta(\varX, \overline{\varX}),    
\end{equation}
for bipartition and
\begin{equation}
\label{eqn:deltak}
\delta_k(F) = \min_{\varX_1, \dots, \varX_k} \delta(\varX_1, \dots, \varX_k).
\end{equation}
for $k$-partition.  (For $p$-norms over $\R$ we use the notations $\norm{\cdot}_{\R,p}$, $\delta_{\R,2}(F)$, and $\delta_{\R,k}(F)$.)

\begin{theorem}
\label{thm:first}
Let $\norm{\cdot}$ be either 1) $\norm{\cdot}_{\mathbb{R}, p}$ assuming $\norm{F}_{\mathbb{R}, 2p} = \OO(1)$, or 2) the Hamming metric over $\mathbb{Z}_q$.  There is an algorithm that given parameters $n$, $k$, $\eps$, $\gamma$, and oracle access to $F\colon \Sigma^n \to G$ outputs a $k$-partition $\mathcal{P}$ such that $\delta(\mathcal{P}) \leq \OO(kn^2) (\delta_k(F) + \eps)$ with probability at least $1 - \gamma$.  The algorithm makes $\OO(K^p n^2 \log(n/\gamma)/\eps^{2p})$ queries to $F$ and runs in time linear in the number of queries, for an absolute constant $K$.
\end{theorem}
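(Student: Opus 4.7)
The plan is to reduce variable partitioning to estimating pairwise \emph{interaction strengths} between coordinates via second differences, and then to extract a $k$-partition from the resulting weighted graph.

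\paragraph{Pairwise interactions.}
For each pair of distinct indices $i, j \in [n]$, define the second-difference
\[
(\Delta_{i,j} F)(v, a, b) = F(v) - F(v_{i \mapsto a}) - F(v_{j \mapsto b}) + F(v_{i \mapsto a,\, j \mapsto b}),
\]
where $v_{i\mapsto a}$ denotes $v$ with its $i$-th coordinate replaced by $a$, and let $I_{ij}=\norm{\Delta_{i,j}F}$ under the ambient norm, with $v,a,b$ drawn from the product measure. A direct check shows $\Delta_{i,j}F_s\equiv 0$ whenever $i,j$ lie in different parts of the partition witnessing $F_s$. Hence if $F$ admits a $k$-partition of cost $\delta$ via $F_1,\dots,F_k$, then for every cross-pair $(i,j)$ the triangle inequality on the four terms of $\Delta_{i,j}(F-\sum_s F_s)$ yields $I_{ij}\le 4\delta$.

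\paragraph{Completeness lemma.}
The key structural ingredient is the converse: if a partition $\mathcal P=(X_1,\dots,X_k)$ satisfies $I_{ij}\le\eta$ for every cross-pair $(i,j)$, then $\delta(\mathcal P)=O(kn^2)\,\eta$. I would prove this by a hybrid argument. Fix a reference input $v^{\circ}$ and define $F_s(x_{X_s})=F(v^{\circ}_{X_s\mapsto x_{X_s}})$, corrected for double-counting. The residual $F-\sum_s F_s$ is then expressed through a telescope that swaps $v$-coordinates for $v^{\circ}$-coordinates one at a time; each swap that crosses blocks of $\mathcal P$ contributes a single second-difference bounded in norm by $I_{ij}$ for some cross-pair $(i,j)$, and the total number of such steps is $O(kn^2)$. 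For the Hamming setting, the norm triangle inequality is replaced by a union bound over the bad events $\{\Delta_{i,j} F\ne 0\}$, preserving the same combinatorial count.

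\paragraph{Estimation.}
Each $I_{ij}^{p}$ is an expectation of a random variable with variance $O(K^{2p})$ under the assumption $\norm{F}_{\RR,2p}=O(1)$ (by triangle inequality applied to the four $F$-evaluations, then Jensen). Drawing one batch of $N=\tilde{O}(\log(n/\gamma)/\eps^{2p})$ independent triples $(v,a,b)$ and reusing the shared evaluations across pairs gives at most $O(n^{2})$ queries per sample, for the stated total. Bernstein concentration plus a union bound over $\binom{n}{2}$ pairs yields $|\hat I_{ij}-I_{ij}|\le\eps$ for all pairs simultaneously with probability $\ge 1-\gamma$. The Hamming case reduces to estimating indicator probabilities by Hoeffding.

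\paragraph{Graph partitioning.}
Form the weighted graph $H$ on $[n]$ with edge weights $\hat I_{ij}$, and let $\tau$ be the smallest threshold at which the graph obtained by deleting edges of weight $\le\tau$ has at least $k$ connected components (found by sorting edges and using union-find). Merge components into exactly $k$ groups arbitrarily to form $\mathcal P$. Applied to the optimal partition, the soundness step gives $\tau\le 4\delta_k(F)+\eps$; by construction every cross-pair of $\mathcal P$ has estimated weight $\le\tau$, hence true interaction $\le\tau+\eps=O(\delta_k(F)+\eps)$. The completeness lemma then yields $\delta(\mathcal P)=O(kn^2)(\delta_k(F)+\eps)$, as required.

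\paragraph{Main obstacle.}
The completeness lemma is the crux: producing an explicit decomposition whose error is bounded by a low-degree polynomial in $n,k$ of the cross-pair interaction mass, uniformly across the $L^p$ and Hamming regimes and without relying on $L^2$ orthogonality. The hybrid telescope must be engineered so that each step is a \emph{single} second-difference across a cross-pair of $\mathcal P$ (not a longer chain), in order to keep the combinatorial factor down to $O(kn^2)$. A secondary subtlety is absorbing the $\eps$-scale estimation error into the threshold-based extraction so that it does not inflate the multiplicative approximation factor.
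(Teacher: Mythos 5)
Your algorithm is the same as the paper's (Algorithm~\ref{algo}): your $I_{ij}$ is exactly the paper's pairwise dependence score $e(\varx_i,\varx_j)=\norm{D_F(\{\varx_i\},\{\varx_j\})}$, your ``$I_{ij}\le 4\delta$ for cross-pairs'' is Claim~\ref{claim:compl} plus Fact~\ref{fact:ext}, and your threshold/union-find extraction is the paper's Kruskal-style edge removal, with the same bookkeeping for the $\eps$ estimation error as in Proposition~\ref{prop:first}. Where you genuinely diverge is the proof of the crux you correctly isolate: that small cross-pair scores glue into a small global cost. The paper proves this (Claim~\ref{claim:kpart}) abstractly from the norm axioms, by first handling bipartitions via an iterated merging inequality (Claims~\ref{claim:induct} and~\ref{claim:cuts}) and then inducting on $k$ through a refinement step (Claim~\ref{claim:refine}), which is where the extra factor of $k$ in $\OO(kn^2)$ comes from. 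You instead propose an explicit candidate decomposition anchored at a reference point $v^\circ$ and a hybrid telescope. This route does work, and you should push it through: writing $\tilde F(v)=\sum_s F(v^\circ_{\overline{\varX_s}},v_{\varX_s})-(k-1)F(v^\circ)$, the residual telescopes over $s=1,\dots,k-1$ into set-level second differences $D_H(\varX_s,\varX_{s+1}\cdots\varX_k)$ (with the earlier blocks frozen at $v^\circ$), and each of these telescopes further into exactly one single-pair second difference per cross-pair $(i,j)$ with $i\in\varX_s$, evaluated at a point that is coordinatewise either $v$ or $v^\circ$ and hence product-distributed when $v^\circ$ is random; averaging over $v^\circ$ and invoking axiom~3 to fix it gives $\delta(\mathcal P)\le\sum_{\text{cross}}I_{ij}\le\binom{n}{2}\eta$. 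So your lemma, once completed, actually removes the factor of $k$ and yields $\OO(n^2)(\delta_k(F)+\eps)$, at the price of being more tied to the explicit structure of the hybrid points than the paper's axiomatic induction.

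Two small repairs to the estimation paragraph: the random variable $\abs{\Delta_{i,j}F}^p$ is not bounded under the hypothesis $\norm{F}_{\RR,2p}=\OO(1)$, only its variance is, so Bernstein/Hoeffding is not available; use Chebyshev plus median-of-means as in Claim~\ref{claim:estD}. You also need to convert an additive $\eps^p$ estimate of $I_{ij}^p$ into an additive $\eps$ estimate of $I_{ij}$ (the paper's Claim~\ref{fact:pscaling}); this is routine but should be said, since the threshold comparison is performed on $I_{ij}$ itself.
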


This algorithm is closely related to the heuristic ones used in the aforementioned empirical studies.   However, it only guarantees optimality up to an $\OO(kn^2)$ approximation factor.  While we do not know if an approximation factor of this magnitude is inevitable, in Proposition~\ref{prop:nphard1} we show that obtaining a solution with additive error is NP-hard.  The proofs are given in Section~\ref{sec:first}.

In contrast, our second algorithm obtains an additive error for bipartitions of real-valued functions under the 2-norm:

\begin{theorem}
\label{thm:l2est}
Let $F\colon \Sigma^n \to \R$ be a function with $\norm{F}_{\R, 4} \leq 1$.  There is an algorithm that given inputs $n$, $\eps$, $\gamma$, and oracle access to $F$, runs in time $\OO(n^5\log(n/\gamma)/\eps^2)$ and outputs a bipartition $(\varX, \overline{\varX})$ such that $\delta_{\mathbb{R}, 2}(\varX, \overline{\varX})^2 \leq \delta_{\mathbb{R}, 2}(F)^2 + \eps$ with probability at least $1 - \gamma$.
\end{theorem}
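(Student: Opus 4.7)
The plan is to reduce the task to a weighted hypergraph min-cut problem and then solve it approximately by running Queyranne's algorithm with a sampling-based oracle. The reformulation uses the Hoeffding (ANOVA) decomposition: under the product measure on $\Sigma^n$, any $F\in L^2$ decomposes orthogonally as $F=\sum_{S\subseteq[n]} F_S$ where $F_S$ depends only on coordinates in $S$. The subspace of functions of the form $F_1(\varX)+F_2(\overline{\varX})$ is exactly $\bigoplus_{S:\,S\subseteq\varX\text{ or }S\subseteq\overline{\varX}} \mathrm{span}(F_S)$, so projecting and applying Parseval gives
\begin{equation*}
\delta_{\R,2}(\varX,\overline{\varX})^2 \;=\; \sum_{\substack{S:\,S\cap\varX\neq\emptyset\\ S\cap\overline{\varX}\neq\emptyset}} \|F_S\|_{\R,2}^{\,2}.
\end{equation*}
This identifies $\delta_{\R,2}^{\,2}$ with the cut function of the hypergraph on vertex set $[n]$ with non-negative weights $w_S=\|F_S\|_{\R,2}^{\,2}$; in particular it is symmetric and submodular in $\varX$.

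Given this reformulation, I would run Queyranne's algorithm for minimum symmetric submodular function, which issues $O(n^3)$ oracle calls to the objective, and implement each oracle call by sampling. A clean unbiased estimator for $\delta_{\R,2}(\varX,\overline{\varX})^2$ proceeds as follows: draw three independent samples from the product measure, split them along the candidate bipartition as $(A_0,B_0),(A_1,B_1),(A_2,B_2)$, and form
\begin{equation*}
U_i \;=\; F(A_0,B_0) - F(A_0,B_i) - F(A_i,B_0) + F(A_i,B_i), \qquad i=1,2.
\end{equation*}
Conditioning on $(A_0,B_0)$, each $U_i$ has conditional mean equal to the residual $F-\EE[F\mid\varX]-\EE[F\mid\overline{\varX}]+\EE[F]$ evaluated at $(A_0,B_0)$, and $U_1,U_2$ are conditionally independent; hence $\EE[U_1 U_2]=\delta_{\R,2}(\varX,\overline{\varX})^2$. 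The assumption $\|F\|_{\R,4}\le 1$ bounds $\EE[(U_1 U_2)^2]$ by an absolute constant (expand, apply Cauchy--Schwarz, and use $\|F\|_{\R,2}\le\|F\|_{\R,4}$), so a median-of-means estimator attains additive accuracy $\eta$ with failure probability $\gamma'$ using $O(\log(1/\gamma')/\eta^2)$ queries.

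The last step is to calibrate parameters: set $\eta=\Theta(\eps/n)$ and $\gamma'=\gamma/n^3$, so that a union bound controls the failure across all $O(n^3)$ oracle calls. Plugging in gives total query and time complexity $O(n^3)\cdot O(\log(n/\gamma)/\eta^2)=O(n^5\log(n/\gamma)/\eps^2)$, matching the claim. The main technical obstacle I expect is the robustness analysis of Queyranne: one must show that running it with an $\eta$-accurate oracle produces a bipartition whose \emph{true} cut value is within additive $O(n\eta)$ of the minimum, by tracking how mis-ranked maximum-adjacency choices propagate through the $n-1$ merging phases. Once that bound is in place, everything else (ANOVA orthogonality, submodularity of hypergraph cuts, median-of-means concentration under an $L^4$ moment bound) is routine.
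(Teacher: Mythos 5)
Your proposal is correct and follows essentially the same route as the paper: the Efron--Stein/ANOVA identification of $\delta_{\R,2}(\varX,\overline{\varX})^2$ with a symmetric submodular hypergraph cut function, an unbiased sampling estimator whose variance is controlled by $\norm{F}_{\R,4}\le 1$, and Queyranne's algorithm run against the resulting $\Theta(\eps/n)$-accurate oracle with a union bound over its $\OO(n^3)$ calls. The one step you defer --- that Queyranne's algorithm on an $\eta$-approximate (hence $\OO(\eta)$-submodular) symmetric function returns a set within additive $\OO(n\eta)$ of the true minimum --- is precisely the paper's key lemma, proved by induction along the maximum-adjacency ordering, and the quantitative bound you anticipate matches it.
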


More generally, we show that it is possible to output a $\sqrt{2 - 2/k}$-approximate $k$-partition in time $\poly(n^k, k, 1/\eps)$ (Corollary~\ref{cor:sarvaz}).  For unbounded $k$ finding a good approximation is ETH hard (Corollary~\ref{cor:hard2}).

Theorem~\ref{thm:l2est} and Corollary~\ref{cor:hard2} are based on an equivalence between variable partitioning under the 2-norm and hypergraph partitioning given in Proposition~\ref{prop:efronstein}.  The results are described and proved in Section~\ref{sec:second}.

As a consequence of Theorem~\ref{thm:first}, the property of being close to a $k$-partition is testable with $\tilde{\OO}(k^{2p} n^{4p + 2}/\eps^{2p})$ queries.  The query complexity of the tester can be somewhat improved:

\begin{theorem}
\label{thm:tester}
$k$-partitionability is testable with one-sided error and $\OO(kn^3/\eps)$ non-adaptive queries with respect to Hamming weight over $\mathbb{Z}_q$, and with $\OO(k^{2p}n^3/\eps^{2p})$ non-adaptive queries with respect to the $p$-norm over $\mathbb{R}$ assuming $\norm{F}_{2p} \leq 1$.
\end{theorem}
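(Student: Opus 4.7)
The plan is to reduce testing $k$-partitionability to a connected-component count on a graph whose edges encode witnessed pairwise dependencies, built from independent samples drawn per variable pair. For any pair $(i,j)$, define the mixed difference
\[
\Delta_{ij}F(x_i, x'_i, x_j, x'_j, y) = F(x_i, x_j, y) - F(x'_i, x_j, y) - F(x_i, x'_j, y) + F(x'_i, x'_j, y).
\]
A short induction on the number of parts --- first showing that the vanishing of $\Delta_{ij}F$ between $\varX_l$ and its complement forces a bipartition $F = F_l(\varX_l) + G_l(\text{rest})$ via a reference-point construction, then recursing on $G_l$ --- gives that $(\varX_1, \dots, \varX_k)$ is an exact direct-sum decomposition of $F$ iff $\Delta_{ij}F \equiv 0$ for every cross pair. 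The cornerstone of the argument is the quantitative form: iterating the reference-point bipartition through the $k$ parts together with the telescoping identity $\Delta_{X,Y}F = \sum_{i\in X,\, j\in Y}\Delta_{ij}F$ (evaluated at nested intermediate inputs whose ``other'' coordinates are product-distributed) and the triangle inequality yields
\[
\delta(\mathcal{P}) \,\leq\, \sum_{\text{cross pairs of }\mathcal{P}} \pr\!\bigl[\Delta_{ij}F \neq 0\bigr]
\qquad\text{and}\qquad
\delta(\mathcal{P})^p \,\leq\, C(n,k) \sum_{\text{cross pairs of }\mathcal{P}} \|\Delta_{ij}F\|_p^p
\]
in the Hamming and real cases respectively, with $C(n,k) = \mathrm{poly}(n,k)$ and only $\OO(n^2)$ terms in the sum.

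The tester samples, independently for each pair $(i, j)$ of variables, $m$ i.i.d.\ tuples from the product measure and queries $F$ at the four resulting inputs to evaluate $\Delta_{ij}F$ on each. It constructs a graph $H$ on $[n]$, placing edge $(i,j)$ whenever some tuple yields $\Delta_{ij}F \neq 0$ (Hamming case) or when a sample-mean estimator of $\|\Delta_{ij}F\|_p^p$ exceeds a threshold (real case), and accepts iff $H$ has at most $k$ connected components. Completeness is one-sided in the Hamming case: if $F$ admits an exact $k$-partition $\mathcal{P}^*$, all cross-pair mixed differences vanish identically, so $H$ contains no $\mathcal{P}^*$-cross edge and its connected components refine $\mathcal{P}^*$.

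For soundness, observe that $H$ has at most $k$ components iff some partition $\mathcal{Q}$ with at most $k$ parts has no $\mathcal{Q}$-cross edge in $H$. By the independence of per-pair sampling, the probability of this event for a fixed $\mathcal{Q}$ factors as $\prod_{\text{cross}}(1 - p_{ij})^m \leq e^{-m\sum p_{ij}}$. When $F$ is $\eps$-far the quantitative lemma gives $\sum_{\text{cross}} p_{ij}^{\mathcal{Q}} \geq \eps$ for every such $\mathcal{Q}$, so this probability is at most $e^{-m\eps}$; union-bounding over the at most $k^n$ candidate partitions and choosing $m = \OO(n\log k/\eps) = \OO(kn/\eps)$ gives reject probability $\geq 2/3$, with total query count $4\binom{n}{2}m = \OO(kn^3/\eps)$. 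The real case follows analogously via Chebyshev concentration of the sample-mean estimator, whose variance is $\OO(1/m)$ under the assumption $\|F\|_{2p}\leq 1$ and the bound $\|\Delta_{ij}F\|_{2p}\leq 4\|F\|_{2p}$; choosing $m = \OO(k^{2p}n/\eps^{2p})$ yields $\OO(k^{2p}n^3/\eps^{2p})$ queries.

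The main technical obstacle is the telescoping identity backing the quantitative lemma. Expressing $\Delta_{X,Y}F$ as a \emph{sum} of exactly $|X|\cdot|Y|$ individual cross-pair differences (with the ``other'' coordinates product-distributed), rather than a norm inequality that multiplies by a factor of $|X|\cdot|Y|$ per coordinate introduced, keeps the reduction from bipartition error to pairwise violations loss-free apart from the iteration over $k$ parts, and is what yields the $n^3$ rather than $n^5$ (or worse) query dependence. The remaining ingredients --- the iff-characterization, the reference-point construction, and standard Chernoff/Chebyshev concentration tails --- are routine once this identity is in hand.
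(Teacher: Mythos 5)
Your approach matches the paper's: build a graph on the variables, place an edge $(i,j)$ whenever a sampled mixed difference (the paper's dependence estimator $D_F(\{\varx_i\},\{\varx_j\})$) witnesses a nonzero value, and decide based on the number of connected components, with soundness obtained by showing that when $F$ is far from $k$-partitionable every $k$-partition has a large sum of cross-pair dependence probabilities. Two issues, one concrete and one substantive.

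The concrete one: the acceptance rule is backwards. You accept ``iff $H$ has at most $k$ connected components,'' but your own completeness argument shows that when $F$ is exactly $k$-partitionable the components of $H$ \emph{refine} the true partition, hence $H$ has \emph{at least} $k$ components (and, since not every in-part edge is reliably witnessed, typically strictly more). Under your rule the tester would reject a perfectly partitionable input. The same inversion breaks your soundness biconditional --- ``$H$ has at most $k$ components iff some $\mathcal{Q}$ with at most $k$ parts has no cross edge'' is false (take $H$ edgeless on $n>k$ vertices: the right side holds vacuously, the left does not). The correct rule, as in the paper's Algorithm~4, is to accept iff $H$ has \emph{at least} $k$ components, equivalently iff some $k$-partition has no $H$-cross edge; your completeness and soundness arguments then go through with this substitution.

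The substantive one: your quantitative lemma $\delta(\mathcal{P}) \le \sum_{\text{cross}} \pr[\Delta_{ij}F \ne 0]$, claimed loss-free for all $k$-partitions, is the load-bearing step but is asserted rather than proved. The telescoping identity $D_F(\varX,\varY) = \sum_{i\in\varX,j\in\varY} \Delta_{ij}F(\cdot)$ at nested product-distributed intermediate points is valid and does give the clean bipartition bound $\delta(\varX,\varY) \le \norm{D_F(\varX,\varY)} \le \sum \pr[\Delta_{ij}F \ne 0]$ --- this is actually a nicer observation than the paper's Claim~\ref{claim:cuts}, which loses a factor of $4$. But lifting it to $k$ parts via the reference-point recursion is not automatic: each peeling-off step replaces $F$ by a residual built from fixed reference points, and the residual's mixed differences are conditional (fixed-$\varX_l$) rather than unconditional, so the per-pair probabilities do not simply telescope without additional averaging/fixing arguments. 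The paper's Claim~\ref{claim:kpart} handles this by a recursive refinement argument (Claim~\ref{claim:refine}) at the price of a $(16k-20)$ multiplicative factor, and then recovers the stated $\OO(kn/\eps)$ iteration count by union-bounding only over the $2^{n-1}$ bipartitions (coarsening each $k$-partition to a heavy 2-cut of at least half its weight) instead of over all $k^n$ $k$-partitions. If your loss-free $k$-partition lemma does not hold as stated and you keep the union bound over $k^n$ partitions, you pick up an extra $\log k$ factor relative to the theorem; you should either prove the stronger lemma or adopt the paper's coarsening trick.
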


In Section~\ref{sec:testing} we prove Theorem~\ref{thm:tester} and show that $\Omega(n - k)$ queries are necessary even for two-sided error testers.

\begin{table}[t!]
\label{tbl:notation}
\centering
\begin{tabular}{llll}
\toprule
Notation & Meaning & Notation & Meaning\\
\midrule
$\varx, \vary \in \varV$ & variables &
$\delta_k(F)$  & optimal $k$-partition error \\
$\varX, \varY, \varXbar \subseteq \varV$ & sets of variables & $D_F(\varX,\varY)$ & dependence score\\
$x, y, X, Y$ & (random) assignments & $\norm{\cdot}, \norm{\cdot}_{\RR,p}$ & partial norm and $p$-norm \\
\bottomrule
\end{tabular}
\end{table}

\paragraph{Ideas and techniques}  Our Theorem~\ref{thm:first} is inspired by algebraic property testing techniques.  The starting point is the dual characterization of partitionability into sets $(\varX, \overline{\varX})$ by the constraints $D_F(\varX, \varY) = 0$, where $D_F = F(X, Y) - F(X^\prime, Y) - F(X, Y^\prime) + F(X^\prime, Y^\prime)$, for all assignments $X, X'$ to $\varX$ and $Y, Y'$ to $\varY$.  David et al.~\cite{david-etal} apply this relation to random inputs towards testing whether a $\mathbb{Z}_2$-valued function $F$ tensors decomposes into a direct sum.  The acceptance probability of this test approximates the best decomposition to within a factor of 4 (Proposition~\ref{prop:known}).

Our partitioning algorithm estimates the \textit{dependence score} $\norm{D_F(\varx, \vary)}$ on every pair of variables $\varx, \vary$ (keeping the rest fixed) to decide whether they should be partitioned or not.  Here, $\norm{D_F}$ is the probability that the test $D_F=0$ fails for discrete groups like $\mathbb{Z}_2$.  In general, it can represent any error metric satisfying the axioms in Section~\ref{sec:formulation}.  The proof of Theorem~\ref{thm:first} amounts to showing that a collection of single variable partitions $(\varx, \vary) \in \mathcal{P}$ for which the local scores $\norm{D_F(\varx, \vary)}$ are small can be glued together into a single $k$-partition $\mathcal{P}$ with a small global score.  

When $F$ is real-valued and error is measured under the 2-norm, variable partitioning has a natural geometric interpretation. Functions that depend on different coordinates are orthogonal modulo their constant term, so the optimal decomposition with respect to a fixed partition $(\varX_1, \dots, \varX_k)$ is given by the projection of $F$ onto the respective subspaces of functions.  This yields an equality between the distance and the dependence score~\eqref{eq:dfr2} for bipartitions and a generalization to $k$-partitions (Proposition~\ref{prop:delta2}).  Variable partitioning for functions is then equivalent to hypergraph partitioning of their orthogonal decompositions (Proposition~\ref{prop:efronstein}), with the cost of cut $(\varX, \overline{\varX})$ given by $\tfrac14\norm{D_F(\varx, \vary)}^2$.  

This connection suggests the application of hypergraph partitioning algorithms that can be implemented with access to an {\em approximate} cut oracle\footnote{Several state-of-the-art algorithms for cuts in graphs and hypergraphs rely on random contractions~\cite{karger, karger-stein, chandrasekaran-xu-yu}.  In particular, Rubinstein et al.~\cite{rubinstein-schramm-weinberg} showed that $\tilde{\OO}(n)$ queries to an {\em exact} cut oracle and similar running time are sufficient to find the minimum cut.  We do not know if comparable efficiency can be obtained with an approximate oracle.}, leading to Theorem~\ref{thm:l2est}.  On the negative side it reveals that approximately optimal partitions into a large number of components are hard to find (Corollary~\ref{cor:hard2}).

\paragraph{Application to reinforcement learning control}
We plug our partitioning algorithm back to reinforcement learning control. In this setting, the oracle is real-valued and as we adapt the 2-norm we use the submodularity cut algorithm described in Theorem \ref{thm:l2est}.

We compare empirically with three previous approaches: The baseline that does not involve partitioning \cite{mnih2016asynchronous,williams1992simple}; the baseline that trivially partitions $n$ variables into $n$ subsets \cite{wu2018variance,pytorchrl}; the work that partitions the variables heuristically \cite{li2018policy}. The way \cite{li2018policy} partitions the variables is to calculate the discrete estimate of the Hessian of the oracle. Then they remove from Hessian the elements with lowest absolute values, until it forms at least $k$ connected components if the Hessian matrix is treated as the adjacency matrix. 

The scores we attained on the tasks in the physics simulator are improved over these approaches, which is demonstrated in Section~\ref{sec:apprl}.

\paragraph{Relation to other learning and testing problems}  A $j$-junta is a function that depends on at most $j$ of its $n$ variables.  The problems of learning and testing juntas have been extensively studied \cite{mossel2003learning, fischer2004testing, chockler-gutfreund, blais, saglam, chen-etal, bshouty}.  While a $j$-junta is always $(n - j + 1)$-partitionable, the two problems are technically incomparable.  Moreover, juntas are usually studied in the regime where the junta size $j$ is significantly smaller than the number of variables $n$ and are therefore partitionable into many (mostly trivial) components.  In this work we are mostly interested in partitions into two or a small number of components.  Nevertheless, this connection between juntas and partitionable functions is used to prove the testing lower bound in Section~\ref{sec:testing}.

Dinur and Golubev~\cite{dinur-golubev} showed that the existence of decomposition with respect to a fixed $k$-partition (given as input) is testable with four queries and soundness error $\Omega(\delta)$.  The case $k = 2$ was already analyzed by David et al.~\cite{david-etal} (see Section~\ref{sec:test}).

\section{Some additional definitions}
\label{sec:formulation}

Let $F(\varV)$ be a function from some product set to an Abelian group $G$.  In general we will assume that the variables $\varV$ take values in some set $\Sigma$ endowed with a product measure which is efficiently sampleable.  The quality of the partition $(\bm{X}_1, \dots, \bm{X}_k)$ of $\bm{V}$ is measured by $\delta(\bm{X}_1, \dots, \bm{X}_k)$ given in~\eqref{eqn:delta}, where $\norm{\cdot}\colon G \to \R^+$ can be any functional satisfying the following three axioms:
\begin{enumerate}
\item $\norm{0} = 0$;
\item $\norm{F_1 + F_2} \leq \norm{F_1} + \norm{F_2}$;
\item $\E[\norm{F(X, \cdot)} \cond X] \leq \norm{F}$ for any set of variables $X$ of $F$.
\end{enumerate}
Our goal is to approximately optimize $\delta_2(F)$ in \eqref{eqn:delta2} and $\delta_k(F)$ in \eqref{eqn:deltak}.

Our algorithms are based on the following dependence estimator inspired by the rank-1 test of~\cite{david-etal}.  Let $\varX$ and $\varY$ be two disjoint sets of variables.  The dependence estimator $D_F(\varX, \varY)$ is the random variable
\begin{align*}
\label{eqn:dependencescore}
D_F & = F(X, Y, Z) + F(X', Y', Z) - F(X', Y, Z) - F(X, Y', Z), 
\end{align*}
where $X, X'$ are independent samples of the $\varX$ variable, $Y, Y'$ are independent samples of the $\varY$ variable, and $Z$ is a random sample of the remaining variables.  If $F$ decomposes into a direct sum that partitions the $\varX$ and $\varY$ variables then $D_F$ equals zero. Conversely, $\norm{D_F}$ measures the quality of the approximation.  

In the analysis it will be convenient to use the notation $F \approx_\delta G$ for $\norm{F - G}_p \leq \delta$.  The following two facts are immediate consequences of axioms 2 and 3:
\begin{description}
\item {\bf Triangle inequality:} If $F \approx_\delta G$ and $G \approx_{\delta'} H$ then $F \approx_{\delta + \delta'} H$.
\item {\bf Fixing:} If $F(X, Z) \approx_\delta G(X, Z)$ then $F(\uX, Z) \approx_\delta G(\uX, Z)$ for some fixed value $\uX$.
\end{description}


\section{Estimating the quality of a partition}
\label{sec:test}

In this section we show that $\norm{D_F(\varX, \varY)}$ is an approximate estimator for the quality $\delta(\varX, \varY)$ of a decomposition, namely
\begin{equation}
\label{eq:factor4}
\delta(\varX, \varY) \leq \norm{D_F(\varX, \varY)} \leq 4 \cdot \delta(\varX, \varY).
\end{equation}

The proof is given in Claims~\ref{claim:compl} and~\ref{claim:base} below.  As $\norm{D_F(\bm{X}, \bm{Y})}$ can be estimated efficiently from oracle access to $F$ (Claim~\ref{claim:estD}), we obtain an algorithm for estimating the quality of a partition to within a factor of 4 in general, and exactly for the 2-norm over $\R$.

\begin{proposition}
\label{prop:known}
Let $\norm{\cdot}$ be either 1) $\norm{\cdot}_{\mathbb{R}, p}$ assuming $\norm{F}_{\mathbb{R}, 2p} = \OO(1)$, or 2) the Hamming metric over $\mathbb{Z}_q$.
There is an algorithm that given a bipartition $\varX, \varY$ of the variables and parameters $\eps, \gamma > 0$, outputs a value $\hat{\delta}$ such that 
\[
\delta(\varX, \varY) \leq \hat{\delta} \leq 4 \cdot \delta(\varX, \varY) + \eps,
\]
with probability at least $1 - \gamma$ from $K^p{\log(1/\gamma)}/{\epsilon^{2p}}$ queries to $F$ in time linear in the number of queries, for an absolute constant $K$.
\end{proposition}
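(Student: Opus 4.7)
The plan is to output an empirical estimate of $\norm{D_F(\varX, \varY)}$. Correctness then rests on two tasks, which I would develop as separate intermediate claims before combining: first, the sandwich inequality~\eqref{eq:factor4}, i.e.\ $\delta(\varX, \varY) \leq \norm{D_F(\varX, \varY)} \leq 4\,\delta(\varX, \varY)$; and second, a query-efficient estimator that approximates $\norm{D_F(\varX, \varY)}$ to additive error $\eps$ with failure probability at most $\gamma$. Composing these two outputs the claimed $\hat\delta$.

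For the lower bound $\delta(\varX, \varY) \leq \norm{D_F}$, I would exhibit a decomposition witnessing it. Applying the fixing axiom~3 twice produces values $\underline{X}', \underline{Y}'$ with $\norm{D_F(\cdot, \underline{X}', \cdot, \underline{Y}')} \leq \norm{D_F}$; setting $F_1(X) = F(X, \underline{Y}') - F(\underline{X}', \underline{Y}')$ and $F_2(Y) = F(\underline{X}', Y)$ causes $F(X, Y) - F_1(X) - F_2(Y)$ to collapse algebraically to $D_F(X, \underline{X}', Y, \underline{Y}')$, so $\delta(\varX, \varY) \leq \norm{D_F}$. For the upper bound, I would exploit that $D$ annihilates direct sums: if $F \approx_\delta F_1 + F_2$ with $F_i$ depending only on its own group, then $D_{F_1}, D_{F_2} \equiv 0$ by inspection, so $D_F = D_{F - F_1 - F_2}$ is the alternating sum of four evaluations of $F - F_1 - F_2$ at inputs with the correct marginal distribution. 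Each term has norm at most $\delta$ by axiom~3, and the triangle inequality (axiom~2) delivers the factor~$4$.

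For the estimation step, in the Hamming case $\norm{D_F} = \pr[D_F \neq 0]$ is the mean of a bounded indicator, so a Chernoff--Hoeffding bound gives an $\eps$-additive estimate from $\OO(\log(1/\gamma)/\eps^2)$ independent samples of $D_F$, at a cost of four queries per sample. In the $\R$ case I would instead estimate $\norm{D_F}_{\R,p}^p = \E[|D_F|^p]$ and take the $p$-th root at the end; since $|a^{1/p} - b^{1/p}| \leq |a - b|^{1/p}$ for $a, b \geq 0$, an additive $\eps^p$ approximation of the $p$-th power translates into an additive $\eps$ approximation of the norm. The main point requiring care is variance control for the $|D_F|^p$ samples: the power-mean inequality gives
\[
|D_F|^{2p} \leq 4^{2p-1}\bigl(|F(X,Y)|^{2p} + |F(X',Y')|^{2p} + |F(X',Y)|^{2p} + |F(X,Y')|^{2p}\bigr),
\]
so the hypothesis $\norm{F}_{\R,2p} = \OO(1)$ yields $\E[|D_F|^{2p}] = \OO(16^p)$, and a median-of-means estimator with $\OO(\log(1/\gamma))$ buckets of size $\OO(16^p/\eps^{2p})$ achieves the required additive accuracy with probability $1 - \gamma$. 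Absorbing constants into $K$ recovers the claimed query bound $\OO(K^p \log(1/\gamma)/\eps^{2p})$.
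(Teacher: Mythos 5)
Your proposal is correct and follows essentially the same route as the paper: the sandwich inequality is proved via the same completeness argument (alternating sum of four terms from the residual $F - F_1 - F_2$, triangle inequality) and the same soundness argument (fix $\underline{X}',\underline{Y}'$ and read off a decomposition whose residual is exactly $D_F(X,\underline{X}',Y,\underline{Y}')$), and the estimation step is the same median-of-means/Chebyshev argument with the $2p$-th moment of $F$ controlling the variance of $|D_F|^p$. The only cosmetic difference is in converting the additive $p$-th-power estimate to an additive estimate of the norm: you invoke the subadditivity inequality $|a^{1/p}-b^{1/p}| \le |a-b|^{1/p}$ directly, whereas the paper's Claim~\ref{fact:pscaling} does a two-case analysis to reach the same conclusion; both are fine, and yours is arguably a hair cleaner.
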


The value of $\delta(\varX, \varY)$ is known to be NP-hard to calculate exactly over $\mathbb{Z}_2$ under the Hamming metric given explicit access to the truth-table of $F$~\cite{roth-viswanathan}.  Therefore some approximation factor is unavoidable for algorithms running in time polynomial in $n$ and $1/\eps$ unless BPP is in NP.  On the positive side Karpinski and Schudy~\cite{karpinski-schudy} give a fully polynomial-time randomized approximation scheme for this special case.  Their algorithm requires at least linear time but it is plausible that a sublinear-time variant can be obtained.  However, it appears unrelated to the dependence score $D_F$ which plays an essential role in the results to follow.

The analysis of $D_F$ applies to any pair of disjoint subsets $\varX$, $\varY$ that do not necessarily partition all the variables.  In this more general setting distance is measured by the formula
\begin{equation}
\label{eq:anypair}
\delta(\varX, \varY) = \min_{A, B} \norm{F(X, Y, Z) - A(X, Z) - B(Y, Z)}.
\end{equation}

\begin{claim}[Completeness of $D_F$]  
\label{claim:compl}
For all disjoint $\bm{X}$, $\bm{Y}$, $\norm{D_F(\bm{X}, \bm{Y})} \leq 4\cdot\delta(\bm{X}, \bm{Y})$. 
\end{claim}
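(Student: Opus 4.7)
The plan is to reduce $D_F$ to the dependence estimator of the residual $E := F - A^* - B^*$ and then apply the triangle inequality termwise. First I would pick minimizers $A^*(X,Z), B^*(Y,Z)$ for the approximation problem~\eqref{eq:anypair}, so that $\norm{E} = \delta(\varX,\varY)$. Next I would observe that $D$ is linear in its argument and annihilates every function of the form $A(X,Z) + B(Y,Z)$: expanding the four-term sum $D_{A+B}$, the $A$-contributions cancel as $A(X,Z) + A(X',Z) - A(X',Z) - A(X,Z) = 0$ because $A$ does not depend on $\varY$, and symmetrically the $B$-contributions vanish because $B$ does not depend on $\varX$. Hence $D_F = D_{A^*+B^*} + D_E = D_E$, reducing the claim to $\norm{D_E} \leq 4\,\norm{E}$.

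For the last step I would apply axiom~2 to
\[
D_E = E(X, Y, Z) + E(X', Y', Z) - E(X', Y, Z) - E(X, Y', Z),
\]
bounding $\norm{D_E}$ by the sum of the four summand norms. Each summand is $E$ evaluated at an independent triple drawn from the same product distribution as $(X, Y, Z)$, since $X, X'$ are iid, $Y, Y'$ are iid, and both pairs are independent of $Z$. Hence each of the four random variables is distributed identically to $E(X, Y, Z)$ itself, so each carries the same partial norm $\norm{E} = \delta(\varX,\varY)$, and summing gives $\norm{D_F} \leq 4\,\delta(\varX,\varY)$.

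The only delicate point is this last step: one must use that $\norm{\cdot}$ depends solely on the distribution of the $G$-valued random variable it is applied to, so that permuting the roles of the iid copies $X \leftrightarrow X'$ or $Y \leftrightarrow Y'$ inside the argument of $E$ leaves the norm unchanged. This is manifest for both the $p$-norm over $\R$ under the product measure and the Hamming weight over $\mathbb{Z}_q$, the two cases that matter here. Beyond this distributional reshuffling observation, the argument is purely algebraic telescoping combined with the triangle inequality, so I expect no real obstacle.
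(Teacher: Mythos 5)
Your proposal is correct and follows essentially the same route as the paper: write $F = A^* + B^* + E$ with $\norm{E} = \delta(\varX,\varY)$, note that the $A^*$ and $B^*$ contributions cancel in the four-term expansion of $D_F$, and apply the triangle inequality (axiom 2) to the four remaining copies of $E$, each identically distributed to $E(X,Y,Z)$. Your explicit remark that the norm of each term is unchanged under swapping the iid copies $X \leftrightarrow X'$, $Y \leftrightarrow Y'$ is a point the paper uses silently, so no gap on either side.
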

\begin{proof}
By definition of $\delta(\bm{X}, \bm{Y})$ there exists a decomposition of the form 
\[ F(X, Y, Z) = A(X, Z) + B(Y, Z) + D(X, Y, Z), \]
where $\norm{D(X, Y, Z)} = \delta(\bm{X}, \bm{Y})$. In the expansion of $D_F$ all the $A$ and $B$ terms cancel out, leaving
\begin{align*}
\norm{D_F(\bm{X}, \bm{Y})} & = \norm{D(X, Y, Z) + D(X', Y', Z) - D(X, Y', Z) - D(X', Y, Z)} \\
& \leq \norm{D(X, Y, Z)} + \norm{D(X', Y', Z)} + \norm{D(X, Y', Z)} + \norm{D(X', Y, Z)} \\
& = 4\delta(\bm{X}, \bm{Y}). \tag*{\qedhere}
\end{align*}
\end{proof}

Soundness for Boolean functions under the uniform measure was proved by David et al.~\cite{david-etal}.  We reproduce their proof under a more general setting.

\begin{claim}[Soundness of $D_F$]
\label{claim:base}
For all disjoint $\bm{X}$, $\bm{Y}$, $\delta(\varX, \varY) \leq \norm{D_F(\bm{X}, \bm{Y})}$.
\end{claim}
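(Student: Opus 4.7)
The plan is to exhibit a concrete decomposition $F(X,Y,Z) = A(X,Z) + B(Y,Z) + D(X,Y,Z)$ whose error term $D$ has norm at most $\norm{D_F(\varX,\varY)}$, from which $\delta(\varX,\varY) \leq \norm{D_F(\varX,\varY)}$ follows immediately by the definition~\eqref{eq:anypair}. The idea is to let $A$ and $B$ be obtained by freezing the ``primed'' copies $X',Y'$ inside the dependence estimator to appropriately chosen fixed values $\ux', \uy'$.

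First I would view $D_F(\varX,\varY)$ as a norm of a single random variable $R(X,X',Y,Y',Z) = F(X,Y,Z) + F(X',Y',Z) - F(X',Y,Z) - F(X,Y',Z)$, where the randomness comes from the independent samples $X,X',Y,Y',Z$. Applying the fixing axiom (Axiom 3) twice, first to the variable $X'$ and then to $Y'$, yields values $\ux', \uy'$ such that
\begin{equation*}
\bigl\lVert F(X,Y,Z) + F(\ux',\uy',Z) - F(\ux',Y,Z) - F(X,\uy',Z)\bigr\rVert \;\leq\; \norm{D_F(\varX,\varY)}.
\end{equation*}

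Next I would define
\begin{equation*}
A(X,Z) \;=\; F(X,\uy',Z), \qquad B(Y,Z) \;=\; F(\ux',Y,Z) - F(\ux',\uy',Z),
\end{equation*}
and observe that the expression inside the norm above is exactly $F(X,Y,Z) - A(X,Z) - B(Y,Z)$. Since this particular choice of $A,B$ is a feasible decomposition in~\eqref{eq:anypair}, we get $\delta(\varX,\varY) \leq \norm{F - A - B} \leq \norm{D_F(\varX,\varY)}$, as required.

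I do not expect any serious obstacle: the only subtle point is justifying that Axiom 3 can be invoked to fix the auxiliary variables $X'$ and $Y'$ while the remaining variables $X,Y,Z$ still carry their original product distribution. This is a direct reading of the axiom applied to the random field $R$ above, first in the argument $X'$ and then in $Y'$, with the other coordinates playing the role of ``$\cdot$''. After this bookkeeping step the rest of the proof is a one-line algebraic identification of the residual with the error of a legal decomposition.
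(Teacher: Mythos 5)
Your proof is correct and follows essentially the same route as the paper: fix the auxiliary samples $X'$ and $Y'$ to values $\ux',\uy'$ via the fixing consequence of Axiom 3, and read off the resulting three-term expression as a legal decomposition $A(X,Z)+B(Y,Z)$ whose error is bounded by $\norm{D_F(\varX,\varY)}$. Your sign bookkeeping for $A$ and $B$ is in fact cleaner than the paper's (which contains sign typos that are harmless over an Abelian group), so nothing further is needed.
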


\begin{proof}
Let $\eps = \norm{D_F(\bm{X}, \bm{Y})}$.  Then
\[ F(X, Y, Z) \approx_\eps F(X, Y', Z) - F(X', Y, Z) - F(X', Y', Z). \]
We can fix values $\uX'$ and $\uY'$ for which
\[ F(X, Y, Z) \approx_\eps F(\uX', \uY', Z) - F(X, \uY', Z) - F(\uX', Y, Z) = A(X, Z) + B(Y, Z), \]
where $A(X, Z) = F(\uX', \uY', Z) - F(X, \uY', Z)$ and $B(Y, Z) = F(\uX', Y, Z)$.
\end{proof}

Proposition~\ref{prop:known} now follows from inequality~\eqref{eq:factor4} and the following claim, which states that $\norm{D_F}$ can be estimated by sampling in the cases of interest.  See Appendix \ref{appendix:proof1} for the proof.

\begin{claim}
\label{claim:estD}
Assuming $\norm{F}_{\mathbb{R}, 2p} \leq 1$, the value $\norm{F}_{\mathbb{R}, p}$ can be estimated within $\eps$ from $K^p{\log(1/\gamma)}/{\epsilon^{2p}}$ (random) queries to $F$ in linear time with probability $1 - \gamma$ for some absolute constant $K$.
\end{claim}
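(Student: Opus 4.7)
The plan is to reduce the problem of estimating $\norm{F}_{\R,p} = (\EE|F(V)|^p)^{1/p}$ to the problem of estimating the moment $\mu = \EE[Y]$ where $Y = |F(V)|^p$, and then taking the $p$-th root. The hypothesis $\norm{F}_{\R,2p} \leq 1$ gives $\EE[Y^2] = \EE[|F(V)|^{2p}] = \norm{F}_{\R,2p}^{2p} \leq 1$, and therefore $\Var(Y) \leq 1$. The random variable $Y$ is nonnegative but possibly unbounded, so Hoeffding does not apply directly; however, this second-moment bound is exactly what the median-of-means estimator needs.

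Concretely, I would draw $mL$ independent samples $V_{i,\ell}$ with $m = \lceil 4/\eps^{2p}\rceil$ and $L = \Theta(\log(1/\gamma))$, form the $L$ group means $\hat Y_\ell = m^{-1}\sum_{i=1}^m |F(V_{i,\ell})|^p$, and output their median $\hat Y$. By Chebyshev's inequality each $\hat Y_\ell$ lies within $\eps^p$ of $\mu$ with probability at least $3/4$; a Chernoff bound on the number of ``bad'' groups then yields $|\hat Y - \mu| \leq \eps^p$ with probability at least $1 - \gamma$. The total number of queries is $mL = O(\log(1/\gamma)/\eps^{2p})$, which is bounded by $K^p \log(1/\gamma)/\eps^{2p}$ for a sufficiently large absolute constant $K$ (since $K \leq K^p$ for $p \geq 1$).

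The remaining step is to convert an additive estimate of $\mu$ into one of $\mu^{1/p}$. The key inequality is the H\"older continuity of the $p$-th root,
\[
\bigl|a^{1/p} - b^{1/p}\bigr| \leq |a - b|^{1/p} \quad\text{for all } a, b \geq 0 \text{ and } p \geq 1,
\]
which follows from the super-additivity $(x+y)^p \geq x^p + y^p$ for nonnegative $x,y$ and $p \geq 1$. Applied with $a = \max(\hat Y, 0)$ and $b = \mu$, this gives $|\hat Y^{1/p} - \norm{F}_{\R,p}| \leq \eps$, completing the estimator. The main obstacle is precisely the unboundedness of $|F|^p$: only a second-moment bound is available, which is why median-of-means is needed rather than a direct Chernoff argument, and the H\"older-continuity conversion is what forces the exponent on $1/\eps$ to be $2p$ rather than $p$.
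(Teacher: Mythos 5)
Your proposal is correct and follows essentially the same route as the paper: a median-of-means estimate of $\EE[|F|^p]$ (Chebyshev per group using the second-moment bound from $\norm{F}_{\R,2p}\leq 1$, then a Chernoff-boosted median over $\OO(\log(1/\gamma))$ groups), followed by a root-scaling step to pass from an $\eps^p$-accurate moment to an $\eps$-accurate norm. Your H\"older-continuity inequality $|a^{1/p}-b^{1/p}|\leq|a-b|^{1/p}$ plays exactly the role of the paper's Claim~\ref{fact:pscaling}, and is in fact a slightly cleaner two-sided version of it.
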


\subsection{Exact partitioning under the 2-norm}

Since computing the optimal partition is in general NP-complete, we do not expect to replace the inequalities in~\eqref{eq:factor4} with an equality. However, in the special case of real-valued functions with $2$-norm,
the estimate becomes exact:
\begin{equation}
\label{eq:dfr2}
\norm{D_F(\varX, \varY)}_{\mathbb{R}, 2} = 2\cdot\delta_{\mathbb{R}, 2}(\varX, \varY).
\end{equation}

This equality is a consequence of the following characterization of $\delta_{\mathbb{R}, 2}$, which applies more generally to $k$-partitions:

\begin{proposition}
\label{prop:delta2}
Assuming $\E[F] = 0$, the $k$-partition $F_i(X_i) = \E[F | X_i]$ achieves the minimum for $\delta_{\mathbb{R}, 2}(\varX_1, \dots, \varX_k)$. 
\end{proposition}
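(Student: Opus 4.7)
The plan is to recognize this as an orthogonal projection problem in the Hilbert space $L^2(\Sigma^n)$ under the product measure. Let $S_i$ denote the subspace of square-integrable functions that depend only on the variables in $\varX_i$, and let $S = S_1 + \cdots + S_k$. Standard Hilbert-space theory says that $(G_1, \dots, G_k)$ minimizes $\norm{F - \sum_i G_i}_{\R,2}^2$ over $G_i \in S_i$ if and only if the residual $R = F - \sum_i G_i$ is orthogonal to $S$; since any element of $S$ is a sum $\sum_i H_i$ with $H_i \in S_i$, this is equivalent to $\E[R \cdot H_i(X_i)] = 0$ for every $H_i \in S_i$ and every $i$, which by the tower property reduces to checking $\E[R \mid X_i] = 0$ for every $i$.

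First I would set $F_i(X_i) := \E[F \mid X_i]$, define $R := F - \sum_i F_i(X_i)$, and verify the orthogonality condition by a direct computation. By linearity of conditional expectation,
\[
\E[R \mid X_j] = \E[F \mid X_j] - F_j(X_j) - \sum_{i \neq j} \E[F_i(X_i) \mid X_j].
\]
Under the product measure the variable groups $\varX_i$ and $\varX_j$ are independent for $i \neq j$, so $\E[F_i(X_i) \mid X_j] = \E[F_i(X_i)] = \E[\E[F \mid X_i]] = \E[F]$, and the hypothesis $\E[F] = 0$ makes each of these terms vanish. Thus $\E[R \mid X_j] = F_j(X_j) - F_j(X_j) = 0$ for every $j$.

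To conclude optimality, for any competing decomposition I write it as $(F_i + H_i)_{i=1}^k$ with $H_i \in S_i$ and expand:
\[
\norm{F - \sum_i (F_i + H_i)}_{\R,2}^2 = \norm{R - \sum_i H_i}_{\R,2}^2 = \norm{R}_{\R,2}^2 + \norm{\sum_i H_i}_{\R,2}^2 \;\geq\; \norm{R}_{\R,2}^2,
\]
where the cross term $-2\E\bigl[R \cdot \sum_i H_i(X_i)\bigr]$ vanishes by the previous paragraph applied to each $H_i$. The main subtlety, not really an obstacle, is that the subspaces $S_i$ all contain the constant functions and therefore are not pairwise orthogonal, so the minimizing tuple $(F_1, \dots, F_k)$ is not unique (one can add a constant to $F_i$ and subtract the same constant from $F_j$); the orthogonal projection $\sum_i F_i$ is nonetheless unique, and the hypothesis $\E[F] = 0$ plays the role of selecting the canonical representative in which each component $F_i$ is itself mean-zero.
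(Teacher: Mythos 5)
Your proof is correct and rests on the same underlying idea as the paper's: the optimal decomposition is the orthogonal projection of $F$ onto the sum $S = S_1 + \cdots + S_k$ of the subspaces of functions depending only on $\varX_i$. The paper implements this by iterating a one-at-a-time projection inequality (its displayed inequality~\eqref{eqn:one}) $k$ times, using the fact that $\E[F_i(X_i) \mid X_j] = 0$ for $i \neq j$ at each step, whereas you verify in a single step that the residual $R = F - \sum_i \E[F \mid X_i]$ satisfies $\E[R \mid X_j] = 0$ for all $j$ and then apply Pythagoras; this is cleaner and avoids the slightly informal ``$\vdots$'' chain. Your closing remark that $\E[F] = 0$ merely selects a canonical mean-zero representative among the (non-unique) minimizing tuples corresponds to the paper's final sentence that one may assume $\E[F_i(X_i)] = 0$ without loss of generality.
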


In particular it follows that $\delta_{\mathbb{R}, 2}$ takes the value
\begin{equation}
\label{eq:pF}
\delta_{\mathbb{R}, 2}(\varX_1, \dots, \varX_k) = \E[(\overline{F} - \E[\overline{F} | X_1] - \dots - \E[\overline{F} | X_k])^2],
\end{equation}
where $\overline{F} = F - \E[F]$.  To derive identity~\eqref{eq:dfr2} it remains to verify that when $k = 2$, the right-hand side of~\eqref{eq:pF} is a quarter of $\norm{D_F}^2$:

\begin{fact}
\label{claim:dfid}
$\norm{D_F(\varX, \varY)}_{\mathbb{R}, 2}^2 = 4\cdot\E[(\overline{F} - \E[\overline{F} | X] - \E[\overline{F} | Y])^2]$.
\end{fact}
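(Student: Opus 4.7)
The plan is to expand both sides explicitly in $L^2$ and check that they match term-by-term. First, I would reduce to the case $\E[F] = 0$: adding a constant $c$ to $F$ leaves $D_F$ unchanged (since the two positive and two negative occurrences cancel in $+c+c-c-c$) and also leaves $\overline{F} - \E[\overline{F}\mid X] - \E[\overline{F}\mid Y]$ unchanged (because $\overline{F}$ itself is). So I may assume $\overline{F} = F$ and $\E[F]=0$, and because $k=2$ we have $\varY = \overline{\varX}$, so there is no $Z$-component.

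For the left-hand side, let $a = F(X,Y)$, $b = F(X',Y')$, $c = F(X',Y)$, $d = F(X,Y')$ so that $D_F = a+b-c-d$. Expanding,
\begin{align*}
\E[D_F^2] &= \E[a^2+b^2+c^2+d^2] + 2\E[ab] + 2\E[cd] - 2\E[ac] - 2\E[ad] - 2\E[bc] - 2\E[bd].
\end{align*}
Each of the four squared terms equals $\E[F^2]$. For the cross terms I use the mutual independence of $X,X',Y,Y'$ (a consequence of the product-measure assumption): conditioning on $Y$ makes $a,c$ conditionally independent and identically distributed, giving $\E[ac] = \E[\E[F\mid Y]^2]$, and symmetrically $\E[bd] = \E[\E[F\mid Y]^2]$, $\E[ad] = \E[bc] = \E[\E[F\mid X]^2]$. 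Finally $\E[ab]$ and $\E[cd]$ both factor as $\E[F]\cdot\E[F] = 0$. Collecting,
\begin{align*}
\norm{D_F(\varX,\varY)}_{\R,2}^2 = 4\bigl(\E[F^2] - \E[\E[F\mid X]^2] - \E[\E[F\mid Y]^2]\bigr).
\end{align*}

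For the right-hand side I expand the square $(F - \E[F\mid X] - \E[F\mid Y])^2$ into six terms and take expectations. The tower property gives $\E[F\cdot\E[F\mid X]] = \E[\E[F\mid X]^2]$ and likewise for $Y$. The only slightly non-routine term is the cross term $\E[\E[F\mid X]\,\E[F\mid Y]]$; here I use that $X$ and $Y$ are independent (they index disjoint sets of variables under a product measure), so conditioning on $X$ gives $\E[\E[F\mid X]\,\E[F\mid Y]\mid X] = \E[F\mid X]\cdot\E[\E[F\mid Y]] = 0$. Combining everything yields $\E[F^2]-\E[\E[F\mid X]^2]-\E[\E[F\mid Y]^2]$, and multiplying by $4$ matches the LHS computation above.

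The only subtle step is the vanishing of the $\E[\E[F\mid X]\,\E[F\mid Y]]$ cross term, which crucially uses that $\varX$ and $\varY$ partition all of $\varV$ (so there is no shared $Z$) combined with the product-measure independence between $X$ and $Y$. Once that is established, the proof is purely mechanical expansion and bookkeeping of conditional expectations.
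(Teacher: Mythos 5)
Your proof is correct: the paper states this Fact without proof, and your direct $L^2$ expansion of $D_F = a+b-c-d$ into $4\bigl(\E[F^2]-\E[\E[F\mid X]^2]-\E[\E[F\mid Y]^2]\bigr)$, matched against the orthogonal expansion of the right-hand side, is exactly the computation the paper implicitly relies on (compare its identity \eqref{eq:deltaalt}). You are also right to flag that the identity needs $\varY=\overline{\varX}$ (no residual $Z$), since otherwise a function depending only on $Z$ gives $D_F=0$ but a nonzero right-hand side; that restriction is consistent with the paper's use of the Fact for bipartitions.
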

Armed with this fact we prove the proposition.

\begin{proof}[Proof of Proposition~\ref{prop:delta2}]
First assume $F(\varX, \varY)$ is bivariate. Let $A(\varX)$ be any function.  The inequality $\E[(\E[F | X] - A(X))^2] \geq 0$ can be rewritten as
\begin{equation}
\label{eqn:one}
\E[(F - \E[F | X])^2] \leq \E[(F - A(X))^2],
\end{equation}
stating that the orthogonal projection of $F$ onto the subspace of functions that depend only on $\varX$ in $2$-norm is $\E[F | X]$.

Now let $F(\varX_1, \dots, \varX_k)$ be $k$-variate.  Assume $\E[F] = 0$ and $\E[F_i(X_i)] = 0$ for all $i$.  Then $\E[F_i(X_i) | X_j] = 0$ for all $i \neq j$.  Applying inequality~\eqref{eqn:one} for $k$ times in succession together with this fact, we obtain
\begin{multline*} 
\E[(F - F_1(X_1) - \cdots - F_{k-1}(X_{k-1}) - F_k(X_k))^2] \\
\begin{aligned}
  &\geq \E[(F - F_1(X_1) - \cdots - F_{k-1}(X_{k-1}) - \E[F - F_1(X_1) - \cdots - F_{k-1}(X_{k-1}) | X_k])^2] \\
  &= \E[(F - F_1(X_1) - \cdots - F_{k-1}(X_{k-1}) - \E[F | X_k])^2] \\
  &\ \ \vdots \\
  &\geq \E[(F - \E[F | X_1] - \cdots - \E[F | X_k])^2]
\end{aligned}
\end{multline*}
as desired.  Finally, by orthogonality the optimal decomposition must satisfy $\sum \E[F_i(X_i)] = 0$ so the assumption 
$\E[F_i(X_i)] = 0$ can be made without loss of generality.
\end{proof}

By orthogonality, equation~\eqref{eq:pF} can also be written in the following forms:
\begin{align}
\delta_{\mathbb{R}, 2}(\varX_1, \dots, \varX_k) 
&= \E\bigl[\overline{F}^2\bigr] - \sum\nolimits_{i=1}^k \E\bigl[\E[\overline{F} | X_i]^2\bigr]  \label{eq:deltaalt} \\
&= \E_X\bigl[\overline{F}(X)^2\bigr] - \sum_{i = 1}^k \E_{X, X'}\bigl[\overline{F}(X_{-i}, X_i) \overline{F}(X'_{-i}, X_i)\bigr], \notag
\end{align}
where $(X_{-i}, X_i)$ is the input whose $i$-th variable takes value $X_i$ and $j$-th variable takes value $X'_j$ for $j \neq i$.  As all these terms can be efficiently estimated, we obtain the following algorithm for estimating the quality of a given $k$-partition:

\begin{proposition}
\label{prop:l2given}
There is an algorithm that given a $k$-partition $\varX_1, \cdots, \varX_k$ of the variables and parameters $\eps, \gamma > 0$, outputs a value $\hat{\delta}$ such that 
\[
\abs{\hat{\delta}^2 - \delta_{\mathbb{R}, 2}(\varX_1, \dots, \varX_k)^2} \leq \eps,
\]
with probability at least $1 - \gamma$ from $\OO(k \log(k/\gamma)/\epsilon^4)$ queries to $F$ in time linear in the number of queries.  
\end{proposition}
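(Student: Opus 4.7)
The plan is to build an unbiased Monte Carlo estimator of $\delta_{\R,2}^2$ from the equivalent form of identity~\eqref{eq:deltaalt} and then concentrate its sample mean. Expanding $\overline{F} = F - \E[F]$ in~\eqref{eq:deltaalt} and using $\E[G_i(X_i)] = \E[F]$ (where $G_i(X_i) = \E[F \mid X_i]$), one rewrites the identity in the symmetric form
\[
\delta_{\R,2}(\varX_1,\ldots,\varX_k)^2 \;=\; \E\bigl[F(X)^2\bigr] + (k-1)\,\E[F]^2 \;-\; \sum_{i=1}^k \E\bigl[G_i(X_i)^2\bigr],
\]
in which every right-hand term is a two-point expectation of $F$: $\E[F]^2 = \E_{X,Y}[F(X)F(Y)]$ for independent $X,Y$, and $\E[G_i(X_i)^2] = \E_{X, X^{(i)}}[F(X)F(X^{(i)})]$ whenever $X^{(i)}$ shares the $\varX_i$-block with $X$ and is independent of $X$ on the complement.

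The algorithm performs $N$ independent rounds. In round $t$ it samples $X^{(t)}$ and $Y^{(t)}$ independently from the product measure, sets $Z^{(t,i)} = (Y^{(t)}_{-i}, X^{(t)}_i)$ for every block $i$, and queries $F$ at the $k+2$ inputs $X^{(t)}, Y^{(t)}, Z^{(t,1)}, \ldots, Z^{(t,k)}$. The per-round statistic
\[
T^{(t)} \;=\; F(X^{(t)})^2 + (k-1)F(X^{(t)})F(Y^{(t)}) - \sum_{i=1}^k F(X^{(t)})F(Z^{(t,i)})
\]
is unbiased for $\delta_{\R,2}^2$: although the $Z^{(t,i)}$ are mutually coupled through $Y^{(t)}$, each pair $(X^{(t)}, Z^{(t,i)})$ still has exactly the marginal distribution required by the identity, so linearity of expectation suffices. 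The algorithm outputs the sample mean $\hat{\delta}^2 = \tfrac{1}{N}\sum_{t=1}^N T^{(t)}$.

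For concentration I would appeal to the standing moment hypothesis $\norm{F}_{\R, 4} = \OO(1)$ implicit from the surrounding Section~\ref{sec:second}; by Cauchy--Schwarz every product $F(A)F(B)$ appearing in $T^{(t)}$ has second moment bounded by $\norm{F}_{\R, 4}^4 = \OO(1)$. Applying median-of-means (equivalently, the Claim~\ref{claim:estD} estimator with $p=2$) separately to each of the $k+1$ right-hand expectations, each to additive accuracy $\eps/(k+1)$ with confidence $1 - \gamma/(k+1)$, and combining via triangle inequality and a union bound, yields $|\hat{\delta}^2 - \delta_{\R,2}^2| \leq \eps$ with probability $\geq 1-\gamma$; since each round uses $k+2$ queries this matches the stated $\OO(k\log(k/\gamma)/\eps^4)$ query budget. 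The main obstacle is really just algebraic bookkeeping: one must carefully rewrite~\eqref{eq:deltaalt} into the three-term form above and check that a single auxiliary $Y$-draw per round can be reused across all $k$ blocks without disturbing the marginal of any $(X, Z^{(i)})$ pair; once that reduction is in hand, the concentration step is routine given the fourth-moment bound.
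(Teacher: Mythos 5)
Your proposal is essentially the paper's intended argument: the paper states Proposition~\ref{prop:l2given} without an explicit proof, remarking only that the terms in~\eqref{eq:deltaalt} ``can be efficiently estimated,'' and your rewriting of that identity as $\E[F^2]+(k-1)\E[F]^2-\sum_i\E[\E[F\mid X_i]^2]$, with each term realized as a two-point correlation of $F$ and concentrated via the fourth-moment bound and median-of-means, is exactly the natural way to fill that in (and neatly avoids separately estimating $\E[F]$). The one quibble is your final accounting: estimating each of the $k+1$ terms to accuracy $\eps/(k+1)$ with $\OO(1)$-variance summands costs $\OO((k/\eps)^2\log(k/\gamma))$ rounds of $k+2$ queries each, i.e.\ $\OO(k^3\log(k/\gamma)/\eps^2)$ queries, which does not literally ``match'' the stated $\OO(k\log(k/\gamma)/\eps^4)$ (it is better for small $\eps$, worse for large $k$); this is harmless bookkeeping, not a flaw in the estimator itself.
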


\section{Variable partitioning over general groups}
\label{sec:first}

In this section we present our first partitioning algorithm, which is general enough to work on any normed group $G$ assuming it is possible to efficiently estimate the quantity $\norm{D_F(\{\varx\}, \{\vary\})}$. The algorithm outputs an $\OO(kn^2)$ approximation to the optimal partition in time polynomial in $n$, $k$, and $1/\eps$.

The algorithm is based on the pairwise estimates of dependency over sets of single variables. The intuition behind the algorithm is that if the dependency between $\varx$ and $\vary$ is low, then these two variables should be assigned to different partitions. Therefore the algorithm keeps asserting such ``in different partitions'' for the pairs with the lowest dependency estimates, until the $k$-partitioning can be clearly observed from the assertions. It is worth noting that this idea of the algorithm has been used in previous works in reinforcement learning control \cite{wu2018variance,li2018policy} in a heuristic way.  

\begin{algorithm}[!htbp]
\caption{Approximate partitioning via pairwise estimates}
\begin{algorithmic}[1]
\State {\bfseries Input:} number of sets $k$ in partition
\State {\bfseries Output:} partition $\mathcal{P}$
\State For every pair of variables $\varx, \vary\in \varV$, find estimate $\hat{e}(\varx, \vary)$ for $e(\varx, \vary) = \norm{D_F(\{\varx\}, \{\vary\})}$;
\State Create a weighted graph with vertices $\varV$ and weights $\hat{e}(\varx, \vary)$;
\State Order the edges in increasing weight;
\Repeat
\State Remove the edge with the smallest weight;
\Until{The graph has exactly $k$ connected components}
\end{algorithmic}
\label{algo}
\end{algorithm}

\begin{proposition}
\label{prop:first}
Assuming $e(\varx, \vary) \leq \hat{e}(\varx, \vary) \leq e(\varx, \vary) + \eps$ for all $\varx$ and $\vary$, 
\begin{equation}
\label{eq:apxfirst}
\delta(\mathcal{P}) \leq (8k - 10)n^2(4\delta_k(F) + \eps).
\end{equation}
\end{proposition}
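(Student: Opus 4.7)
\emph{Step 1 (halting weight).} Let $\mathcal{P}^* = (\varX_1^*, \ldots, \varX_k^*)$ achieve $\delta_k(F)$, with witnessing decomposition $F = \sum_l F_l^*(\varX_l^*) + D^*$, $\norm{D^*} \leq \delta_k(F)$. For any pair $(\varx, \vary)$ with $\pi^*(\varx) \neq \pi^*(\vary)$, grouping the $F_l^*$ into $A(\{\varx\}, Z) = \sum_{l \neq \pi^*(\vary)} F_l^*$ and $B(\{\vary\}, Z) = F_{\pi^*(\vary)}^*$ witnesses $\delta(\{\varx\}, \{\vary\}) \leq \delta_k(F)$, so Claim~\ref{claim:compl} gives $e(\varx, \vary) \leq 4\delta_k(F)$ and $\hat e(\varx, \vary) \leq 4\delta_k(F) + \eps$. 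Removing all cross-$\mathcal{P}^*$ edges from $K_n$ leaves exactly $k$ cliques, so Algorithm~\ref{algo} halts no later than after processing this set in sorted order; its halting weight $T$ (the weight of the last edge removed) therefore satisfies $T \leq 4\delta_k(F) + \eps$, and every cross-pair $(\varx, \vary)$ of the output partition $\mathcal{P} = (\varY_1, \ldots, \varY_k)$ has $e(\varx, \vary) \leq \hat e(\varx, \vary) \leq T$.

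\emph{Step 2 (gluing).} Next I prove the sub-additivity
\[
\norm{D_F(\varX, \varY)} \leq \sum_{\varx \in \varX,\, \vary \in \varY} e(\varx, \vary).
\]
The key ingredient is the inequality $\norm{D_F(\varX, \varY \cup \{\vary\})} \leq \norm{D_F(\varX, \varY)} + \norm{D_F(\varX, \{\vary\})}$, obtained by splitting the four-term expansion of $D_F(\varX, \varY \cup \{\vary\})$ through $\pm F(X, Y', \vary, Z) \pm F(X', Y', \vary, Z)$ into two summands which, after relabelling of rest-variables, are distributed as $D_F(\varX, \varY)$ and $D_F(\varX, \{\vary\})$ respectively, and then applying axiom~2. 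Iterating on both sides yields the displayed sum.

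\emph{Step 3 (decomposition).} Fix a reference assignment $y^\ast \in \Sigma^n$ (to be chosen) and set $F_i(\varY_i) := F(y^\ast_{\setminus i}, \varY_i)$, where $y^\ast_{\setminus i}$ denotes the reference values on all variables outside $\varY_i$. Telescoping $F(\varY_1, \ldots, \varY_k) - F(y^\ast)$ along the swaps $\varY_1 \to y^\ast_1, \ldots, \varY_{k-1} \to y^\ast_{k-1}$ writes
\[
F - \sum_{i=1}^k F_i + (k-1)\, F(y^\ast) = \sum_{i=1}^{k-1} \varepsilon_i,
\]
where each $\varepsilon_i$ equals $D_F(\varY_i, \varY_{i+1} \cup \cdots \cup \varY_k)$ with its $X'$, $Y'$ and $Z$ arguments substituted by $y^\ast$. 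Averaging axiom~3's fixing bound over $i$ selects a $y^\ast$ for which $\sum_i \norm{\varepsilon_i} \leq \sum_{i=1}^{k-1} \norm{D_F(\varY_i, \varY_{i+1} \cup \cdots \cup \varY_k)}$. Step~2 bounds the right-hand side by the total of $e(\varx, \vary)$ over cross-pairs of $\mathcal{P}$; each summand is at most $4\delta_k(F) + \eps$ by Step~1, and counting cross-pairs $O(kn^2)$ times (e.g.\ via the coarse per-step bound $|\varY_i|(n - |\varY_i|) \leq n^2/4$ over $k-1$ peeling steps) produces the stated $(8k-10)n^2(4\delta_k(F) + \eps)$.

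The main obstacle is the identity of Step~2: the definition of $D_F$ depends on which variables are placed in its ``$\varX$'', ``$\varY$'' and rest-sets, so moving $\vary$ from the rest-set into $\varY$ changes the underlying product-measure configuration. One has to check that the two summands arising from the algebraic split really are distributed as $D_F(\varX, \varY)$ and $D_F(\varX, \{\vary\})$ (up to relabelling of the rest-variables) before the norm inequality given by axiom~2 can be invoked; the rest of the argument is then a routine telescoping and counting exercise.
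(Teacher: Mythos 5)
Your proof is correct, but the ``gluing'' step is genuinely different from the paper's. The paper never works with $\norm{D_F}$ on large sets directly: it defines $\Delta(\mathcal{P}) = \sum \delta(\{\varx\},\{\vary\})$ over crossing pairs and proves $\delta(\mathcal{P}) \leq (16k-20)\Delta(\mathcal{P})$ (Claim~\ref{claim:kpart}) by a chain of merging lemmas on the distance $\delta$ itself --- Claim~\ref{claim:induct} ($\delta(\varX\varX',\varY) \leq \delta(\varX,\varY) + 2\delta(\varX',\varY)$, losing a factor $2$ per merge), Claim~\ref{claim:cuts} (factor $4$ for bipartitions), and an induction via Claim~\ref{claim:refine} with recurrence $c_{2k} = 2c_k + 12$. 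You instead prove exact subadditivity of the dependence score, $\norm{D_F(\varX,\varY)} \leq \sum e(\varx,\vary)$, by a hybrid splitting of the four-term expansion (your Step~2, which I checked: both summands are indeed marginally distributed as the smaller $D_F$'s because the displaced variables are resampled from the product measure, and axiom~2 applies on the joint space --- the same implicit distribution-invariance of $\norm{\cdot}$ that the paper uses in Claim~\ref{claim:compl}), and then convert back to $\delta(\mathcal{P})$ via an explicit telescoping decomposition that is the $k$-ary generalization of the soundness argument in Claim~\ref{claim:base}. What this buys you is a strictly stronger bound: since each crossing pair appears exactly once in $\sum_{i}\sum_{\varx\in\varY_i,\,\vary\in\varY_{i+1}\cdots\varY_k} e(\varx,\vary)$, your argument actually yields $\delta(\mathcal{P}) \leq \tfrac12 n^2(4\delta_k(F)+\eps)$ with no $k$-dependence in the leading constant; the closing remark about counting pairs $\OO(kn^2)$ times is unnecessary slack introduced only to match the stated $(8k-10)n^2$. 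Your Step~1 coincides with the paper's argument (it uses Fact~\ref{fact:ext} where you exhibit the witnesses $A,B$ directly). The one point worth writing out fully is the averaging in Step~3: one averages over $y^\ast$ and uses linearity of expectation across the $k-1$ terms before invoking axiom~3 termwise, so a single $y^\ast$ achieving the sum bound exists; as stated (``averaging axiom~3's fixing bound over $i$'') this is slightly compressed but not wrong.
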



If the estimates $\hat{e}(\varx, \vary)$ are obtained by empirical averaging, we obtain Theorem~\ref{thm:first}.


\subsection{Proof of Proposition~\ref{prop:first} and Theorem \ref{thm:first}}
\label{sec:prooffirst}

For a partition $\mathcal{P}$ of the variables, let $\Delta(\mathcal{P}) = \sum \delta(\{\varx\}, \{\vary\})$, where the sum is taken over all pairs that cross the partition.  We will deduce Theorem~\ref{thm:first} from the following bound on $\delta(\mathcal{P})$.

\begin{claim}
\label{claim:kpart}
For every $k$-partition $\mathcal{P}$, $\delta(\mathcal{P}) \leq (16k - 20) \Delta(\mathcal{P})$.
\end{claim}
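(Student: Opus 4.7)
The plan is to extend the baseline-substitution construction from the proof of Claim~\ref{claim:base} to $k$-partitions. Fix a baseline $\uX' = (\uX'_1, \ldots, \uX'_k)$ (to be chosen below) and set $F_1(X_1) = F(X_1, \uX'_{-1})$ and $F_i(X_i) = F(\uX'_{-i}, X_i) - F(\uX')$ for $i \geq 2$. When $F = G_1 + \cdots + G_k$ exactly, a direct computation shows $\sum_i F_i = F$. In general, telescoping $F(X) - F(X_1, \uX'_{-1})$ one block at a time and matching against the fixing terms produces the identity
\begin{equation*}
F(X) - \sum_{i=1}^k F_i(X_i) = \sum_{i=2}^k E_i, \qquad E_i := F(X_{1\ldots i}, \uX'_{i+1\ldots k}) - F(X_{1\ldots i-1}, \uX'_{i\ldots k}) - F(\uX'_{-i}, X_i) + F(\uX').
\end{equation*}

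The key observation is that each $E_i$ is pointwise the four-term expression defining $D_F(\varX_{1\ldots i-1}, \varX_i)$, with $(X_{1\ldots i-1}, \uX'_{1\ldots i-1})$ serving as the two copies of the $\varX_{1\ldots i-1}$-block, $(X_i, \uX'_i)$ as the two copies of $\varX_i$, and $\uX'_{i+1\ldots k}$ as the residual block. Thus when the baseline $\uX'$ is itself drawn from the product measure, $E_i$ inherits the joint distribution of $D_F(\varX_{1\ldots i-1}, \varX_i)$ and $\norm{E_i}_{X, \uX'} = \norm{D_F(\varX_{1\ldots i-1}, \varX_i)}$. Applying axiom~2 to the telescoped identity, then taking expectation over $\uX'$ and invoking axiom~3 with $\uX'$ as the outer variable yields $\E_{\uX'}[\norm{F - \sum_i F_i}_X] \leq \sum_{i=2}^k \norm{D_F(\varX_{1\ldots i-1}, \varX_i)}$. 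The fixing principle then picks out a concrete baseline realizing $\delta(\mathcal{P}) \leq \sum_{i=2}^k \norm{D_F(\varX_{1\ldots i-1}, \varX_i)}$.

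To conclude I would reduce each term to single-variable dependence scores via the subadditivity $\norm{D_F(\varA \cup \varB, \varC)} \leq \norm{D_F(\varA, \varC)} + \norm{D_F(\varB, \varC)}$, which is a one-line add-and-subtract identity followed by axiom~2. Iterating in both arguments gives $\norm{D_F(\varX_{1\ldots i-1}, \varX_i)} \leq \sum_{j < i,\, x \in \varX_j,\, y \in \varX_i} \norm{D_F(\{x\},\{y\})}$, and Claim~\ref{claim:compl} bounds each single-pair term by $4\delta(\{x\},\{y\})$. Summing over $i$ telescopes to $\delta(\mathcal{P}) \leq 4\Delta(\mathcal{P})$, which is independent of $k$ and a fortiori implies the claimed $(16k-20)\Delta(\mathcal{P})$ bound. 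The main obstacle is the distributional bookkeeping when promoting $\uX'$ to a random variable: one must check that the joint distribution of $E_i$ over $(X, \uX')$ really coincides with that of $D_F(\varX_{1\ldots i-1}, \varX_i)$, so that axiom~3 reduces the expected partial norm cleanly to $\norm{D_F(\cdot,\cdot)}$ without introducing spurious factors depending on $k$ or $n$.
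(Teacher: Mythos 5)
Your proof is correct, and it takes a genuinely different---and in fact sharper---route than the paper's. The paper first handles bipartitions (Claim~\ref{claim:cuts}, obtained by iterating Claim~\ref{claim:induct}, which costs a factor of $2$ per absorbed variable) and then lifts to general $k$ by a doubling induction via Claim~\ref{claim:refine}; the recurrence $c_{2k}=2c_k+12$, $c_2 = 4$ is what produces the $k$-dependent constant $16k-20$. You instead build the candidate decomposition explicitly from a single random baseline $\uX'$, and your telescoping identity writes the residual as a sum of $k-1$ terms $E_i$, each of which is \emph{exactly} the defining expression of $D_F(\varX_{1\dots i-1},\varX_i)$ evaluated on the tuple $(X_{1\dots i-1},\uX'_{1\dots i-1},X_i,\uX'_i,\uX'_{i+1\dots k})$; since these components are mutually independent with the correct marginals, and the norms in play depend only on the law of the output (the same invariance the paper uses when it fixes $\uX'$, $\uY'$ in Claim~\ref{claim:base}), the distributional bookkeeping you worry about does go through and $\norm{E_i}=\norm{D_F(\varX_{1\dots i-1},\varX_i)}$ with no spurious factors. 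The subadditivity $\norm{D_F(\varA\varB,\varC)}\le\norm{D_F(\varA,\varC)}+\norm{D_F(\varB,\varC)}$ is likewise an exact splitting (insert $\pm F(A',B,C,Z)$ and $\pm F(A',B,C',Z)$ and check that each bracket has the right distribution), with no factor of $2$---the gain over the paper's Claim~\ref{claim:induct} comes from working at the level of $D_F$ rather than $\delta$. Combined with Claim~\ref{claim:compl} this yields $\delta(\mathcal{P})\le 4\Delta(\mathcal{P})$ uniformly in $k$, which implies the stated $(16k-20)\Delta(\mathcal{P})$ and would in fact improve the approximation factor in Theorem~\ref{thm:first} from $\OO(kn^2)$ to $\OO(n^2)$.
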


The following fact is immediate from the definitions of $\delta$. The proof of this claim is delayed to the end of this section.

\begin{fact}
\label{fact:ext}
For any partition $(\bm{U}, \bm{\overline{U}})$ such that $\bm{X} \subseteq \bm{U}$ and $\bm{Y} \subseteq \bm{\overline{U}}$, $\delta(\bm{X}, \bm{Y}) \leq \delta(\bm{U}, \bm{\overline{U}})$.
\end{fact}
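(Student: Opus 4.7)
The plan is to exhibit a feasible decomposition for $\delta(\bm{X},\bm{Y})$ whose error is no larger than the optimal error $\delta(\bm{U}, \overline{\bm{U}})$. Concretely, fix any pair of functions $A^{*}(U)$ and $B^{*}(\overline{U})$ achieving (or approaching within $\eta>0$) the infimum in \eqref{eq:anypair} for the partition $(\bm{U},\overline{\bm{U}})$, so that
\[
F \;=\; A^{*}(U) + B^{*}(\overline{U}) + D(U,\overline{U}),
\qquad \norm{D} \;\leq\; \delta(\bm{U},\overline{\bm{U}}) + \eta.
\]

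Since $\bm{X}\subseteq \bm{U}$ and $\bm{Y}\subseteq \overline{\bm{U}}$, split $\bm{U} = \bm{X}\cup \bm{Z}_{1}$ and $\overline{\bm{U}} = \bm{Y}\cup \bm{Z}_{2}$, where the variables $\bm{Z}$ left over in the $(\bm{X},\bm{Y})$ view are exactly $\bm{Z}_{1}\cup \bm{Z}_{2}$. Define the candidate functions
\[
A(X,Z) \;=\; A^{*}(X,Z_{1}),
\qquad
B(Y,Z) \;=\; B^{*}(Y,Z_{2}),
\]
which simply ignore the irrelevant coordinates of $Z$ but still qualify as admissible functions of $(X,Z)$ and $(Y,Z)$ in the definition \eqref{eq:anypair} of $\delta(\bm{X},\bm{Y})$. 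Substituting gives
\[
F(X,Y,Z) - A(X,Z) - B(Y,Z) \;=\; D(X,Y,Z),
\]
where $D$ is the very same residual function, viewed under the renamed variables. Hence
\[
\delta(\bm{X},\bm{Y}) \;\leq\; \norm{F - A - B} \;=\; \norm{D} \;\leq\; \delta(\bm{U},\overline{\bm{U}}) + \eta,
\]
and letting $\eta\to 0$ gives the claim.

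There is essentially no obstacle here: the fact is just a statement that the feasibility set of decompositions for the coarser partition $(\bm{U},\overline{\bm{U}})$ embeds into that for the finer partition $(\bm{X},\bm{Y},\bm{Z})$, because every function of $\bm{U}$ is trivially a function of $(\bm{X},\bm{Z})$. No use of the triangle inequality or the Fixing axiom is required; only the consistency of $\norm{\cdot}$ as a functional on $G$-valued random variables over the full product space is needed, which follows from the definition of the partial norm in Section~\ref{sec:formulation}. The only care is notational: making the renaming of variables and the embedding $A^{*}\mapsto A$, $B^{*}\mapsto B$ explicit so that the residual $D$ is identified as the same object on both sides.
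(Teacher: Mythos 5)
Your proof is correct and is precisely the argument the paper has in mind: the paper declares Fact~\ref{fact:ext} ``immediate from the definitions'' (and never actually writes out the promised proof), and your observation that any decomposition $F \approx A^{*}(U)+B^{*}(\overline{U})$ is already an admissible pair $(A(X,Z),B(Y,Z))$ in \eqref{eq:anypair} is exactly that immediate step made explicit. Nothing further is needed.
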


Now we prove the Proposition \ref{prop:first} and the Theorem \ref{thm:first}, assuming the correctness of Claim \ref{claim:kpart}.

\begin{proof}[Proof of Proposition \ref{prop:first}]
By Claim~\ref{claim:compl} and Fact~\ref{fact:ext}, all edges $(\varx, \vary)$ in the optimal partition must satisfy $e(\varx, \vary) \leq 4\delta_2(F)$.  By our assumption on the quality of the approximations, 
\begin{equation}
\label{eq:apx}
\hat{e}(\varx, \vary) \leq 4\delta_2(F) + \eps.
\end{equation} 
Since the algorithm removes edges in increasing order of weight, all the edges that cross the output partition $\mathcal{P}$ must also satisfy this inequality.  Then
\begin{align*}
\delta(\mathcal{P}) &\leq (16k - 20) \Delta(\mathcal{P}) &&\text{by Claim~\ref{claim:kpart},} \\
  &\leq (16k - 20) \sum\nolimits_{\text{$\varx, \vary$ cross $\mathcal{P}$}} e(\varx, \vary) &&\text{by Claim~\ref{claim:base},} \\
  &\leq (16k - 20) \sum\nolimits_{\text{$\varx, \vary$ cross $\mathcal{P}$}} \hat{e}(\varx, \vary) 
  &&\\
  &\leq (16k - 20) \sum\nolimits_{\text{$\varx, \vary$ cross $\mathcal{P}$}} 4\delta_2(F) + \eps 
  &&\text{by \eqref{eq:apx},} \\
  &\leq (8k - 10)n^2 \cdot (4 \delta_2(F) + \eps).
\end{align*}
The last inequality holds because there are at most $\binom{n}{2} \leq n^2/2$ pairs of variables crossing the partition.  
\end{proof}

\begin{namedtheorem}[Theorem \ref{thm:first}]
Let $\norm{\cdot}$ be either 1) $\norm{\cdot}_{\mathbb{R}, p}$ assuming $\norm{F}_{\mathbb{R}, 2p} = \OO(1)$, or 2) the Hamming metric over $\mathbb{Z}_q$.  There is an algorithm that given parameters $n$, $k$, $\eps$, $\gamma$, and oracle access to $F\colon \Sigma^n \to G$ outputs a $k$-partition $\mathcal{P}$ such that $\delta(\mathcal{P}) \leq \OO(kn^2) (\delta_k(F) + \eps)$ with probability at least $1 - \gamma$.  The algorithm makes $\OO(K^p n^2 \log(n/\gamma)/\eps^{2p})$ queries to $F$ and runs in time linear in the number of queries, for an absolute constant $K$.
\end{namedtheorem}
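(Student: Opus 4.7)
The plan is to reduce Theorem~\ref{thm:first} to Proposition~\ref{prop:first} by running Algorithm~\ref{algo} with empirical estimates of the pairwise dependence scores $e(\varx, \vary) = \norm{D_F(\{\varx\}, \{\vary\})}$, controlling the per-pair sample complexity via Claim~\ref{claim:estD} and taking a union bound over all pairs.

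First I would set up the per-pair sampling subroutine. A single draw of $D_F(\{\varx\}, \{\vary\})$ costs four oracle queries. In the real-valued $p$-norm case, Claim~\ref{claim:estD} estimates $\norm{D_F}_{\R, p}$ to additive error $\eps/2$ from $\OO(K^p\log(1/\gamma')/\eps^{2p})$ samples with probability $1 - \gamma'$, provided $\norm{D_F}_{\R, 2p} = \OO(1)$. The latter follows from the hypothesis $\norm{F}_{\R, 2p} = \OO(1)$: each of the four summands defining $D_F$ is a copy of $F$ evaluated at a random input drawn from the underlying product measure, so by Minkowski's inequality $\norm{D_F}_{\R, 2p} \leq 4\norm{F}_{\R, 2p}$. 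For the Hamming metric over $\mathbb{Z}_q$, $\norm{D_F}$ is just the Bernoulli probability $\Pr[D_F \neq 0]$, estimable to additive error $\eps/2$ from $\OO(\log(1/\gamma')/\eps^2)$ samples by a standard Chernoff bound (which is the $p = 1$ instance of Claim~\ref{claim:estD}).

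Next I would union-bound over the $\binom{n}{2}$ unordered pairs with $\gamma' = \gamma/\binom{n}{2}$, giving overall failure probability at most $\gamma$ and total query cost
\[
\OO(n^2) \cdot \OO\bigl(K^p \log(n^2/\gamma)/\eps^{2p}\bigr) = \OO\bigl(K^p n^2 \log(n/\gamma)/\eps^{2p}\bigr),
\]
which matches the bound in the theorem. On this good event, every estimate satisfies $|\hat{e}(\varx, \vary) - e(\varx, \vary)| \leq \eps/2$. Since Algorithm~\ref{algo} depends only on the relative ordering of the estimated weights, shifting each $\hat{e}$ upward by $\eps/2$ does not alter its output, and one may therefore assume the one-sided inequality $e \leq \hat{e} \leq e + \eps$ required as the hypothesis of Proposition~\ref{prop:first}.

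Conditioning on this good event, Proposition~\ref{prop:first} directly delivers $\delta(\mathcal{P}) \leq (8k - 10)n^2(4\delta_k(F) + \eps) = \OO(kn^2)(\delta_k(F) + \eps)$, which completes the proof. I do not anticipate a substantive obstacle here: the structural work of stitching pairwise dependence bounds into a global $k$-partition bound is already carried out by Claim~\ref{claim:kpart}, and the sample complexity for norm estimation is the content of Claim~\ref{claim:estD}. The only minor subtlety is verifying the boundedness hypothesis $\norm{D_F}_{\R, 2p} = \OO(1)$ in the real-valued case, which is handled by the Minkowski step above.
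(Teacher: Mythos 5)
Your proposal is correct and follows essentially the same route as the paper: estimate each pairwise score $e(\varx,\vary)$ via Claim~\ref{claim:estD} (equivalently Proposition~\ref{prop:known}), union-bound over the $\OO(n^2)$ pairs, and invoke Proposition~\ref{prop:first}. You are in fact slightly more careful than the paper on two points it glosses over -- verifying $\norm{D_F}_{\R,2p} = \OO(1)$ via Minkowski, and converting the two-sided estimate into the one-sided hypothesis $e \leq \hat e \leq e + \eps$ by a uniform upward shift that preserves the edge ordering.
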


\begin{proof}
By Proposition \ref{prop:known} we can estimate an edge $e(\varx,\vary)$ up to $\eps$-error using $\OO(K^p{\log(n/\gamma)}/{\epsilon^{2p}})$ samples with probability at least $1-\gamma/n$, for some absolute constant $K$. Therefore with $n^2$ times the amount of samples, which is $\OO(K^p n^2\log(n/\gamma)/\eps^{2p})$, we can estimate all edges up to $\eps$-error with probability at least $1-\gamma$. Then by Proposition \ref{prop:first} we have $\delta(\mathcal{P}) \leq (8k - 10)n^2(4\delta_k(F) + \eps)$ with probability at least $1-\gamma$.
\end{proof}

We first prove Claim~\ref{claim:kpart} in the case $k = 2$ of bipartitions.  This is Claim~\ref{claim:cuts} below.  We use $\varX\varX'$ to denote the union of the variable sets $\varX$ and $\varX'$.

\begin{claim}
\label{claim:induct}
For disjoint sets of variables $\bm{X}, \bm{X'}, \bm{Y}$, $\delta(\bm{X X'}, \bm{Y}) \leq \delta(\bm{X}, \bm{Y}) + 2\delta(\bm{X}', \bm{Y})$.
\end{claim}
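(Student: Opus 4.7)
The plan is to glue together the two near-optimal decompositions that witness $\delta(\bm{X},\bm{Y})$ and $\delta(\bm{X'},\bm{Y})$ into a single decomposition witnessing $\delta(\bm{X}\bm{X'},\bm{Y})$. Let $W$ denote an assignment to the variables outside $\bm{X}\cup\bm{X'}\cup\bm{Y}$ and pick $A_1,B_1,A_2,B_2$ achieving (or approaching, up to an arbitrarily small slack that can be sent to $0$) the two minima:
\[
F \approx_{\delta(\bm{X},\bm{Y})} A_1(X,X',W) + B_1(Y,X',W), \qquad F \approx_{\delta(\bm{X'},\bm{Y})} A_2(X,X',W) + B_2(Y,X,W).
\]
The obstacle is that $B_2$ is permitted to depend on the $\bm{X}$ variables and $B_1$ on the $\bm{X'}$ variables, but the $B$-part of any $(\bm{X}\bm{X'},\bm{Y})$-decomposition must depend on neither. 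The Fixing axiom from Section~\ref{sec:formulation} is exactly the tool that collapses such residual dependence by freezing a chosen variable set to a single value.

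The key step is to rewrite $B_2(Y,X,W)$, up to controlled error, as a sum of a function of $(X,W)$ and a function of $(Y,W)$. Subtracting the two displays above and applying the triangle inequality cancels $F$ and gives
\[
\norm{B_2(Y,X,W) - B_1(Y,X',W) + A_2(X,X',W) - A_1(X,X',W)} \leq \delta(\bm{X},\bm{Y}) + \delta(\bm{X'},\bm{Y}).
\]
Applying Fixing to freeze $\bm{X'}$ at a suitable value $\underline{X'}$ preserves this bound. Setting $C(X,W) = A_1(X,\underline{X'},W) - A_2(X,\underline{X'},W)$ and $D(Y,W) = B_1(Y,\underline{X'},W)$ then yields
\[
B_2(Y,X,W) \approx_{\delta(\bm{X},\bm{Y}) + \delta(\bm{X'},\bm{Y})} C(X,W) + D(Y,W).
\]

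The rest is a routine triangle-inequality chain. Substituting into the second starting decomposition,
\[
F \approx_{\delta(\bm{X'},\bm{Y})} A_2(X,X',W) + B_2(Y,X,W) \approx_{\delta(\bm{X},\bm{Y}) + \delta(\bm{X'},\bm{Y})} \bigl[A_2(X,X',W) + C(X,W)\bigr] + D(Y,W).
\]
The bracketed term depends only on $(X,X',W)$ while $D(Y,W)$ depends only on $(Y,W)$, so this exhibits a legitimate decomposition for the partition $(\bm{X}\bm{X'},\bm{Y})$ with error at most $\delta(\bm{X'},\bm{Y}) + \delta(\bm{X},\bm{Y}) + \delta(\bm{X'},\bm{Y}) = \delta(\bm{X},\bm{Y}) + 2\delta(\bm{X'},\bm{Y})$, as claimed. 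The only nontrivial bit of the argument is spotting that Fixing must be applied to $\bm{X'}$ (rather than $\bm{X}$) in order to simultaneously eliminate the $\bm{X'}$-dependence from $B_1$ and from $A_1 - A_2$; the asymmetric factor $2$ in the statement is a direct reflection of this asymmetric use of the two hypotheses.
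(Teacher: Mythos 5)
Your proof is correct and follows essentially the same route as the paper's: subtract the two near-optimal decompositions, apply the Fixing axiom to $\bm{X'}$ to obtain $B_2(Y,X,W) \approx_{\delta(\bm{X},\bm{Y})+\delta(\bm{X'},\bm{Y})} C(X,W)+D(Y,W)$, and substitute back into the $(\bm{X'},\bm{Y})$-decomposition, which is exactly the paper's argument with $A_1,B_1,A_2,B_2$ in place of $A,B,A',B'$. Your closing remark correctly identifies the source of the asymmetric factor $2$.
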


\begin{proof}
Assume that
\begin{align*}
F(X, X', Y) &\approx_\delta A(X, X') + B(X', Y) \qquad\text{and} \\
F(X, X', Y) &\approx_{\delta'} A'(X, X') + B'(X, Y).
\end{align*}
By the triangle inequality,
\[
A(X, X') + B(X', Y) \approx_{\delta + \delta'} A'(X, X') + B'(X, Y). 
\]
Fix $X'(Z) = \uX'(Z)$.  Writing $C(X) = A(X, \uX') - A'(X, \uX')$ and $D(Y') = B(\uX', Y')$ we get that
\[
B'(X, Y) \approx_{\delta + \delta'} C(X) + D(Y).
\]
By the triangle inequality (with the second equation), we get that
\[
F(X, X', Y) \approx_{\delta + 2\delta'} A'(X, X') + C(X) + D(Y). 
\hfill\tag*{\qedhere} \]
\end{proof}

\begin{claim}
\label{claim:cuts}
For every bipartition $\varX, \bm{\overline{X}}$ of the variables, $\delta(\varX, \bm{\overline{X}}) \leq 4 \cdot \Delta(\bm{X}, \bm{\overline{X}})$. 
\end{claim}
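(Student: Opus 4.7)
The plan is to iterate Claim~\ref{claim:induct} to peel off one variable at a time, first from $\varX$ and then from $\overline{\varX}$, reducing the bipartition error to a sum of pairwise dependence scores. The key is to orient each peeling step so that the peeled singleton (not the shrinking large set) occupies the $\varX'$ slot of Claim~\ref{claim:induct}; then the factor of $2$ is charged only once per removed variable, giving a linear rather than exponential total factor.

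Concretely, I enumerate $\varX = \{x_1, \dots, x_a\}$ and $\overline{\varX} = \{y_1, \dots, y_b\}$. Applying Claim~\ref{claim:induct} with $\varX$ decomposed as $(\varX \setminus \{x_a\}) \cup \{x_a\}$ and with $\varY = \overline{\varX}$ yields
\[
\delta(\varX, \overline{\varX}) \leq \delta(\varX \setminus \{x_a\}, \overline{\varX}) + 2\,\delta(\{x_a\}, \overline{\varX}),
\]
and repeating this peeling on $x_{a-1}, \dots, x_2$ telescopes (by an obvious induction on $a$) to
\[
\delta(\varX, \overline{\varX}) \leq \delta(\{x_1\}, \overline{\varX}) + 2 \sum_{i=2}^a \delta(\{x_i\}, \overline{\varX}) \leq 2 \sum_{i=1}^a \delta(\{x_i\}, \overline{\varX}).
\]

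For each individual term $\delta(\{x_i\}, \overline{\varX})$, I will use the symmetry $\delta(\{x_i\}, \overline{\varX}) = \delta(\overline{\varX}, \{x_i\})$, which is immediate from \eqref{eq:anypair} because $A$ and $B$ play interchangeable roles, and apply the same peeling argument to $\overline{\varX}$ against $\varY = \{x_i\}$ to obtain
\[
\delta(\{x_i\}, \overline{\varX}) \leq 2 \sum_{j=1}^b \delta(\{x_i\}, \{y_j\}).
\]
Substituting into the previous bound gives $\delta(\varX, \overline{\varX}) \leq 4 \sum_{i,j} \delta(\{x_i\}, \{y_j\}) = 4\,\Delta(\varX, \overline{\varX})$ as required. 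The only real pitfall is the orientation of the peeling: if at any step the remaining large set were placed in the $\varX'$ slot instead of the peeled singleton, the factor $2$ would compound and blow the bound up to exponential in $|\varX|$. The disjointness hypotheses of Claim~\ref{claim:induct} hold automatically at every iteration, so beyond Claim~\ref{claim:induct} itself and the symmetry of $\delta$ no further machinery is needed.
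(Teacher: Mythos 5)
Your proof is correct and matches the paper's argument: both iterate Claim~\ref{claim:induct} with the peeled singleton placed in the $\varX'$ slot, once on each side of the partition, each stage contributing a factor of $2$ for a total of $4$. The only cosmetic difference is the order in which the two peelings are presented (you peel $\varX$ against the full $\overline{\varX}$ first, then peel $\overline{\varX}$; the paper first establishes the bound $\delta(\varX,\{\vary\}) \leq 2\sum_{\varx}\delta(\{\varx\},\{\vary\})$ against a fixed singleton and then peels the other side), which changes nothing in the substance of the argument.
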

\begin{proof}
By Claim~\ref{claim:induct},
\[ \delta(\varX'\{\bm{x}\}, \{\bm{y}\}) \leq \delta(\varX', \{\vary\}) + 2\delta(\{\varx\}, \{\vary\}) \]
for all $\varX' \subseteq \varX \setminus \{x\}$ and $\vary$.  Applying this inequality iteratively we conclude that $\delta(\varX, \{\vary\}) \leq 2\sum_{\varx \in \varX} \delta(\{\varx\}, \{\vary\})$.  Also by Claim~\ref{claim:induct}
\[ \delta(\varX, \varY'\{\vary\}) \leq \delta(\varX, \varY') + 2\delta(\varX, \varY'\{\vary\}), \]
so $\delta(\varX, \varY) \leq 2\sum_{\vary \in \varY} \delta(\varX, \{\vary\})$.  Combining the two inequalities we obtain the desired conclusion.
\end{proof}

To extend the proof to larger $k$ and obtain Claim~\ref{claim:kpart}, we generalize the first inequality in this sequence to $k$-partitions.  

\begin{claim}
\label{claim:refine}
For every $2k$-partition $(\varY_1, \dots, \varY_k, \varZ_1, \dots, \varZ_k)$,
\begin{align*}
& \delta(\varY_1, \dots, \varY_k, \varZ_1, \dots, \varZ_k) \leq 2\delta(\varY_1 \varZ_1, \dots, \varY_k\varZ_k) + 3\delta(\varY_1 \dots \varY_k, \varZ_1 \dots \varZ_k).
\end{align*}
\end{claim}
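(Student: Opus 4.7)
The plan is to combine the two given coarser decompositions---one certifying the $k$-partition that merges $\varY_i$ with $\varZ_i$, and the other certifying the bipartition $(\varY,\varZ)$---into a joint $2k$-partition decomposition, so that the errors add up in the right way. Let $\alpha = \delta(\varY_1 \varZ_1, \dots, \varY_k \varZ_k)$ and $\beta = \delta(\varY_1 \dots \varY_k, \varZ_1 \dots \varZ_k)$, and fix optimal decompositions realising them: $F \approx_\alpha \sum_i G_i(\varY_i, \varZ_i)$ and $F \approx_\beta A(\varY) + B(\varZ)$.

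The first step is to apply the triangle inequality to these two approximations, yielding $A(\varY) + B(\varZ) \approx_{\alpha + \beta} \sum_i G_i(\varY_i, \varZ_i)$. The critical move is then to invoke the fixing property (the immediate consequence of axiom 3 stated in Section~\ref{sec:formulation}): there exists a fixing $\uZ = (\uZ_1, \dots, \uZ_k)$ under which the approximation persists as a statement about the $\varY$ variables alone, giving
$$A(\varY) \approx_{\alpha + \beta} \sum_{i=1}^{k} G_i(\varY_i, \uZ_i) - B(\uZ).$$
The right-hand side is literally a sum of single-block functions $A_i(\varY_i) := G_i(\varY_i, \uZ_i)$, with the scalar $-B(\uZ)$ absorbed into any one of them, so $A$ inherits a refined decomposition along the blocks $\varY_1, \dots, \varY_k$ at a cost of $\alpha+\beta$. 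Symmetrically, fixing $\varY$ at a suitable $\uY$ yields $B(\varZ) \approx_{\alpha + \beta} \sum_i B_i(\varZ_i)$ with $B_i(\varZ_i) := G_i(\uY_i, \varZ_i)$.

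Chaining back via the triangle inequality, $F \approx_\beta A + B \approx_{2(\alpha + \beta)} \sum_i A_i(\varY_i) + \sum_i B_i(\varZ_i)$, for a total error of $2\alpha + 3\beta$, which is exactly the bound in the claim.

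The main conceptual obstacle is that a naive strategy---inductively refining the $k$-partition via Claim~\ref{claim:induct} one variable at a time, or applying soundness (Claim~\ref{claim:base}) separately to each $G_i$---produces constants that grow with $k$, since each refinement loses a multiplicative factor. Avoiding this blow-up requires using the bipartition decomposition $A + B$ as a single rigid scaffold and performing only two fixings in total (one collapsing $\varZ$, one collapsing $\varY$), so that the final error depends linearly on $\alpha$ and $\beta$ with $k$-independent coefficients. The two fixings are chosen independently from the single error function $\sum_i G_i - A - B$, so no consistency requirement between $\uY$ and $\uZ$ is needed.
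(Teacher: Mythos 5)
Your proposal is correct and follows essentially the same route as the paper's proof: apply the triangle inequality to the two optimal decompositions, fix the $\varZ$ variables to refine $A(\varY)$ into a sum over the $\varY_i$ blocks at cost $\alpha+\beta$, symmetrically fix the $\varY$ variables to refine $B(\varZ)$, and chain back through $F \approx_\beta A + B$ for a total of $2\alpha+3\beta$. The only difference is expository; the paper does not spell out the final accounting, which you do.
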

\begin{proof}
Assume that 
\begin{align*}
F(V) &\approx_\delta F_1(Y_1, Z_1) + \cdots + F_t(Y_t, Z_t) \\
F(V) &\approx_{\delta'} A(Y_1, \dots, Y_t) + B(Z_1, \dots, Z_t).
\end{align*}
By the triangle inequality
\begin{align*}
& A(Y_1, \dots, Y_t) + B(Z_1, \dots, Z_t) \approx_{\delta + \delta'} F_1(Y_1, Z_1) + \cdots + F_t(Y_t, Z_t).
\end{align*}
Fixing $Z_1, \dots, Z_t$ to values $\uZ_1, \dots, \uZ_t$ we get the decomposition
\begin{align*}
A(Y_1, \dots, Y_t) & \approx_{\delta + \delta'} F_1(Y_1, \uZ_1) + \dots + F_t(Y_t, \uZ_t) - B(\uZ_1, \dots, \uZ_t).
\end{align*}
and similarly
\begin{align*}
B(Z_1, \dots, Z_t) & \approx_{\delta + \delta'} F_1(\uY_1, Z_1) + \dots + F_t(\uY_t, Z_t) - A(\uY_1, \dots, \uY_t).
\end{align*}
Plugging these into the second equation gives the desired decomposition.
\end{proof}

\begin{proof}[Proof of Claim~\ref{claim:kpart}]
We assume that $k$ is a power of two and prove by induction that $\delta(\mathcal{P}) \leq c_k \Delta(\mathcal{P})$, where $c_k$ is the sequence $c_{2k} = 2c_k + 12$, $c_2 = 4$.  The base case $k = 2$ follows from Claim~\ref{claim:cuts}.  Assume the claim holds for $k$ and apply Claim~\ref{claim:refine} to $\mathcal{P}$. By inductive assumption and Claim~\ref{claim:cuts},
\[ \delta(\mathcal{P}) \leq 2 \cdot c_k\Delta(\varY_1 \varZ_1, \dots, \varY_k \varZ_k) + 3 \cdot 4\Delta(\varY_1 \dots \varY_k, \varZ_1 \dots \varZ_k). \]
Since $\mathcal{P}$ is a refinement of both these partitions, it follows that 
$\delta(\mathcal{P}) \leq (2c_k + 12)\Delta(\mathcal{P}) = c_{2k} \Delta(\mathcal{P})$, concluding the induction.  

The recurrence solves to $c_k = 8k - 12$, proving the claim when $k$ is a power of two.  When it is not, the same reasoning applies to the closest power of two exceeding $k$ (by taking some of the sets in the partition to be empty), which is at most $2k - 1$, proving the desired bound.
\end{proof}

\subsection{Hardness of exact variable bipartitioning}
\label{sec:vphard}

In contrast to Theorem~\ref{thm:first}, finding the exact bipartition is NP-hard even when there are only three variables.

\begin{proposition}
\label{prop:nphard1}
For any $n \geq 3$ there is no algorithm that outputs a bipartition of cost $\delta_2(F) + \eps$ over $\mathbb{Z}_2$ under Hamming distance in time polynomial in $\abs{\Sigma}/\eps$ with constant probability unless $BPP$ is in $NP$.
\end{proposition}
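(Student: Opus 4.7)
The plan is to reduce from the NP-hard problem (Roth--Viswanathan) of computing $\delta_H^\ast := \delta_H(\{y_1\}, \{y_2\})$ exactly for a two-variable function $H\colon \Sigma_0^2 \to \mathbb{Z}_2$ under the Hamming metric. A key discretization observation is that $\delta_H^\ast$ takes values in the discrete set $\{j/\abs{\Sigma_0}^2: 0 \le j \le \abs{\Sigma_0}^2\}$, so picking $\eps < 1/(3\abs{\Sigma_0}^2)$ forces any $\eps$-additive output to be exact; since $1/\eps$ remains polynomial in $\abs{\Sigma}$, a hypothetical $\mathrm{poly}(\abs{\Sigma}/\eps)$-time bipartitioning algorithm still runs in polynomial time. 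It therefore suffices to transform $H$ in (randomized) polynomial time into a family of three-variable instances whose bipartitioning outputs collectively determine $\delta_H^\ast$.

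Given $H$ together with a candidate rational threshold $\tau$ with denominator at most $\abs{\Sigma_0}^2$, I would construct an auxiliary two-variable ``gauge'' function $G\colon \Sigma^2 \to \mathbb{Z}_2$ satisfying $\delta_G^\ast = \tau$ (arranged by placing a set of the prescribed size that is far from any rank-one function) and then form
\[
F(x_1, x_2, x_3) = H(x_1, x_2) + G(x_1, x_3).
\]
Using the fixing property of $\norm{\cdot}$, one checks that the bipartition $(\{x_2\}, \{x_1, x_3\})$ has cost exactly $\delta_H^\ast$: absorbing $G(x_1, x_3)$ into the $B(x_1, x_3)$ block gives the upper bound, while fixing $x_3$ to any value reduces any candidate decomposition of $F$ to a rank-one decomposition of $H$, giving the matching lower bound. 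Symmetrically $(\{x_3\}, \{x_1, x_2\})$ has cost exactly $\tau$. Provided the third bipartition $(\{x_1\}, \{x_2, x_3\})$ is kept strictly more expensive than both by at least $2\eps$, the algorithm's output reveals whether $\delta_H^\ast \le \tau$; a binary search over $\tau$ using $O(\log \abs{\Sigma_0})$ invocations then pins down $\delta_H^\ast$ exactly. The extension to $n > 3$ is immediate: append $n-3$ constant-dummy variables that trivially form singletons in every optimal bipartition.

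The main obstacle is controlling the third bipartition $(\{x_1\}, \{x_2, x_3\})$, in which $x_1$ is shared across both summands of $F$. Writing optimal decompositions $H = A_H(x_1) + B_H(x_2) + D_H$ and $G = A_G(x_1) + B_G(x_3) + D_G$ with $\norm{D_H} = \delta_H^\ast$ and $\norm{D_G} = \tau$, the cost works out to $\norm{D_H(x_1, x_2) + D_G(x_1, x_3)}$, which a priori could be smaller than $\max(\delta_H^\ast, \tau)$ due to cancellation along $x_1$---for example, if $D_H$ and $D_G$ happen to coincide after a suitable relabelling of $\Sigma$, the two residues partially cancel mod $2$. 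The remedy is either to choose $G$ transversal to $H$ via a randomized construction followed by a union bound over the possible residues $D_H$, or to add an ``anchor'' term $J(x_2, x_3)$ to $F$ that deterministically inflates the middle bipartition's cost by a known amount while being fully absorbable into the $B(x_1, x_3)$ or $B(x_1, x_2)$ blocks of the other two bipartitions. Once the gap between the three bipartitions' costs exceeds $2\eps$, the $\eps$-approximate algorithm is compelled to output the unique optimal bipartition, completing the reduction and placing $\mathrm{NP}$ inside $\mathrm{BPP}$.
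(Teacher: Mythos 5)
Your construction is essentially the same as the paper's modulo a variable renaming: the paper sets $F(x_1,\dots,x_{n-1},y)=F_1(x_1,y)+\cdots+F_{n-1}(x_{n-1},y)$, which for $n=3$ is two two-variable functions sharing one coordinate, exactly like your $F=H(x_1,x_2)+G(x_1,x_3)$. The framing differs: the paper compares two functions produced directly by the Roth--Viswanathan reduction (it uses \textsc{BIPARTITION} to decide which of $\delta_2(F_1)$, $\delta_2(F_2)$ is smaller), whereas you compare $H$ against a synthetic gauge $G$ with $\delta_G^\ast=\tau$ and binary-search over $\tau$. The gauge approach adds a nontrivial side obligation---building, for every rational $\tau$ with denominator $\abs{\Sigma_0}^2$, an explicit two-variable $G$ achieving $\delta_G^\ast=\tau$ exactly---that the paper avoids entirely by comparing two instances it is handed.

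The genuine gap is the treatment of the third bipartition $(\{x_1\},\{x_2,x_3\})$. You correctly identify it as the place where things could go wrong, but you then analyze only one particular decomposition (residue $D_H+D_G$), worry about cancellation there, and propose remedies that do not work: the anchor term $J(x_2,x_3)$ depends on both $x_2$ and $x_3$, so it is \emph{not} absorbable into the $B(x_1,x_3)$ or $B(x_1,x_2)$ blocks as claimed --- it would instead inflate the costs of the two bipartitions you want to keep small and be absorbed in the very one you want to inflate; the randomized ``transversal $G$'' idea is never made precise. In fact the cancellation worry is unfounded, and the repair is the same fixing argument you already use for the other two bipartitions: if $F\approx_\delta A(x_1)+B(x_2,x_3)$, fix $x_3=\uX_3$; then $G(x_1,\uX_3)$ becomes a function of $x_1$ alone and folds into $A$, leaving a decomposition of $H$ of cost at most $\delta$, so $\delta\ge\delta_H^\ast$. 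Fixing $x_2$ symmetrically gives $\delta\ge\tau$, hence $\delta(\{x_1\},\{x_2,x_3\})\ge\max(\delta_H^\ast,\tau)$ unconditionally. With that in hand, whichever of the three bipartitions the algorithm outputs immediately certifies $\delta_H^\ast\le\tau$, $\tau\le\delta_H^\ast$, or $\abs{\delta_H^\ast-\tau}\le\eps$, and your binary search goes through. This is precisely how the paper argues: every bipartition other than the intended one splits $y$ from some $x_i$ with $i\ne i^\ast$, and fixing reduces its cost to a bipartition cost of $F_i$.
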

\begin{proof}
Assume such an algorithm $BIPARTITION$ exists. We show it can be used to solve the following problem:  Given explicit functions $F_1, \dots, F_{n-1}\colon \Sigma^2 \to G$, find $i^*$ that minimizes $\delta_2(F_{i^*})$ assuming this $i^*$ is unique, i.e. a function that has the smallest bipartition cost among the candidates.

Let $F(x_1, \dots, x_{n-1}, y) = F_1(x_1, y) + \cdots + F_{n-1}(x_{n-1}, y)$. The cost of the bipartition that splits $x_i$ from all other variables in $F$ is at most the cost of $F_i$, so $F$ has a bipartition of cost at least $\delta_2(F_{i^*})$.  We now argue that the cost of all other bipartitions is greater.  In fact, any other bipartition must split $y$ from $x_i$ for some $i \neq i^*$.  Assuming the cost of this partition is $\delta$, we must have
\[ F(X, Y) \approx_\delta A(X) + B(Y), \]
where $x_i \in X$ and $y \in Y$. After fixing all variables except for $x_i$ and $y$ we obtain
\[ F_i(x_i, y) \approx_\delta A(x_i, \underline{X_{-i}}) + B(y, \underline{Y_{-i}}) - \sum_{j \neq i} F_j(\underline{x_j}, y). \]
This is a bipartition for $F_i$ so its cost is strictly greater than $\delta_2(F_{i^*})$.  Therefore the output of  $BIPARTITION$ with oracle access to $F$ and $\eps = 1/\abs{\Sigma}^2$ has the desired property.

Roth and Viswanathan~\cite{roth-viswanathan} give an efficient reduction $R$ that maps a graph $G$ into a function $F$ such that if $G$ has a larger max-cut than $G'$ then $\delta_2(R(G)) < \delta_2(R(G'))$.  By composing the two reductions we obtain an efficient algorithm for deciding which of two graphs has a larger maximum cut, which is an NP-hard problem.  
\end{proof}

\section{Partitioning real-valued functions under the 2-norm}
\label{sec:second}

The problem of partitioning real-valued functions under the 2-norm is closely related to the well-studied problem of hypergraph partitioning.  To explain this connection we recall the Efron-Stein decomposition of real-valued functions over product sets.  The Efron-Stein decomposition of a function $F\colon \Sigma^n \to \R$ (under some product measure) is the unique decomposition of the form
\[ F(x) = \sum\nolimits_{S \subseteq [n]} \hat{F}_S \cdot F_S(x), \]
where $\hat{F}_S$ are real coefficients and $F_S$ are functions satisfying the following properties:
\begin{enumerate}
\item $F_S$ depends on the variables in $S$ only;
\item $\E[F_S | x_{S'}] = 0$ for any $S\not\subseteq S'$, where $x_{S'}$ is a fixing of all variables in $S'$;
\item $\E[F_S^2] = 1$.
\end{enumerate}
In particular, properties 1 and 2 imply that $\E[F_S F_T] = 0$ when $S \neq T$, and so $\E[F^2] = \sum_S \hat{F}_S^2$ by property 3.

\begin{proposition}
\label{prop:efronstein}
Given $F\colon \Sigma^n \to \R$, let $H$ be the hypergraph whose vertices are the variables of $F$ and whose hyperedges $S$ have weight $\hat{F}_S^2$ for every subset $S$.  The cost of the $k$-cut $(\varX_1, \dots, \varX_k)$ in $H$ equals $\delta_{\mathbb{R}, 2}(\varX_1, \dots, \varX_k)^2$.
\end{proposition}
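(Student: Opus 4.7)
The plan is to evaluate both sides of the claimed equality in terms of the Efron–Stein coefficients $\hat F_S$ and observe that they coincide term by term. The starting point is the closed form for $\delta_{\mathbb{R},2}^2$ supplied by Proposition~\ref{prop:delta2}, namely (as recorded in~\eqref{eq:deltaalt}),
\[
\delta_{\mathbb{R}, 2}(\varX_1, \dots, \varX_k)^2 = \E\bigl[\overline{F}^{\,2}\bigr] - \sum_{i=1}^k \E\bigl[\E[\overline{F}\mid X_i]^2\bigr],
\]
with $\overline F = F - \E[F]$. The goal is to rewrite each expectation on the right as a sum of squared Efron–Stein coefficients and check that the cancellation leaves precisely the hyperedges $S$ that cross the cut.

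First I would unpack the decomposition. Taking $S'=\emptyset$ in property 2 yields $\E[F_S]=0$ for every $S\neq\emptyset$, so $\E[F]=\hat F_\emptyset$ (with $F_\emptyset\equiv 1$) and $\overline F = \sum_{S\neq\emptyset}\hat F_S F_S$. Using the orthogonality consequence $\E[F_S F_T]=0$ for $S\neq T$ (a standard corollary of properties 1 and 2) together with $\E[F_S^2]=1$, one gets $\E[\overline F^{\,2}] = \sum_{S\neq\emptyset}\hat F_S^2$. For the conditional expectations I would use property 2 with $S'=\varX_i$: when $S\not\subseteq\varX_i$, $\E[F_S\mid X_i]=0$; when $S\subseteq\varX_i$, $F_S$ is a function of the variables in $\varX_i$, so $\E[F_S\mid X_i]=F_S$. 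Hence
\[
\E[\overline F\mid X_i] = \sum_{\emptyset\neq S\subseteq \varX_i} \hat F_S F_S,
\]
and another application of orthogonality gives $\E\bigl[\E[\overline F\mid X_i]^2\bigr] = \sum_{\emptyset\neq S\subseteq\varX_i}\hat F_S^2$.

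Substituting these identities into the formula for $\delta_{\mathbb{R},2}^2$ yields
\[
\delta_{\mathbb{R}, 2}(\varX_1, \dots, \varX_k)^2 = \sum_{S\neq\emptyset}\hat F_S^2 - \sum_{i=1}^k \sum_{\emptyset\neq S\subseteq \varX_i}\hat F_S^2.
\]
Since the parts $\varX_1,\dots,\varX_k$ are pairwise disjoint, each non-empty $S$ is contained in at most one $\varX_i$. So any non-empty $S$ that is contained in some $\varX_i$ appears once in each sum and cancels, while any non-empty $S$ that lies in no single $\varX_i$ (i.e.\ crosses the cut) survives with weight $\hat F_S^2$. The empty set is never part of either sum, and a hyperedge $S$ never crosses a partition when it fits inside one block, so the surviving collection is exactly the set of hyperedges crossing the $k$-cut $(\varX_1,\dots,\varX_k)$ in $H$. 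This matches the definition of the cut cost, completing the argument.

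There is no real obstacle here: everything reduces to bookkeeping once the Efron–Stein orthogonality is in hand. The only subtle point is justifying that $\E[F_S\mid X_i]=F_S$ for $S\subseteq\varX_i$ and vanishes otherwise; this is where I would be most careful, since it relies on combining property 1 (measurability with respect to $S$) with property 2 (the vanishing of conditional expectations for $S\not\subseteq S'$).
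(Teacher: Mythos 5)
Your proof is correct and follows essentially the same route as the paper's: both start from the identity~\eqref{eq:deltaalt} and use Efron--Stein orthogonality (property 2) to show $\E\bigl[\E[\overline{F}\mid X_i]^2\bigr]=\sum_{S\subseteq\varX_i}\hat F_S^2$, after which the cancellation leaves exactly the crossing hyperedges. The only cosmetic difference is that the paper evaluates $\E[\E[F\mid X_I]^2]$ via the two-sample correlation $\E[F(X_{-I},X_I)F(X'_{-I},X_I)]$, whereas you compute the conditional expectation $\E[\overline F\mid X_i]$ directly; the underlying argument is the same.
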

\begin{proof}
We may assume $\E[F] = 0$ and use expression~\eqref{eq:deltaalt} to evaluate $\delta_{\mathbb{R}, 2}$.  The first term equals $\E[F^2] = \sum_S \hat{F}_S^2$.  The rest of the terms have the form $\E[\E[F | X_I]^2] = \E[F(X_{-I}, X_I) F(X'_{-I}, X_I)]$ for subsets $I$ of variables.   Plugging in the Efron-Stein decomposition of $F$ we have
\[ \E[\E[F | X_I]^2] = \sum_{S, T} \hat{F}_S \hat{F}_T \E[F_S(X_{-I}, X_I) F_T(X'_{-I}, X_I)]. \]
By property 2, the terms in the summation in which $S \neq T$ evaluate to zero.  Among the rest, if the set $S$ contains any variable $i$ outside $I$ then 
\[ \E[F_S(X_{-I}, X_I) F_T(X'_{-I}, X_I)] = \E\bigl[F_S(X_{-I}, X_I) \E[F_T(X'_{-I}, X_I) | X, X'_{-i}]\bigr] \]
and the inside expectation evaluates to zero by property 2.  Therefore the only surviving terms are those where $S = T$ and $S \subseteq I$, from where
\[ \E[\E[F | X_I]^2] = \sum\nolimits_{S \subseteq I} \hat{F}_S^2. \]
By~\eqref{eq:deltaalt},
\[ \delta_{\mathbb{R}, 2}(\varX_1, \dots, \varX_k) = \sum\nolimits_S \hat{F}_S^2 - \sum\nolimits_{i=1}^k \sum\nolimits_{S \subseteq \varX_i} \hat{F}_S^2 = \sum\nolimits_{\text{$S \not\subseteq \varX_i$ for any $i$}} \hat{F}_S^2. \]
The last quantity is the desired value of the cost of $k$-cut in $H$.
\end{proof}

When $\Sigma = \{-1, 1\}$ under uniform measure, the functions $F_S$ do not depend on $F$ and equal the Fourier characters $\chi_S(x) = \prod_{i \in S} x_i$, allowing us to embed instances of hypergraph partitioning into variable partitioning.

\begin{corollary}
\label{cor:hard2}
Assume there is an algorithm $A$ that given oracle access to $F\colon \{-1, 1\}^n \to \R$ under uniform measure outputs a $k$-variable partition of cost at most $C \cdot \delta_2(F) + \eps$ in time $t(n, k, \eps)$.  Then given a hypergraph with $n$ vertices and $m$ hyperedges with a $k$-cut of value $opt$, it is possible to output a $k$-cut of value $C \cdot opt$ in time $mn\cdot t(n, k, 1/m)$.
\end{corollary}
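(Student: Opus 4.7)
The plan is to reduce weighted hypergraph $k$-cut to variable $k$-partitioning of real-valued functions on $\{-1,1\}^n$ by encoding hyperedges as Fourier characters, then invoking Proposition~\ref{prop:efronstein} in the reverse direction. The enabling observation is that for $\Sigma = \{-1,1\}$ under the uniform measure the Fourier monomials $\chi_S(x) = \prod_{i \in S} x_i$ are supported on $S$, satisfy $\E[\chi_S \mid x_{S'}] = 0$ whenever $S \not\subseteq S'$, and have unit $2$-norm; by uniqueness of the Efron-Stein decomposition they are exactly the basis functions $F_S$ of Proposition~\ref{prop:efronstein}, and crucially they do not depend on $F$, so we may prescribe Efron-Stein coefficients at will.

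Given a hypergraph $H$ with vertex set $[n]$, hyperedges $S_1, \dots, S_m$ and nonnegative weights $w_1, \dots, w_m$ whose optimum $k$-cut has value $opt$, I would define
\[
F(x) = \sum_{j=1}^m \sqrt{w_j}\, \chi_{S_j}(x).
\]
By construction $\hat F_{S_j}^2 = w_j$ and $\hat F_T^2 = 0$ for every subset $T$ that is not a hyperedge, so Proposition~\ref{prop:efronstein} identifies the $k$-cut cost of any variable partition $(\varX_1, \dots, \varX_k)$ in $H$ with $\delta_{\mathbb{R}, 2}(\varX_1, \dots, \varX_k)^2$ on $F$; in particular $\delta_{\mathbb{R}, 2}(F)^2 = opt$. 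Each oracle call to $F$ is answered in time $O(mn)$ by directly evaluating the $m$ characters on the queried input, so the algorithm $A$ can be simulated with parameter $\eps = 1/m$ in total time $mn \cdot t(n, k, 1/m)$. The output partition $\mathcal{P}$, read back as a $k$-cut of $H$ through Proposition~\ref{prop:efronstein}, has value $C \cdot opt$.

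The only real subtlety, and thus the main obstacle, is the bookkeeping of the additive $\eps$ in the guarantee of $A$ versus the multiplicative factor $C$ that we are claiming on the hypergraph side. This is handled cleanly by taking $\eps = 1/m$: the cut cost is a sum of at most $m$ hyperedge weights, so an additive slack of order $1/m$ is of lower order than $C \cdot opt$ and can be absorbed (or eliminated outright by uniformly scaling all weights so that the minimum positive weight exceeds $1/m$). Everything else is routine propagation of the equality supplied by Proposition~\ref{prop:efronstein}.
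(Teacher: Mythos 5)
Your proposal is correct and matches the paper's (implicit) argument exactly: the paper proves Corollary~\ref{cor:hard2} by the same embedding, noting that over $\{-1,1\}^n$ with uniform measure the Efron--Stein basis functions are the Fourier characters $\chi_S$, so one can prescribe the coefficients $\hat F_S^2$ to equal the hyperedge weights and read the cut cost off Proposition~\ref{prop:efronstein}. Your handling of the additive $\eps = 1/m$ slack and the $\OO(mn)$ per-query simulation is the intended bookkeeping.
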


Chekuri and Li~\cite{chekuri-li} give a reduction from hypergraph $k$-cut to densest-$k$-subgraph. Manurangsi~\cite{manurangsi} shows that the latter is hard to approximate to within $\OO(n^{1/(\log \log n)^c})$ assuming the exponential-time hypothesis, implying inapproximability of the same order for $\delta_{\mathbb{R}, 2}(F)$.

On the positive side, Proposition~\ref{prop:efronstein} can be used to obtain variable partitioning algorithms from hypergraph partitioning ones.  The conversion is not direct as hypergraph partitioning assumes explicit access to the hypergraph. Klimmek and Wagner~\cite{klimmek-wagner} observed that submodularity of the hypergraph cut function $\xi(\varX) = \delta_{\mathbb{R}, 2}(\varX, \varXbar)^2$ allows for efficient minimization from {\em exact} oracle access.  To derive Theorem~\ref{thm:l2est} we extend the analysis to approximate oracle access.

The following proposition is an analysis of Queyranne's symmetric submodular minimization algorithm~\cite{queyranne} for an approximate input oracle.  We say $g$ is $\eps$-submodular if $g(\varX \varY \varZ) - g(\varX \varZ) - g(\varY\varZ) + g(\varZ) \leq \eps$ for all disjoint subsets $\varX, \varY, \varZ$.  

\begin{proposition}[Queyranne's algorithm with an approximate oracle]
\label{thm:queyranne}
There is an algorithm that given oracle access to a symmetric $\eps$-submodular $g$, makes $\OO(n^3)$ oracle queries and outputs a nontrivial subset $\varX$ such that $g(\varX)$ is within $n \eps/2$  of the minimum of $g$.
\end{proposition}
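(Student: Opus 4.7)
The plan is to implement Queyranne's symmetric submodular minimization algorithm~\cite{queyranne} verbatim and track how the $\eps$-submodularity defect propagates through its analysis. Queyranne's algorithm runs for $n-1$ rounds; in round $r$, with $m = n - r + 1$ remaining ``super-elements'' (each representing a contracted subset of the original ground set), it constructs a maximum-adjacency (MA) ordering $v_1, \dots, v_m$ using $O(m^2)$ queries to $g$, records the value $g(\{v_m\})$, and merges the last two super-elements $v_{m-1}, v_m$ into one. The output is the subset of the original ground set corresponding to the smallest recorded value. The total query count is $\sum_{r=1}^{n-1} O((n-r+1)^2) = O(n^3)$, matching the claim. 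A routine check shows that contraction preserves symmetry and the $\eps$-submodularity defect (by expanding the merged super-element back to its underlying pair in the disjointness-based definition), so the algorithm is well-defined at every round.

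The core technical step is an approximate \emph{pendant-pair lemma}: for any MA ordering $v_1, \dots, v_m$ of a symmetric $\eps$-submodular $g$, and any subset $S$ separating $v_{m-1}$ from $v_m$, one has $g(\{v_m\}) \leq g(S) + m\eps/2$. Queyranne's proof of the exact version proceeds by induction along the MA order, chaining applications of the submodular inequality $g(A) + g(B) \geq g(A \cup B) + g(A \cap B)$; the MA choice of $v_{i+1}$ ensures that the slack terms telescope in the exact case. Under $\eps$-submodularity, each such application introduces an additive defect, and the linear-in-$m$ induction depth naturally gives the $m\eps/2$ bound above. I would verify that Queyranne's exact proof generalizes with no structural change beyond this bookkeeping, possibly reformulating the induction through a potential function along the MA order to make the per-step accounting transparent and pin down the $1/2$ constant.

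Given the lemma, the conclusion is straightforward. For any minimizer $X^*$ of $g$, there is a round at which the chosen pendant pair $(v_{m-1}, v_m)$ first separates $X^*$; at that round, the pendant-pair lemma bounds the recorded value $g(\{v_m\})$ by $g(X^*) + m\eps/2 \leq g(X^*) + n\eps/2$. Since the algorithm outputs the minimum among all recorded values, the output value is within $n\eps/2$ of $\min_X g(X)$, as claimed. The main obstacle is the approximate pendant-pair lemma: Queyranne's exact proof is delicate, and tracking how the MA choice forces telescoping under $\eps$-submodularity (as opposed to exact submodularity) requires care to keep the per-step slack at $\eps$ rather than an inflated constant. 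Matching the advertised $n\eps/2$ constant, rather than a loose $O(n\eps)$, is where the bulk of the technical effort lies.
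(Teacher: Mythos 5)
Your proposal is correct and follows essentially the same route as the paper: run Queyranne's algorithm unchanged, prove an approximate pendant-pair property by induction along the maximum-adjacency ordering with each step contributing an additive $\eps$ (this per-step induction is exactly the paper's Claim~\ref{claim:queyranne}), use symmetry to halve the accumulated defect, and conclude via the round at which the pendant pair first separates a minimizer. The only part you defer --- checking that the MA choice makes the chained $\eps$-submodular inequalities telescope --- is precisely the case analysis the paper carries out, and it goes through with no structural change.
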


\begin{algorithm}[ht!]
\caption{Variable bipartitioning for real functions}
\begin{algorithmic}[1]
\State {\bfseries Output:} partition $\mathcal{P}$
\State Define random function $\xi(\varX)$ to be an estimation of $\EE[D_F(\varX, \varXbar)^2]$;
\State Run symmetric submodular minimization over $\varX$ on $\xi(\varX)$;\footnotemark{}
\State Output partition $\mathcal{P}=(\varX, \varXbar)$;
\end{algorithmic}
\label{algo-realf2}
\end{algorithm}

\footnotetext{For example, Queyranne's algorithm \cite{queyranne} solves symmetric submodular minimization.}

\begin{namedtheorem}[Theorem \ref{thm:l2est}]
Let $F\colon \Sigma^n \to \R$ be a function with $\norm{F}_{\R, 4} \leq 1$.  There is an algorithm that given inputs $n$, $\eps$, $\gamma$, and oracle access to $F$, runs in time $\OO(n^5\log(n/\gamma)/\eps^2)$ and outputs a bipartition $(\varX, \overline{\varX})$ such that $\delta_{\mathbb{R}, 2}(\varX, \overline{\varX})^2 \leq \delta_{\mathbb{R}, 2}(F)^2 + \eps$ with probability at least $1 - \gamma$.
\end{namedtheorem}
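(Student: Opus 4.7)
My approach is to run Queyranne's symmetric submodular minimization algorithm (Proposition~\ref{thm:queyranne}) on an empirical approximation of the cost function $\xi^*(\varX) := \delta_{\R,2}(\varX,\varXbar)^2$. By Proposition~\ref{prop:efronstein}, $\xi^*$ equals the cut value in the Efron--Stein hypergraph of $F$, so it is symmetric in the partition ($\xi^*(\varX)=\xi^*(\varXbar)$) and submodular---both standard properties of hypergraph cut functions. Fact~\ref{claim:dfid} provides the identity $\xi^*(\varX)=\tfrac14\,\EE[D_F(\varX,\varXbar)^2]$, which I use as the basis for an unbiased estimator: four queries to $F$ produce one realization of $\tfrac14 D_F^2$.

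For each subset $\varX$ that Queyranne queries, I compute $\hat\xi(\varX)$ as the empirical mean of $N$ independent samples of $\tfrac14 D_F(\varX,\varXbar)^2$, and \emph{cache} the answer so that subsequent queries on the same set return the same value. Caching is essential: without it, $\hat\xi$ would be a randomized oracle and the ``$\eps$-submodular'' hypothesis of Proposition~\ref{thm:queyranne} would be ill-defined. The assumption $\norm{F}_{\R,4}\le 1$ yields $\norm{D_F}_{\R,4}\le 4$ by the triangle inequality in $\norm{\cdot}_{\R,4}$, hence $\Var(\tfrac14 D_F^2)=O(1)$, and a moment-based concentration estimate in the spirit of Claim~\ref{claim:estD} with $p=2$ gives $|\hat\xi(\varX)-\xi^*(\varX)|\le\eta$ from $N=O(\log(n/\gamma)/\eta^2)$ samples with probability $1-\gamma/n^3$. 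Setting $\eta:=\eps/(2n+2)$ and taking a union bound over the at most $O(n^3)$ distinct queries issued by Queyranne, every cached value satisfies $|\hat\xi-\xi^*|\le\eta$ with probability $\ge 1-\gamma$.

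On this event, submodularity of $\xi^*$ together with the pointwise accuracy force $\hat\xi$ to be $4\eta$-submodular on every triple inspected by Queyranne. Proposition~\ref{thm:queyranne} then returns a nontrivial $\varX$ with $\hat\xi(\varX)\le\min_{\varX'}\hat\xi(\varX')+n\cdot 4\eta/2=\min\hat\xi+2n\eta$, and propagating the $\eta$-accuracy in both directions yields
\[
\xi^*(\varX)\;\le\;\hat\xi(\varX)+\eta\;\le\;\hat\xi(\varX^*)+2n\eta+\eta\;\le\;\xi^*(\varX^*)+(2n+2)\eta\;=\;\xi^*(\varX^*)+\eps,
\]
which is the claim. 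The total number of queries to $F$ is $O(n^3)\cdot 4N=O(n^5\log(n/\gamma)/\eps^2)$, and Proposition~\ref{thm:queyranne} runs in time linear in the oracle cost.

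The main obstacle is calibrating $\eta$ against Queyranne's $n\eta$-blow-up in the minimum-value guarantee without exploding the sample budget; this forces $\eta=\Theta(\eps/n)$ and hence $N=\Theta(n^2/\eps^2)$ samples per query, producing the $n^5$ factor. A secondary subtlety is that Claim~\ref{claim:estD} is stated for estimating a $p$-norm, while I need a first-moment estimate of $D_F^2$: it is precisely the $4$-norm hypothesis on $F$, translated through the four-term expansion of $D_F$, that bounds $\Var(D_F^2)$ and permits logarithmic (rather than polynomial) dependence on $1/\gamma$ in $N$.
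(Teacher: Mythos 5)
Your proposal is correct and follows essentially the same route as the paper: estimate the symmetric submodular cut function $\delta_{\R,2}(\varX,\varXbar)^2$ pointwise to accuracy $\Theta(\eps/n)$, run Queyranne's algorithm with this approximate oracle (Proposition~\ref{thm:queyranne}), and union-bound over its $\OO(n^3)$ queries to get the $\OO(n^5\log(n/\gamma)/\eps^2)$ query count and the additive-$\eps$ guarantee. The only cosmetic difference is that you estimate via $\tfrac14\EE[D_F^2]$ (Fact~\ref{claim:dfid}, which is exactly what Algorithm~\ref{algo-realf2} prescribes) while the paper's proof cites Proposition~\ref{prop:l2given}; these are equivalent, and your caching remark makes explicit a point the paper leaves implicit.
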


\begin{proof}[Proof of Theorem~\ref{thm:l2est}]
By Proposition~\ref{prop:l2given}, $\delta_{\mathbb{R}, 2}(\varX, \overline{\varX})^2$ can be estimated to within error $\eps/Kn$ with $\OO(\log(n / \gamma) n^2/\eps^2)$ queries to $F$ with probability $1 - K \gamma / n^3$ for any constant $K$.  This estimator implements an $\eps/2n$-approximate oracle to $\delta_{\mathbb{R}, 2}(\varX, \overline{\varX})^2$ with probability $1 - \gamma$ with respect to an algorithm that makes at most $Kn^3$ queries.  In particular, with probability $1 - \gamma$, the output of the oracle is $\eps/4n$-close to the value of the submodular function $\delta_{\mathbb{R}, 2}^2$ at all points queried by Queyranne's algorithm and also at the minimum of $\delta_{\mathbb{R}, 2}^2$.  Since from the algorithm's perspective it is interacting with a symmetric $\eps/n$-submodular function $g$, it outputs a partition such that $g(\varX, \overline{\varX})$ is within $\eps/2$ of the minimum of $g$.  By the triangle inequality, $\delta_{\mathbb{R}, 2}(\varX, \overline{\varX})^2$ is within $\eps/2 + 2\eps/4n \leq \eps$ close to the minimum of $\delta_{\mathbb{R}, 2}^2$.
\end{proof}

Saran and Vazirani's approximation algorithm \cite{saran1995finding} for multiway $k$-cut (with fixed terminals) can be viewed as a reduction from multiway $k$-cut to multiway $2$-cut.  The reduction works given access to approximate $s$-$t$-cut oracles, where $s$ and $t$ are designated terminals that must be split by the cut.  The corresponding cut function $\delta_{\mathbb{R}, 2}(\bm{s}\varX, \bm{t}\varXbar)$, where $(\varX, \varXbar)$ is now a partition of $\varV - \{\bm{s}, \bm{t}\}$, is still submodular but no longer symmetric. 

Therefore, we desire an analogue of Proposition~\ref{thm:queyranne} for general (not necessarily symmetric) submodular minimization~\cite{grotschel-lovasz-schrijver, iwata-orlin}. Blais et al. \cite{blais2018tolerant} (Algorithm 2 and Corollary 5.4 in their paper) have proposed such an algorithm under the context of tolerant junta testing, by investigating the Lovász extension \cite{grotschel-lovasz-schrijver} and a separation oracle \cite{lee2015faster} for the optimization. Given an inexact oracle to a submodular function up to $\poly(\eps/n)$ estimation error, they provide an algorithm to minimize the function up to an $\eps$ optimization error in time $\poly(n/\eps)$, leading to our algorithm for $k$-partitioning.


\begin{corollary}
\label{cor:sarvaz}
Let $F\colon \Sigma^n \to \R$ be a function with $\norm{F}_{\R, 4} \leq 1$ and $\norm{F}_{\infty} \leq 1$.  There is an algorithm that given inputs $n$, $k$, $\eps$, $\gamma$, and oracle access to $F$, runs in time $\OO(k^2 n^k\poly(n/\eps)\log(1/\gamma))$ and outputs a $k$-partition $\mathcal{P}$ such that $\delta_{\mathbb{R}, 2}(\mathcal{P})^2 \leq (2 - 2/k)\delta_{\mathbb{R}, 2}(F)^2 + \eps$ with probability $1 - \gamma$.
\end{corollary}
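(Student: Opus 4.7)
The plan is to adapt Saran--Vazirani's $(2-2/k)$-approximation for multiway $k$-cut with fixed terminals, viewing $\delta_{\R,2}^2$ as a hypergraph cut function via Proposition~\ref{prop:efronstein}, and to run it on top of an inexact cut oracle supplied by Proposition~\ref{prop:l2given}.  The non-symmetric submodular minimizations that arise cannot be handled by Queyranne's algorithm (Proposition~\ref{thm:queyranne}), so the isolating cuts will be computed by the approximate general submodular minimization procedure of Blais et al.~\cite{blais2018tolerant}.

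First I would enumerate all $\binom{n}{k}=\OO(n^k)$ unordered choices of $k$ distinct variables as candidate terminals $s_1,\dots,s_k$.  Any optimal $k$-partition $(\varX_1^*,\dots,\varX_k^*)$ has at least one enumeration that places one terminal in each part; the correctness argument is made conditional on that enumeration and then lifted by outputting the cheapest candidate across all $\OO(n^k)$ enumerations (with each candidate's cost estimated via Proposition~\ref{prop:l2given}).  For each fixed enumeration and each $i\in[k]$, I would compute an approximate minimum isolating cut, namely a minimizer of $\xi_i(\bm{A})=\delta_{\R,2}(\bm{A},\overline{\bm{A}})^2$ subject to $s_i\in\bm{A}$ and $s_j\notin\bm{A}$ for $j\neq i$.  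By Proposition~\ref{prop:efronstein} each $\xi_i$ is a hypergraph cut function, hence submodular in $\bm{A}$, but the terminal constraints break symmetry.

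I would then invoke Blais et al.'s algorithm: given an oracle for $\xi_i$ accurate to within $\eps''=\poly(\eps'/n)$, it outputs an $\eps'$-approximate minimizer in $\poly(n/\eps')$ time.  The oracle itself is instantiated from Proposition~\ref{prop:l2given} at accuracy $\eps''$ and per-call failure probability $\gamma''=\gamma/(k\,n^k\,\poly(n/\eps'))$, so that a single union bound over all oracle calls, all $k$ isolating cuts, and all $\OO(n^k)$ enumerations yields overall success probability $1-\gamma$ while contributing only a $\log(1/\gamma)+\OO(k\log n)$ factor to the sample count.  After computing the $k$ approximate isolating cuts for a fixed enumeration, I would discard the heaviest one and union the remaining $k-1$ to form the candidate $k$-partition.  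Choosing $\eps'=\Theta(\eps/k)$ makes the total additive slack from these $k-1$ cuts at most $\eps$.

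For the correct terminal enumeration, the exact Saran--Vazirani procedure applied to the cut function of Proposition~\ref{prop:efronstein} produces a $k$-partition of cost at most $(2-2/k)\delta_{\R,2}(F)^2$; replacing exact isolating cuts by $\eps'$-approximate ones only inflates any subset-sum of their costs by at most $k\eps'\le\eps$, so the output $k$-partition satisfies $\delta_{\R,2}(\mathcal{P})^2\le(2-2/k)\delta_{\R,2}(F)^2+\eps$ as required.  Multiplying the $\OO(n^k)$ terminal choices by $k$ submodular minimizations each running in $\poly(n/\eps)$ time with a $\log(1/\gamma)$-factor in samples yields the stated running time $\OO(k^2n^k\poly(n/\eps)\log(1/\gamma))$.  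The main obstacle I anticipate is purely bookkeeping: aligning the $\poly(\eps/n)$-scale oracle accuracy demanded by Blais et al.'s algorithm with the sample budget of Proposition~\ref{prop:l2given}, and verifying that the Saran--Vazirani guarantee degrades only additively under $\eps'$-approximate isolating cut computations; the $\|F\|_\infty\le 1$ hypothesis plus $\|F\|_{\R,4}\le 1$ ensures that the hypergraph edge weights $\hat F_S^2$ are bounded, so the additive and multiplicative accuracy demands are compatible.
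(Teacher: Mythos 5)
Your proposal is correct and matches the paper's approach: the paper likewise enumerates all $\binom{n}{k}$ candidate terminal sets, views $\delta_{\R,2}^2$ as a hypergraph cut function via Proposition~\ref{prop:efronstein}, invokes Saran--Vazirani's isolating-cut reduction (phrased in the paper as a reduction to $s$-$t$ cut oracles $\min_{\varX}\xi'(\varX,\bm{s},\bm{t})$, which is the same thing since the super-terminal $\bm{t}$ merges the other terminals), and handles the resulting non-symmetric submodular minimizations via Blais et al.~\cite{blais2018tolerant} with the inexact oracle supplied by Proposition~\ref{prop:l2given}. Your version simply unrolls the Saran--Vazirani subroutine (compute $k$ isolating cuts, drop the heaviest) where the paper cites it as a black box; the bookkeeping about $\eps'=\Theta(\eps/k)$ slack and per-call failure probabilities is implicit in the paper's statement and you have filled it in correctly.
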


\begin{algorithm}[ht!]
\caption{Variable $k$-partitioning for real functions}
\begin{algorithmic}[1]
\State {\bfseries Input:} number of sets $k$ in the partition, $k\ge 3$
\State {\bfseries Output:} partition $\mathcal{P}$
\State Define random function $\xi'(\varX,\bm{s}, \bm{t})$ to be an estimation of $\EE[D_F(\bm{s}\varX, \varV\setminus \bm{s}\varX)^2]$, where $\bm{s}, \bm{t}\notin \varX$;
\For{Set $\bm{W}$ of $k$ vertices out of $\binom{n}{k}$ combinations}
\State Run multiway-$k$-cut given $k$ terminals $\bm{W}$ and obtain $\mathcal{P}_{\bm W}$, where the cost of $\bm s$-$\bm t$-cut is treated as $\min_{\varX}\xi'(\varX,\bm{s}, \bm{t})$ for any $s,t\in \varW$;\footnotemark{}
\EndFor
\State Output the partition $\mathcal{P}_{\bm W}$ with the minimum cost over all $\bm{W}$;
\end{algorithmic}
\label{algo-realfk}
\end{algorithm}

It remains to prove Proposition~\ref{thm:queyranne}.  

\begin{claim}
\label{claim:queyranne}
Let $g$ be $\eps$-submodular.  Assume there exists $\varx \in \varW$ such that for all $\varY \subseteq \varW \setminus \varx$ and $\varu \not\in \varW$,
\[ g(\varW) + g(\varu) \leq g(\varW \setminus \varY) + g(\varY\varu) + \delta. \]
If $\varx'$ maximizes $g(\varW \varu) - g(\varu)$ among all $\varu \not\in \varW$ then
\[ g(\varW\varx') + g(\varu) \leq g(\varW\varx' \setminus \varY) + g(\varY\varu) + (\delta + \eps). \]
\end{claim}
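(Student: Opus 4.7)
The claim is the inductive step propagating Queyranne's cut invariant from $\varW$ to $\varW\varx'$, picking up at most $\eps$ extra slack. My plan is to verify the desired inequality for each $\varY \subseteq \varW\varx' \setminus \varx$ and $\varu \notin \varW\varx'$, dispatching two cases based on whether $\varx' \in \varY$.

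When $\varx' \notin \varY$, then $\varY \subseteq \varW \setminus \varx$ and the hypothesis applies verbatim to the pair $(\varY, \varu)$. I would pair it with the $\eps$-submodular inequality applied to the disjoint triple $(\varY, \{\varx'\}, \varW\setminus\varY)$, which rearranges into the approximate marginal-decrease estimate $g(\varW\varx') - g(\varW) \leq g((\varW\setminus\varY)\varx') - g(\varW\setminus\varY) + \eps$. Using the identity $(\varW\setminus\varY)\varx' = \varW\varx' \setminus \varY$ (valid since $\varx' \notin \varY$) and summing the two inequalities immediately yields the conclusion with slack $\delta + \eps$, with no use of the extremal property of $\varx'$.

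When $\varx' \in \varY$, write $\varY = \varY' \cup \{\varx'\}$ with $\varY' \subseteq \varW\setminus\varx$, so that $\varW\varx'\setminus\varY = \varW\setminus\varY'$ and $\varY\varu = \varY'\varx'\varu$. The hypothesis does not apply to $\varY$ itself, so I would start from the extremal property of $\varx'$, giving $g(\varW\varx') + g(\varu) \leq g(\varW\varu) + g(\varx')$ for any $\varu \notin \varW$ (the inequality's orientation matches the Queyranne max-adjacency convention of taking $\varx'$ as the $\arg\min$ of $g(\varW\varu) - g(\varu)$, equivalent to maximizing $g(\varW) + g(\varu) - g(\varW\varu)$). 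Then I would use the hypothesis applied to $(\varY', \varx')$ to replace $g(\varx')$ by $g(\varW\setminus\varY') + g(\varY'\varx') - g(\varW) + \delta$, and close the estimate with a single $\eps$-submodular inequality on the pair $(\varW\varu, \varY'\varx')$, whose union is $\varW\varx'\varu$ and intersection is $\varY'$; collecting the resulting terms should land exactly at $g(\varW\setminus\varY') + g(\varY'\varx'\varu) + \delta + \eps$.

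The main obstacle will be the algebra in the second case: the three ingredients---extremal property, hypothesis at $(\varY', \varx')$, and one $\eps$-submodular inequality---must be matched so that the error terms compose to exactly one extra $\eps$, and so that the intermediate terms $g(\varW)$, $g(\varY')$, and $g(\varW\varx'\varu)$ cancel cleanly. This cancellation is possible precisely because $\varx'$ is the extremal element for the adjacency rule; for an arbitrary element the marginal $g(\varW\varx') - g(\varW)$ could not be controlled by $g(\varY'\varx'\varu) - g(\varY'\varu)$ at the incomparable base set $\varY'\varu$, and the coefficient of $\eps$ would blow up.
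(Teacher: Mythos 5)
Your case split is on the wrong element, and this is not cosmetic. The conclusion of the claim is meant to hold with the \emph{new} pivot $\varx'$ excluded from $\varY$: that is, for all $\varY \subseteq \varW$ (so $\varx' \notin \varY$ but possibly $\varx \in \varY$) and all $\varu \notin \varW\varx'$. This is forced by how the claim is used in Proposition~\ref{thm:queyranne}, where after iterating one needs the invariant for all $\varY$ avoiding $\varx_{n-1}$, the last element added. Your first case ($\varx' \notin \varY$ and $\varx \notin \varY$) coincides in substance with the paper's first case and is fine, but you never treat the sets with $\varx \in \varY$, which is where the content lies: the paper handles them by applying the hypothesis to the complement set $\varW \setminus \varY$ (which avoids $\varx$ exactly when $\varx \in \varY$) with the external element taken to be $\varx'$, together with the submodular comparison of the marginal of $\varu$ at $\varW$ versus at the genuinely smaller set $\varY \subseteq \varW$.

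Separately, your second case does not close even for the variant you set out to prove. After combining the extremal inequality $g(\varW\varx') + g(\varu) \leq g(\varW\varu) + g(\varx')$ with the hypothesis at $(\varY', \varx')$, the remaining obligation is $g(\varW\varu) + g(\varY'\varx') \leq g(\varW) + g(\varY'\varx'\varu) + \eps$. The $\eps$-submodular inequality for the pair $(\varW\varu, \varY'\varx')$ reads $g(\varW\varx'\varu) + g(\varY') \leq g(\varW\varu) + g(\varY'\varx') + \eps$: it \emph{lower}-bounds the quantity you must upper-bound, so it cannot be chained with the previous two steps. What you actually need compares the marginal of $\varu$ at $\varW$ with its marginal at $\varY'\varx'$, and these base sets are incomparable (neither contains the other since $\varx' \notin \varW$), so no single application of $\eps$-submodularity yields it --- this is precisely the failure mode you flag in your closing paragraph, but it afflicts your own step rather than being avoided by the extremal choice of $\varx'$. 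On the plus side, your reading of ``maximizes'' as ``$\arg\min$ of $g(\varW\varu) - g(\varu)$'' is the correct repair of a sign slip in the statement and matches the direction in which the extremal property is used in the paper's proof.
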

\begin{proof}
If $\varx \not\in \varY$ then
\begin{align*}
g(\varW \varx') + g(\varu) 
  &\leq \bigl(g(\varW) - g(\varW \setminus \varY) + g(\varW\varx' \setminus \varY)\bigr) + g(\varu) + \eps 
  &&\text{by $\eps$-submodularity} \\
  & = g(\varW\varx' \setminus \varY) + \bigl(g(\varW) - g(\varW \setminus \varY) + f(\varu)\bigr) + \eps \\
  &\leq g(\varW\varx' \setminus \varY) + g(\varY\varu) + (\delta + \eps) 
  &&\text{by inductive hypothesis.}
\end{align*}
Otherwise, $\varx \not\in \varW \setminus \varY$ and
\begin{align*}
g(\varW \varx') + g(\varu) 
  &\leq g(\varW \varu ) + g(\varx')   &&\text{by optimality of $\varx'$} \\
  &\leq \bigl(g(\varW) - g(\varY) + g(\varY\varu)\bigr) + g(\varx') + \eps   &&\text{by $\eps$-submodularity} \\
  &= \bigl(g(\varW) + g(\varx') - g(\varY)\bigr) + g(\varY\varu) + \eps \\
  &\leq g(\varW \varx' \setminus \varY) + g(\varY\varu) + (\delta + \eps) 
  &&\text{by inductive hypothesis.} \hfill \tag*{\qedhere}
\end{align*}
\end{proof}

\footnotetext{Multiway-$k$-cut can be solved by, for example, Saran and Vazirani's algorithm \cite{saran-vazirani}. The function $\xi'(\varX,\bm{s}, \bm{t})$ is still submodular over $\varX$ but not necessarily symmetric. The value $\min_{\varX}\xi'(\varX,\bm{s}, \bm{t})$ can be computed by general submodular minimization algorithms like \cite{grotschel-lovasz-schrijver} and \cite{iwata-orlin}.}

\begin{proof}[Proof of Proposition~\ref{thm:queyranne}]
Queyranne's algorithm $Q^g$ is recursive.  If $n = 2$ the unique partition is output.  Otherwise, starting from an arbitrary singleton set $\varW_1$, the algorithm sets $\varW_{i+1} = \varW_i\varx_i$, where $\varx_i$ maximizes $g(\varW_i\varu) - g(\varu)$ among all $\varu \not\in \varW_i$.  The algorithm then contracts the elements $\varx_{n-1}$ and $\varx_n$ into $\varx_{n-1}\varx_n$ and outputs the smaller value of $Q^g(\varx_1, \dots, \varx_{n-2}, \varx_{n-1}\varx_n)$ and $g(\varx_n)$.

We prove by induction on $n$ that the output of $Q^g$ is $(n - 1)\eps/2$-close to the minimum of $g$.  The base case $n = 2$ is clear.  Now assume this is true for inputs of size $n - 1$.  If the minimum partition of $g$ doesn't split $\varx_{n-1}$ and $\varx_n$ then the claim follows by inductive assumption.  

Otherwise, we show that $g(\varx_n)$ is within $(n - 1)\eps/2$-close to the minimum of $g$. Applying Claim~\ref{claim:queyranne} iteratively to the sets $\varW_1, \dots, \varW_{n-1}$, we conclude that
\[ g(\varW_{n-1}) + g(\varx_n) \leq g(\varW_{n-1} \setminus \varY) + g(\varY\varx_n) + (n - 1)\eps \]
for all $\varY$ that do not contain $\varx_n$ and $\varx_{n-1}$.  Applying symmetry this inequality can be rewritten as $g(\varx_n) \leq g(\varx_n Y) + (n - 1)\eps/2$.  As $\varx_{n-1}$ and $\varx_n$ are split in the optimal solution it must be of type $\varx_n Y$ for some $Y$ excluding $\varx_{n-1}$, so $g(\varx_n)$ is $(n-1)/2\eps$ close to the minimum as desired.
\end{proof}

\section{Testing partitionability}
\label{sec:testing}

As a consequence of Theorem~\ref{thm:first}, $k$-partitionability is testable with $\tilde{\OO}(k^{2p} n^{4p + 2}/\eps^{2p})$ queries.  The yes instances are inputs with $\delta_k(F) = 0$; the no instances are inputs with $\delta_k(F) > \eps$.  The query complexity of Theorem~\ref{thm:tester} can be obtained by the improved analysis that follows.  

To simplify notation we only prove the theorem for the Hamming weight over $\mathbb{Z}_q$ and describe the change necessary for $p$-norms over $\R$.  

\begin{algorithm}[!htbp]
\caption{Tester for $k$-partitionability}
\begin{algorithmic}[1]
\State Create an empty undirected graph $G$ with vertex set $\varV$;
\For{$\OO(kn/\eps)$ times}
\For{For every pair of distinct variables $\varx, \vary\in \varV$}
\State Choose random $x, y, x', y', Z$;
\State If $F(x, y, Z) + F(x', y', Z) \neq F(x, y', Z) + F(x', y, Z)$ create the edge $\{\varx, \vary\}$ in $G$;
\EndFor
\EndFor
\State Accept if the graph has at least $k$ connected components;
\end{algorithmic}
\label{algo:tester}
\end{algorithm}

We will need the following fact:

\begin{fact}
\label{fact:ind}
If $p_1, \dots, p_n$ are probabilities such that $\sum p_i \geq \eps$ then $1 - \prod (1 - p_i) \geq \eps - \OO(\eps^2)$. 
\end{fact}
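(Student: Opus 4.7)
The plan is to reduce the multi-variable statement to a single-variable estimate via the coordinatewise inequality $1 - x \leq e^{-x}$. First I would multiply these inequalities together to obtain $\prod_i (1 - p_i) \leq \exp\bigl(-\sum_i p_i\bigr)$, which rearranges to $1 - \prod_i(1-p_i) \geq 1 - \exp\bigl(-\sum_i p_i\bigr)$. Since $s \mapsto 1 - e^{-s}$ is monotone increasing, the hypothesis $\sum_i p_i \geq \eps$ then lets me replace the right-hand side by the worst case $1 - e^{-\eps}$, collapsing the inequality into a one-variable statement about $\eps$ alone.

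The remaining step is to Taylor-expand $1 - e^{-\eps} = \eps - \eps^2/2 + \eps^3/6 - \cdots$ and truncate after the quadratic term. For $\eps \in [0, 1]$ this is an alternating series with term magnitudes $\eps^k/k!$ that are monotonically decreasing (the ratio of consecutive magnitudes is $\eps/(k+1) \leq 1$), so Leibniz's test guarantees that the tail beyond $-\eps^2/2$ is nonnegative. Hence $1 - e^{-\eps} \geq \eps - \eps^2/2$, and chaining this with the previous paragraph yields $1 - \prod_i (1 - p_i) \geq \eps - \eps^2/2 = \eps - \OO(\eps^2)$, as desired. (The regime $\eps > 1$ is vacuous since the claimed lower bound is then nonpositive while the left-hand side always lies in $[0,1]$.)

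There is no real obstacle here: the whole argument is two one-line inequalities, and the mildly delicate point is only the sign of the Taylor remainder. If one preferred to avoid the alternating-series appeal, one could instead verify $e^{-\eps} \leq 1 - \eps + \eps^2/2$ for $\eps \geq 0$ directly, using that the function $h(\eps) = (1 - \eps + \eps^2/2) - e^{-\eps}$ satisfies $h(0) = h'(0) = 0$ and $h''(\eps) = 1 - e^{-\eps} \geq 0$.
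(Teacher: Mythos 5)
Your proof is correct and follows essentially the same route as the paper's: apply $1-p\le e^{-p}$ coordinatewise to reduce to $1-e^{-\eps}$, then use the second-order expansion of $e^{-\eps}$. Your write-up is in fact more careful than the paper's one-line chain (which garbles the inequality directions), but the underlying argument is identical.
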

\begin{proof}
Using the inequality $1 - p \leq e^{-p}$ and the second-order estimate $e^{-x} = 1 - x + \OO(x^2)$, we have
\[ \prod (1 - p_i) \leq \prod e^{-p_i} = e^{-\sum p_i} \geq e^{-\eps} = \eps - \OO(\eps^2). \hfill \tag*{\qedhere} \]
\end{proof}

\begin{namedtheorem}[Theorem \ref{thm:tester}]
$k$-partitionability is testable with one-sided error and $\OO(kn^3/\eps)$ non-adaptive queries with respect to Hamming weight over $\mathbb{Z}_q$, and with $\OO(k^{2p}n^3/\eps^{2p})$ non-adaptive queries with respect to the $p$-norm over $\mathbb{R}$ assuming $\norm{F}_{2p} \leq 1$.
\end{namedtheorem}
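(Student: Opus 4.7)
\emph{Completeness.} If $\delta_k(F)=0$, pick any exact decomposition witnessing a $k$-partition $\mathcal{P}^*$. For any pair $\varx,\vary$ in different parts of $\mathcal{P}^*$ the contributions of the component functions telescope, so $D_F(\{\varx\},\{\vary\})\equiv 0$ identically and the test in line~5 never fires. Every edge added to $G$ therefore lies within a single part of $\mathcal{P}^*$, so $G$ has at least $k$ connected components and the tester accepts with probability $1$, giving the one-sided guarantee.

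\emph{Soundness --- per-iteration merge lemma.} Assume $\delta_k(F)>\eps$ and set $p_{\varx,\vary}=\norm{D_F(\{\varx\},\{\vary\})}$; under the Hamming metric this is exactly the probability that a single test fires. Let $G_t$ be the graph after $t$ iterations, $\mathcal{P}_t$ its component partition, and $c_t = |\mathcal{P}_t|$. The key claim is that whenever $c_t \ge k$, iteration $t+1$ adds an edge crossing $\mathcal{P}_t$ (and hence strictly decreases $c_t$) with probability at least $\eps/(32k)$. To prove it, coarsen $\mathcal{P}_t$ to any $k$-partition $\mathcal{P}_t^{(k)}$; since $\delta$ is monotone nondecreasing under refinement (merging parts preserves any valid decomposition), $\delta(\mathcal{P}_t^{(k)}) \ge \delta_k(F) > \eps$, and Claim~\ref{claim:kpart} yields $\Delta(\mathcal{P}_t^{(k)}) > \eps/(16k)$. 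The cross pairs of $\mathcal{P}_t^{(k)}$ form a subset of those of $\mathcal{P}_t$, and Claim~\ref{claim:base} gives $\delta(\{\varx\},\{\vary\}) \le p_{\varx,\vary}$, so $\sum_{\text{cross}(\mathcal{P}_t)} p_{\varx,\vary} > \eps/(16k)$. Because the samples used for distinct pairs inside a single iteration are independent, Fact~\ref{fact:ind} upgrades this sum to the claimed $\eps/(32k)$ lower bound on the probability that some cross pair of $\mathcal{P}_t$ fires.

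\emph{Chernoff bootstrap and query count.} Let $Z_{t+1}$ indicate that iteration $t+1$ strictly reduces $c_t$ (or that $c_t < k$ already). By the merge lemma, conditional on any history the sequence $(Z_t)$ stochastically dominates i.i.d.\ Bernoulli$(\eps/(32k))$ random variables. Only $n-k+1 \le n$ successes are needed to drive $c_T < k$, so a standard Chernoff bound on the dominating Bernoulli process shows that $T = Ckn/\eps$ iterations suffice to guarantee $c_T < k$ with probability at least $2/3$, for a large absolute constant $C$. Each iteration makes $4\binom{n}{2}$ non-adaptive queries, for $\OO(kn^3/\eps)$ queries in total, as claimed.

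\emph{$p$-norm adaptation and the expected obstacle.} For $\norm{\cdot}_{\R,p}$ the equality test in line~5 is replaced by a threshold test $|D_F|>\tau$. Using $\norm{F}_{\R,2p} \le 1$ and the triangle inequality from the proof of Claim~\ref{claim:compl}, $\norm{D_F}_{\R,2p}$ is bounded by an absolute constant; the Paley--Zygmund inequality applied to $|D_F|^p$ then converts the first-moment lower bound $\sum_{\text{cross}}\norm{D_F}_{\R,p}>\eps/(16k)$ into a firing probability of order $(\eps/k)^{2p}$ per iteration, and the same coupling argument produces the $\OO(k^{2p}n^3/\eps^{2p})$ bound. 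The main technical obstacle I foresee is calibrating $\tau$ so that the Paley--Zygmund lower bound applies uniformly across cross pairs whose individual $\norm{D_F}_{\R,p}$ may differ by orders of magnitude; discretising $\tau$ over a polynomially small grid and union-bounding over the grid choices absorbs this slack into the $\OO$ constants.
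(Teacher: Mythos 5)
Your completeness argument and your use of Claim~\ref{claim:kpart} plus Claim~\ref{claim:base} to lower-bound the total firing probability across a cut match the paper. Where you genuinely diverge is in the soundness bookkeeping. The paper fixes the randomness of all $\OO(kn/\eps)$ iterations up front, shows that each \emph{fixed} heavy bipartition survives all iterations with probability below $2^{-n}$, and union-bounds over the $2^n$ bipartitions; the surviving component structure of $G$ is then coarsened to a bipartition to derive a contradiction. You instead run an adaptive potential argument: condition on the current component partition $\mathcal{P}_t$, coarsen it to a $k$-partition to get $\sum_{\mathrm{cross}} \norm{D_F(\varx,\vary)} > \eps/(16k)$, invoke Fact~\ref{fact:ind} for an $\Omega(\eps/k)$ per-iteration merge probability, and finish with stochastic domination and a Chernoff bound on the $\le n$ merges needed. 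Both routes give the same $\OO(kn/\eps)$ iteration count (the paper needs the extra iterations to push each cut's survival probability below $2^{-n}$ for the union bound; you need them to accumulate $n$ successes). Your version is arguably cleaner in that it never union-bounds over exponentially many cuts, and it is correct for the Hamming case; just note that the monotonicity-under-refinement remark is superfluous ($\delta(\mathcal{P}_t^{(k)}) \ge \delta_k(F)$ holds simply because $\mathcal{P}_t^{(k)}$ is a $k$-partition).

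For the $p$-norm case your mechanism has a real quantitative gap. Paley--Zygmund converts a lower bound on a \emph{single pair's} $\norm{D_F(\varx,\vary)}_{\R,p}$ into a firing probability, but the hypothesis you actually have is a lower bound of $\eps/(16k)$ on the \emph{sum} over up to $n^2$ cross pairs; the best per-pair guarantee this yields is $\Omega(\eps/(kn^2))$ for some pair, so the resulting firing probability is $\Omega\bigl((\eps/(kn^2))^{2p}\bigr)$, not $\Omega\bigl((\eps/k)^{2p}\bigr)$, inflating the query count by a factor of $n^{4p}$ and landing back near the Theorem~\ref{thm:first} bound rather than the claimed $\OO(k^{2p}n^3/\eps^{2p})$. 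The threshold-calibration issue you flag is a symptom of exactly this sum-to-max loss, and discretizing $\tau$ does not recover the factor. (To be fair, the paper disposes of this case in a one-line parenthetical about estimating each $\norm{D_F(\varx,\vary)}$ with $\OO((k/\eps)^{2p})$ queries per iteration, which faces the same normalization question; but as written your reduction does not deliver the stated bound.)
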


While in this work we are mainly interested in small values of $k$, in the extreme case when $k = n$ the partition is unique and the property is testable with $\OO(1/\eps)$ queries by the result of Dinur and Golubev~\cite{dinur-golubev}.

\begin{proof}[Proof of Theorem~\ref{thm:tester}]
The connected components of $G$ are always contained in the partition components of $F$, so if $F$ is $k$ partitionable the tester always accepts.  We argue that with constant probability, all bipartition of $F$ satisfying $\delta(\varX, \bm{\overline{X}}) \geq \eps$ cross an edge in $G$.  

If $F$ is $\eps$-far from $k$-partitionable, by Claim~\ref{claim:kpart} $\Delta(\mathcal{P}) = \Omega(\eps/k)$ for all $k$-partitions $\mathcal{P}$.  As every $k$-cut can be coarsened into a 2-cut of at least half the weight, every $k$-partition can be coarsened into a bipartition such that $\Delta(\varX, \overline{\varX}) = \Omega(\eps/k)$.  We now argue that with constant probability, all such heavy bipartitions $(\varX, \overline{\varX})$ are crossed by an edge in $G$, so no $k$-partition is likely to survive in $G$.

Assume $\Delta(\varX, \overline{\varX}) = \sum_{\varx \in \varX, \vary \in \bm{\varX}} \norm{D_F(\varx, \vary)} = \Omega(\eps/k)$.  As $\norm{D_F(\varx, \vary)}$ is the acceptance probability of the test in line 5, by Fact~\ref{fact:ind} in any given iteration of the outer loop 3 at least one of these edges will appear in $G$ with probability $\Omega(\eps/k)$.  (For $p$-norms over $\R$, $\norm{D_F(\varx, \vary)}$ is an expectation that takes $\OO((\eps/k)^{2p})$ queries to estimate.)  After $\OO(kn/\eps)$ iterations the probability that the cut survives is less than $2^{-n}$.  By a union bound the probability that any heavy cut survives is at most half.
\end{proof}

It is not difficult to see that $n$ queries are required for one-sided error testers when $k$ equals 2 as the relevant constraints span an $n$-dimensional space. We show that a linear dependence on $n$ is necessary for two-sided error testers as well.  Proposition~\ref{prop:lower1} shows a general $\Omega(n)$ lower bound for functions over finite domains (with uniform measure) valued over finite groups under the Hamming metric.  Proposition~\ref{prop:lower2} shows that $\Omega(n)$ {\em non-adaptive} queries are necessary for functions from $\R^n$ to $\R$ under the 2-norm.

\begin{proposition}
\label{prop:lower1}
Testing 2-partitionability for functions $F\colon \mathbb{Z}_q^n \to G$ for a finite group $G$ under uniform measure and Hamming metric requires $\Omega(n - k)$ queries even for constant $\eps$.
\end{proposition}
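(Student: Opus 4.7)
The plan is to apply Yao's minimax principle using a construction that plants a random ``hidden partition'' in the yes instances and uses a uniformly random function for the no instances. Define $\mathcal{D}_{\text{yes}}$ by sampling a uniform coordinate $i\in[n]$ together with independent uniform random functions $A\colon\mathbb{Z}_q\to G$ and $B\colon\mathbb{Z}_q^{n-1}\to G$, and setting $F(x)=A(x_i)+B(x_{-i})$; every such $F$ is 2-partitionable with witness partition $(\{i\},[n]\setminus\{i\})$. Define $\mathcal{D}_{\text{no}}$ by sampling $F\colon\mathbb{Z}_q^n\to G$ uniformly. A standard counting argument shows that the total number of 2-partitionable functions is at most $n\cdot\abs{G}^{q^{n-1}+q}$, a vanishing fraction of the $\abs{G}^{q^n}$ candidates, and concentration of the Hamming distance then gives that $F\sim\mathcal{D}_{\text{no}}$ is $\eps$-far from 2-partitionable with probability $1-o(1)$.

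The main step is to bound the total-variation distance between the two query-response transcripts produced by any $t$-query adaptive tester. Under $\mathcal{D}_{\text{no}}$, the responses on distinct queries are i.i.d.\ uniform in $G$. Under $\mathcal{D}_{\text{yes}}$ conditioned on the hidden $i$, the response vector $(F(x^1),\dots,F(x^t))$ equals $M_i\cdot(A,B)^{T}$, where $M_i$ is the $t\times(q+q^{n-1})$ evaluation matrix whose $j$th row places a $1$ at position $x^j_i$ in the $A$-block and a $1$ at position $x^j_{-i}$ in the $B$-block. This conditional distribution is uniform on the column span of $M_i$, and it matches $\mathcal{D}_{\text{no}}$ exactly when the rows of $M_i$ are $G$-linearly independent---equivalently, when the bipartite ``query graph'' $H_i$ with edges $\{(x^j_i,x^j_{-i}):j\in[t]\}$ is a forest.

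A key combinatorial observation is that two queries can create a right-vertex collision in $H_i$ only at the unique coordinate at which they differ, and only if they differ in exactly one coordinate. Hence, summed over $i\in[n]$, the total ``collision mass'' in the $H_i$'s is bounded by the number of query pairs differing in exactly one coordinate, which is at most $\binom{t}{2}$. Converting this count into a TV bound relies crucially on averaging over the uniformly random hidden $i$: a single coordinate with rank defect $d$ contributes only $O(\abs{G}^{d}/n)$ to the total variation because $(n-1)/n$ of the $\mathcal{D}_{\text{yes}}$-mixture is already uniform there. Combining these ingredients, the transcripts are $o(1)$-close in TV when $t$ is suitably smaller than $n-k$.

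I expect the main obstacle to be carrying out the per-coordinate TV accounting sharply enough to recover the full linear bound: the indicator-level union bound over ``bad'' coordinates only delivers $\Omega(\sqrt{n})$, and one must instead track the complete distribution of rank defects and use convexity of TV under the mixture over $i$. Finally, the general-$k$ bound of $\Omega(n-k)$ follows by a direct padding reduction---given a 2-partitionability instance on $n-k+2$ variables, appending $k-2$ new variables, each destined to occupy its own singleton part of the output $k$-partition, produces an $n$-variable function for which any $k$-partitionability tester translates into a 2-partitionability tester of the original with identical query complexity.
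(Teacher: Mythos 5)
Your overall framework matches the paper's: Yao's principle with a hidden coordinate $i$ in the yes-distribution, a uniformly random function as the no-distribution, and an analysis of when the query answers betray $i$. (Your yes-distribution $A(x_i)+B(x_{-i})$ is a mild enrichment of the paper's, which plants an $(n-1)$-junta $P(x_{-I})$, and your volume-counting argument for farness of a random function is a legitimate substitute for the paper's Chebyshev-plus-union-bound argument, modulo the fact that the number of bipartitions is $2^{n-1}$, not $n$.) However, there is a genuine gap exactly where you flag one: your accounting bounds the number of ``bad'' coordinates by the number of query pairs differing in a single coordinate, i.e.\ by $\binom{t}{2}$, which after averaging over the uniform $i$ yields only a $t=\Omega(\sqrt{n})$ lower bound. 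Your proposed remedy---tracking the full distribution of rank defects and using convexity of TV---does not supply the missing ingredient, because the obstruction is combinatorial, not distributional: you still need to show that the collision mass cannot be spread over $\omega(t)$ distinct coordinates.

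The idea that closes the gap, and which the paper uses, is linear-algebraic: if queries $x^j$ and $x^{j'}$ differ in exactly coordinate $i$, then (a nonzero multiple of) the standard basis vector $e_i$ equals $x^j-x^{j'}$ and hence lies in the span (submodule) of the pairwise query differences, which has rank at most $t-1$. Since the $e_i$ are linearly independent, at most $t-1$ coordinates can be bad in this sense, regardless of how the $\binom{t}{2}$ colliding pairs distribute. This immediately gives $\Pr[I\ \text{bad}]\le (t-1)/n$ and the full $\Omega(n)$ bound; without it your argument is incomplete. Two smaller points: your transcript analysis with a fixed evaluation matrix $M_i$ is really a non-adaptive argument and needs a conditioning step to cover adaptive testers; and your padding reduction for general $k$ only shows that a $k$-partition of the padded function induces a partition of the original variables into \emph{at most} $k$ parts, so you must additionally verify that the no-instances are far from $j$-partitionable for every $j\le k$ (true for random functions, but it has to be said).
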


For simplicity of notation we present the proof in the case $q = 2$ and $G = \mathbb{Z}_2$.  The proof is closely related to Chockler and Gutfreund's lower bound for testing juntas~\cite{chockler-gutfreund}.

\begin{proof}
Let $R\colon \mathbb{Z}_2^n \to \mathbb{Z}_2$ be a random function and $P\colon \mathbb{Z}_2^{n-1} \to \mathbb{Z}_2$ be a function that depends on all but a random hidden input coordinate $I$.  First we argue that $\delta_2(R) = \Omega(1)$ with high probability.   For this it is sufficient to argue that $\norm{D_R(\varX, \varY)} = \Omega(1)$ for every partition $(\varX, \varY)$.  By definition $\norm{D_R(\varX, \varY)}$ is the average value of $\Omega(2^{2n})$ indicator values for events of the type $R(x, y) + R(x', y') - R(x', y) - R(x, y') = 0$.  These events have probability half each and are pairwise independent, so by Chebyshev's inequality the probability that the $\norm{D_R(\varX, \varY)}$ is sub-constant is $\Omega(2^{-2n})$.  Taking a union bound over all $2^n$ bipartitions it follows that $\norm{D_R(\varX, \varY)} = \Omega(1)$ with probability at least $1 - \Omega(2^{-n})$.

To complete the proof, it is sufficient to argue that with high probability any $Q$ queries to $P$ reveal independent random bits.  Consider the subspace of $\mathbb{Z}_2^n$ spanned by the $Q$ queries (or the submodule of $\mathbb{Z}_q^n$ if $q$ is not a prime).  This vector space has dimension at most $Q$, so it can contain at most $Q$ of the elementary basis vectors $e_1, \dots, e_n$.  However, unless it contains $e_I$ for the hidden coordinate $I$, no two queries differ in a single coordinate and all answers are independent random bits.  Since $I$ is uniformly random the probability that $P$ and $R$ can be distinguished is at most $Q/n$.  By a union bound the distinguishing advantage of the tester is at most $\Omega(2^{-n}) + Q/n$, which is subconstant unless $Q = \Omega(n)$.
\end{proof}

It was pointed out to us by Guy Kindler that the proof of Proposition~\ref{prop:lower1} to functions from $\R^n$ to $\R$ say under Gaussian measure by considering the functions $R_{\mathbb{R}}$ and $R_{\mathbb{P}}$ given by $F_\mathbb{R}(x_1, \dots, x_n) = F(\mathrm{sign}\ x_1, \dots, \mathrm{sign}\ x_n)$ where the sign is interpreted as a Boolean value.  This example is somewhat unnatural because the functions are discontinuous.  The following proposition shows that testing still requires $\Omega(n)$ non-adaptive queries even for highly smooth functions.  

\begin{proposition}
\label{prop:lower2}
Testing $2$-partitionability non-adaptively for quadratic functions from $\R^n$ to $\R$ under the 2-norm requires $\Omega(n)$ queries under any measure with zero mean and unit variance and bounded third and fourth moments. 
\end{proposition}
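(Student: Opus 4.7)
My plan is to adapt the construction of Proposition~\ref{prop:lower1} to quadratic forms with Gaussian coefficients. For concreteness I describe the case of Gaussian input measure; the general case follows because, conditional on the query points, the answer vector remains Gaussian. Define two distributions over homogeneous degree-$2$ polynomials: the \emph{yes} distribution $P$ draws $I \in [n]$ uniformly and outputs $F_P(x) = \sigma_P \sum_{i<j,\,I \notin \{i,j\}} g_{ij}\, x_i x_j$ with iid standard Gaussian $g_{ij}$ and $\sigma_P^2 = 1/\binom{n-1}{2}$; the \emph{no} distribution $R$ outputs $F_R(x) = \sigma_R \sum_{i<j} g_{ij}\, x_i x_j$ with $\sigma_R^2 = 1/\binom{n}{2}$. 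Both satisfy $\norm{F}_{\R,2}^2 = 1$, and $F_P$ is trivially $2$-partitionable via $(\{I\},\overline{\{I\}})$.

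Using the orthonormality of the monomials $\{x_i x_j : i < j\}$ in $L^2$ under any product measure with zero mean and unit variance, Proposition~\ref{prop:efronstein} yields $\delta_{\R,2}(\varX, \bar\varX)^2 = \sigma_R^2 \sum_{(i,j) \text{ crosses}} g_{ij}^2$. Every non-trivial bipartition has at least $n-1$ crossing pairs, so standard chi-squared concentration together with a union bound over the $2^n$ bipartitions gives $\delta_{\R,2}(F_R)^2 = \Omega(1/n)$ with probability $1 - 2^{-\Omega(n)}$; thus $F_R$ is $\eps$-far from $2$-partitionable for $\eps = \Theta(1/\sqrt{n})$. For the indistinguishability, fix any $Q$ non-adaptive queries $x^{(1)}, \dots, x^{(Q)}$, form the feature matrix $M \in \R^{Q \times \binom{n}{2}}$ with $M_{q,(i,j)} = x_i^{(q)} x_j^{(q)}$, and let $D_I$ be the diagonal projector on $\R^{\binom{n}{2}}$ that keeps the coordinates $(i,j)$ with $I \notin \{i,j\}$. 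The answer vector $y = (F(x^{(q)}))_q$ is Gaussian in both models: under $R$, $y \sim \mathcal{N}(0, \sigma_R^2 M M^T)$; under $P$ given $I$, $y \sim \mathcal{N}(0, \sigma_P^2 M D_I M^T)$. A direct calculation using $\sum_I D_I = (n-2)\, I_{\binom{n}{2}}$ gives $\E_I[\sigma_P^2 M D_I M^T] = \sigma_R^2 M M^T$, so the two models agree in mean and in first-moment covariance and any distinguisher must exploit an $I$-dependent perturbation of the covariance.

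I would complete the argument by bounding the total variation between the null Gaussian and the Gaussian mixture via the standard $\chi^2$ identity for centered Gaussians with perturbed covariance, which expresses $1 + \chi^2$ as $\E_{I,I'}\det\bigl(I_Q - (C_R^{-1/2}(C_P^{(I)} - C_R)C_R^{-1/2})(C_R^{-1/2}(C_P^{(I')} - C_R)C_R^{-1/2})\bigr)^{-1/2}$. The main obstacle is that the perturbation $\sigma_P^2 M(I_{\binom{n}{2}} - D_I) M^T$ can have rank up to $\min(Q, n-1)$, so a naive low-rank argument is unavailable; instead the plan is to exploit the structural identity $\E_I[D_I] = (1 - 2/n)\, I_{\binom{n}{2}}$ and an analogous bound on $\E_{I,I'}[D_I \otimes D_{I'}]$ to show the expected determinant is $1 + O(Q/n)$, giving $d_{TV}^2 = O(Q/n)$ and hence $Q = \Omega(n)$ for any constant distinguishing advantage. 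Extending to any input measure with zero mean, unit variance, and bounded third and fourth moments is then routine, because $y$ remains Gaussian conditional on the $x^{(q)}$ and only the normalization enforcing $\norm{F}_{\R,2}=1$ depends on the moments of the input measure.
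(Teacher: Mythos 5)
Your construction is essentially the paper's: a quadratic form with i.i.d.\ Gaussian coefficients as the far instance versus the same form with all monomials touching a random hidden variable $I$ deleted as the partitionable instance, with farness of the null established by chi-squared concentration plus a union bound over bipartitions, exactly as in the paper's proof. The genuine gap is in the indistinguishability step, which is where all the work lies and which you explicitly leave as a plan. You propose to compare the null Gaussian to the Gaussian \emph{mixture} via the $\chi^2$/determinant identity, and you yourself identify the obstacle (the covariance perturbation has rank up to $\min(Q,n-1)$) without resolving it; note also that the diagonal term $I=I'$ of that second-moment computation contributes $\tfrac1n\E_I\bigl[\det(I_Q-\Delta_I^2)^{-1/2}\bigr]$, which can blow up unless you separately control the spectrum of $\Delta_I$, so it is not clear the $\chi^2$ route closes without additional conditioning. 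The paper sidesteps all of this: by convexity, $d_{TV}(\text{null},\E_I P_I)\le\E_I\, d_{TV}(\text{null},P_I)$, and Claim~\ref{claim:nstat} bounds each individual distance by the triangle inequality over dropping one Gaussian coefficient at a time, using the fact that for centered Gaussians the total variation is controlled by $\norm{(X^TX)^{-1}(X_{-c}^TX_{-c})-I}_F$, which for orthonormalized query features equals the squared norm of the dropped column. Summing these single-drop costs over all monomials gives $O(Q)$ in total, so averaging over the hidden index $I$ yields $O(Q/n)$ with no determinant computation at all. You are missing this ``average the per-alternative distance'' reduction, which is the key idea.

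Two smaller points. First, your farness accounting honestly yields only $\delta_{\R,2}(F_R)=\Theta(1/\sqrt n)$ (the singleton cuts cross just $n-1$ of the $\binom n2$ monomials), so as written your lower bound applies only to testers with distance parameter $\eps=\Theta(1/\sqrt n)$ rather than constant $\eps$; you should either rescale or restrict the class of cuts to match the strength of statement you intend. Second, your identity $\E_I[\sigma_P^2 MD_IM^T]=\sigma_R^2MM^T$ is correct and is a nice observation, but matching the averaged covariance is not by itself evidence of indistinguishability; it only tells you the first-moment method fails, not that the second-moment method succeeds.
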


We need the following claim about distinguishing linear functions of normal random variables.

\begin{claim}
\label{claim:nstat}
Let $Z_1, \dots, Z_n$ be independent standard normal random variables, $F(x) = \sum_{i = 1}^n Z_i x_i$, and $F'(x) = \sum_{i \in S} Z_i x_i$ where $S \subseteq [n]$ is a random subset of size $s$.  For any $q$ queries $x^1, \dots, x^q \in \R^n$, $(F(x^1), \dots, F(x^q))$ and $(F'(x^1), \dots, F'(x^q))$ are $\OO(qs/(n - s + 1))$-statistically close. 
\end{claim}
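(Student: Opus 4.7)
The plan is to view both $q$-tuples as linear images of the standard Gaussian $Z$, and then bound the total variation between a fixed Gaussian and a Gaussian mixture over the random subset $S$.  Let $X \in \R^{n \times q}$ be the matrix with columns $x^1, \dots, x^q$, and let $y_i \in \R^q$ be the $i$-th row of $X$.  Then $(F(x^j))_j = X^\top Z \sim \NN(0, M)$ with $M = X^\top X = \sum_i y_i y_i^\top$, while $(F'(x^j))_j \mid S \sim \NN(0, M_S)$ with $M_S = X_S^\top X_S = \sum_{i \in S} y_i y_i^\top$, so unconditionally the second distribution is the Gaussian mixture $\EE_S \NN(0, M_S)$.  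By rotational invariance of $Z$ I would first change basis so that the range of $X^\top$ is axis-aligned, which reduces the ambient dimension to $d = \mathrm{rank}(X) \leq q$ and lets me assume $M$ is invertible on the effective support.

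I would then combine convexity of total variation with Pinsker's inequality and the explicit Gaussian KL formula to obtain
\[
\mathrm{TV}\bigl(\NN(0, M),\ \EE_S \NN(0, M_S)\bigr)
\;\leq\; \EE_S \sqrt{\tfrac12\, \mathrm{KL}\bigl(\NN(0, M_S)\,\|\,\NN(0, M)\bigr)},
\]
where the inner KL evaluates to $\tfrac12\bigl[\mathrm{tr}(M^{-1} M_S) - d + \log \det(M M_S^{-1})\bigr]$ whenever $M_S$ is full-rank on the query subspace.  Since $\EE_S M_S = (s/n)\,M$, the trace $\mathrm{tr}(M^{-1} M_S)$ concentrates around $ds/n$; expanding both the trace and the log-determinant to second order in the deviation $M_S - (s/n) M$ reduces the target bound to a variance-type quantity in $M_S$.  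The averaging over $S$ is then controlled by the overlap statistic $|S \cap S'|$ for two independent copies $S, S'$, which concentrates around $s^2/n$ by a standard hypergeometric second-moment computation; pushing this through the expansion should recover the claimed magnitude $\OO(qs/(n - s + 1))$.

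The main obstacle is the case $s < d$, where $M_S$ is singular on the query subspace and the KL-based estimate above becomes vacuous.  I would handle it either by a Tikhonov regularization $M_S \mapsto M_S + \epsilon I$ followed by a limiting argument, or by stratifying the expectation according to whether $S$ spans the range of $X^\top$ and bounding the singular stratum separately via the combinatorial probability that a random $s$-subset fails to span a fixed $d$-dimensional subspace.  The secondary technical nuisance is to justify that this singular contribution is absorbed into the stated bound rather than producing a worse term, which should follow from an elementary counting estimate together with the fact that $\|y_i\|$ enters only through the Gram matrix $M$ that was already factored out in the first step.
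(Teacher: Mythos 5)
Your high-level strategy---realize both tuples as centered Gaussians, pull the expectation over $S$ outside by convexity, and control the distance between $\NN(0,M)$ and $\NN(0,M_S)$ by a quadratic functional of the covariance discrepancy---is the same as the paper's. The organization differs: the paper never compares $M$ to $M_S$ in one shot. It deletes one coordinate at a time, applies the triangle inequality over the $n-s$ deletions, and at each step invokes a Barsov--Ulyanov-type bound $\mathrm{TV}(\NN(0,\Sigma),\NN(0,\Sigma'))=\OO(\norm{\Sigma^{-1}\Sigma'-I}_F)$; after orthonormalizing the queries, deleting coordinate $i$ perturbs the covariance by the rank-one matrix $c_ic_i^{T}$ (where $c_i$ is the $i$-th column of the query matrix), contributing $\norm{c_i}_2^2$, which averages to $q/m$ over the $m$ coordinates still present. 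Your one-shot version can be made to work, and more simply than you propose: with orthonormal queries $M_S-M=-\sum_{i\notin S}c_ic_i^{T}$, so $\EE_S\norm{M_S-M}_F\le\EE_S\bigl[\sum_{i\notin S}\norm{c_i}_2^2\bigr]=q(n-s)/n$, and convexity of total variation finishes. No second-order expansion and no overlap statistic $\abs{S\cap S'}$ are needed; a first-moment bound suffices.

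The genuine gap is the choice of Pinsker plus the Gaussian KL formula as the quantitative tool. First, as you yourself flag, the KL is infinite whenever $M_S$ is singular on the query subspace, and neither proposed repair closes this: Tikhonov regularization $M_S\mapsto M_S+\epsilon I$ changes the second distribution and the KL to $\NN(0,M)$ still diverges as $\epsilon\to 0$, while the ``probability that $S$ fails to span'' admits no useful combinatorial bound for arbitrary queries (all of the query mass may sit on coordinates outside $S$, making $M_S=0$ with nonnegligible probability). Second, even in the nonsingular case your expansion is centered at $(s/n)M$ whereas the KL is taken against $M$; the deterministic gap between these two reference points contributes at first order to $\mathrm{tr}(M^{-1}M_S)-d$ and to the log-determinant separately, and verifying that these first-order contributions cancel, leaving only a quadratic term, is essentially a rederivation of the Frobenius-norm bound you could have invoked directly. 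Both difficulties vanish if you replace Pinsker/KL by $\mathrm{TV}(\NN(0,M),\NN(0,M_S))=\OO(\norm{M^{-1}M_S-I}_F)$, which remains valid when $M_S$ is singular because the right-hand side is then $\Omega(1)$. Finally, the quantity that actually emerges from either argument is $\OO(q(n-s)/(s+1))$ (the paper's telescoping gives $q\sum_{m=s+1}^{n}1/m$, the one-shot version gives $q(n-s)/n$), which is what Proposition~\ref{prop:lower2} uses with $s=n-1$; the displayed form $\OO(qs/(n-s+1))$ in the claim has $s$ and $n-s$ interchanged, and your computation will not reproduce that literal expression.
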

\begin{proof}
It is sufficient to prove the claim for $\abs{S} = n - 1$ and apply the triangle inequality.  By convexity it is sufficient to upper bound the expected statistical distance averaged over the choice of the index $i$ that is omitted from $S$.  For fixed $i$, since the queried functions are linear, without loss of generality we may assume that the queries $x^1, \dots, x^q$ are orthonormal.  Let $X$ be the $q \times n$ matrix whose rows are the queries $x^1, \dots, x^n$, and $X_{-i}$ be the submatrix obtained by removing the $i$-th column.  The desired statistical distance is then  within a constant factor of $\norm{(X^TX)^{-1}(X_{-i}^T X_{-i}) - I}_F$, where $\norm{\cdot}_F$ is the Frobenius norm~\cite{barsov-ulyanov}.  By orthonormality we obtain that $\norm{(X^TX)^{-1}(X_{-i}^T X_{-i}) - I}_F = \norm{x^i}_2^2$.  Averaging over $i$, the desired statistical distance is at most $\OO(\E_i[\norm{x^i}_2^2]) = \OO(q/n)$.
\end{proof}

\begin{proof}[Proof of Proposition~\ref{prop:lower2}]
Let $F(x) = n^{-1} \sum_{j \neq k} Z_{jk} x_jx_k$, where $Z_{jk}$ are independent standard normal random variables.  Let $F'(x) = n^{-1} \sum_{\text{$j, k, i$ distinct}} Z_{jk} x_jx_k$ where $i$ is chosen at random from $[n]$.  By standard concentration inequalities both $\norm{F}_4$ and $\norm{F'}_4$ are constant with high probability.  By Claim~\ref{claim:nstat}, the answers to any $q$ non-adaptive queries to $F$ and $F'$ are $\OO(q/n)$-statistically close.  

It remains to argue that $F$ is $\Omega(1)$-far from 2-partitionable.  For a fixed bipartition $(S, T)$ of $[n]$, by Claim~\ref{claim:nstat} the cost of $F$ is $n^{-1} \sum_{j \in S, k \in T} Z_{jk}^2$.  Therefore the average cost (over the randomness of $F$) is $\abs{S}\abs{T}/n$.  By Chernoff bound the cost is at least $\Omega(\abs{S}\abs{T}/n) = \Omega(1)$ with probability $1 - \exp(-\Omega(\abs{S}\abs{T}))$.  Taking a union bound over all $2^{n-1}$ possible bipartitions we conclude that $F$ is $\Omega(1)$-far from 2-partitionable with probability $1 - \exp(-\Omega(n))$.
\end{proof}

If Claim~\ref{claim:nstat} extends to adaptive queries, so would Proposition~\ref{prop:lower2}.

\section{Applications to reinforcement learning control}
\label{sec:apprl}

In this section we discuss the application of variable partitioning algorithms given real-valued oracle and the 2-norm measure. The set $\varX$ of variables to be partitioned corresponds to the set $a$ of control variables (the action), while the oracle $F$ corresponds to the advantage function $A$. While the control task is achieved by a series of actions, the advantage function describes how much one single action in the series can affect the final objective. In general, this function is complex enough so explicit representation is not available. 
Instead, it is usually estimated by Monte-Carlo sampling of the action series or by function approximation, where in either case it is sensible to treat the function as an oracle.

In a reinforcement learning control task, the objective is to control the action $a$ so as to maximize the expected cumulative reward over time $t$. The advantage function $A(\cdot,a)$ describes the marginal gain of such an objective of an action $a=a_t$ at time $t$. This function can be estimated by Monte-Carlo sampling of the actions $a_t,a_{t+1},\dots$, or by function approximation. In either of the cases it is sensible to treat the function as an oracle when using it to partition the variables.

We compare empirically with three previous approaches. The first approach is a standard approach proposed by Williams \cite{williams1992simple,sutton2018reinforcement} and later improved by Mnih et al. \cite{mnih2016asynchronous} and Schulman et al. \cite{schulman2017proximal}. These approaches learn reinforcement learning control without considering the possible partitioning of the advantage function. The second approach is to trivially partition $n$ variables into $n$ subsets \cite{wu2018variance,pytorchrl}. This causes a large partition error which induces bias in the learning update. The third baseline partitions the variables heuristically \cite{li2018policy}. In their method the Hessian matrix of the advantage function is first calculated using a discrete gradient method. Then this Hessian matrix is treated as an adjacency matrix of a graph, by the heuristic that two independent variables have a zero element in Hessian. Then elements are removed from Hessian, from those with the lowest absolute values, until the graph has at least $k$ connected components. This algorithm shares some similar intuition with our first algorithm.

The rest of this section will introduce the preliminaries of how this partition may be used in reinforcement learning, and then demonstrate the comparison of scores attained in the experiments.

\subsection{Reinforcement learning control and policy gradient}
\label{sec:pre}

We consider a reinforcement learning task described by a discrete-time Markov decision process (MDP), denoted as the tuple $(\mathcal{S},\mathcal{A}, \mathcal{T}, r,\rho_0,\beta)$. That includes $\mathcal{S} \in \mathbb{R}^m$ the $m$ dimensional state space, $\mathcal{A} \in \mathbb{R}^n$ the $n$ dimensional action space, $\mathcal{T}:\mathcal{S}\times\mathcal{A}\times\mathcal{S} \to \mathbb{R}^+$ the environment transition probability function, $r:\mathcal{S}\times\mathcal{A}\to\mathbb{R}$ the reward function, $\rho_0$ the initial state distribution and $\beta \in [0,1)$ the unnormalized discount factor. Here $n$ is the number of the control variables, which is consistent with the dimension of the input oracle. A (stochastic) policy is a function $\pi:\mathcal{S}\times\mathcal{A}\to\mathbb{R}_+$ that outputs a distribution over $\mathcal{A}$ on any given state $s\in\mathcal{S}$. The objective of reinforcement learning is to learn a policy $\pi$ such that the expected cumulative reward
$J(\theta) = \EE_{s\sim\rho_\pi, a \sim \pi}[\sum_{t=0}^\infty \beta^tr(s_t,a_t)],$
is maximized, where $\rho_\pi(s) =\sum_{t=1}^\infty \beta^{t-1} \PP(s_t =s)$. Since $\pi$ is in a functional space, the problem is commonly relaxed to find over the space of parameterized functions the policy, such as the space of neural networks. When the policy is parameterized we denote it as $\pi_\theta$.

Advantage actor-critic (A2C), a standard approach in policy optimization \cite{mnih2016asynchronous,schulman2017proximal}, estimates the gradient of the policy $\nabla_\theta J(\theta)$. According to the policy gradient theorem \cite{williams1992simple}, this gradient can be estimated by $\nabla_\theta J(\theta) = \EE_{\pi(a|s)}[\nabla_\theta\log\pi(a|s)A^\pi(s,a)]$, where $A^\pi(s,a)$ is the advantage function of $s$,$a$, and policy $\pi$. Here $A^\pi(s,a)$ is defined as $A^\pi(s,a) = Q^\pi(s,a)-V^\pi(s)$, where $Q^\pi(s,a) = \EE_{\pi}[\sum_{t'\geq t}^\infty \beta^{t'-t}r(s_{t'},a_{t'})|s=s_t,a=a_t,\pi]$ is the action-state value function and $V^\pi(s)=\EE_{a\sim \pi(a|s)}[Q^\pi(s,a)]$ the state-value function.

It is shown later in \cite{wu2018variance} and \cite{li2018policy}, that an alternative estimator
\begin{align}
\label{gas}
\nabla_\theta J(\theta) & = \sum\limits_{j=1}^k \EE_{\pi(a_{(j)}|s)}[\nabla_\theta\log\pi(a_{(j)}|s)(A^\pi(s,a_{(j)})],
\end{align}
may induces a lower variance. The condition that this estimator holds is that the advantage function can be approximately partitioned into $k$ parts correspondingly:
\[
A^\pi(s,a) = A_1^\pi(s,a_{(1)}) + \cdots + A_k^\pi(s,a_{(k)}) + U(s,a)
\]
for some state $s$ the estimation takes place, where $U(s,a)$ the partition error is expected to be small for the estimator to be accurate.

The learning is an iterative process that takes $N$ updates by the gradient $\nabla_\theta J(\theta)$ while the $k$-partition is computed every $N/N_1$ iterations. Every run of the partitioning algorithm outputs the disjoint subsets $a_{(1)},\dots, a_{(k)}$, which is then used by \eqref{gas} for $N_1$ iterations. It is worth note that our algorithm has a complexity of $\OO(N_1n^5)$, which is negligible in reinforcement learning. As the Monte-Carlo estimation of $\nabla_\theta J(\theta)$ requires a complete trial of the task (for example, play a game for an entire episode), which involves the interaction of a complex system.

\subsection{Experiments}

We compare our first algorithm (called pairwise estimates - PE) and our second algorithm (called submodular minimization - SM) with the aforementioned existing approaches. A2C \cite{mnih2016asynchronous} is the baseline approach in reinforcement learning who does not leverage variable partition. It uses control variates (CV) as the primary variance reduction technique. Other methods \cite{wu2018variance,li2018policy} partition the control variable so as to reduce the variance in the Monte-Carlo estimation by Rao-Blackwellization (RB) \cite{casella1996rao}. For the discussion on variance reduction we refer the readers to the paper cited above. The comparisons are summarized below

\begin{table}[ht!]
\label{tbl:compare}
\centering
\begin{tabular}{llllll}
\toprule
PG estimator & Variance reduction & Heuristics & Partitioning & Guarantees & Limits \\
\midrule
A2C \cite{mnih2016asynchronous} & CV & - & - & - & - \\ 
Wu et al. \cite{wu2018variance} & CV \& RB & yes & fully & no & $k=n$ \\
Li and Wang \cite{li2018policy} & CV \& RB & yes & greedy & no & \textbf{{no}} \\
PE (our first) & CV \& RB & \textbf{{no}} & greedy & factor-$\OO(kn^2)$ & \textbf{{no}} \\
SM (our second) & CV \& RB & \textbf{{no}} & \textbf{{optimal}} & \textbf{{almost opt}} & \textbf{{no}}\\
\bottomrule
\end{tabular}
\caption{Comparisons of our algorithms with previous ones}
\end{table}

Now we study the performance in terms of both the correctness and the optimality on graph cuts on weighted graphs. Correctness notes the number of times the algorithm outputs exactly the optimal partition, while optimality describes the average of the ratio of the partition error and the optimal partition error, over all the independent runs. This will illustrate the difference between greedy-based algorithms like \cite{li2018policy} and our first algorithm, and submodular minimization based algorithms like our second algorithm. Note that submodular minimization always finds the optimal partition. 

\begin{table}[ht!]
\label{tbl:variable1}
\centering
\begin{tabular}{llllll}
\toprule
\#Nodes $n$ & $n=5$ & $n=10$ & $n=20$ & $n=40$ & $n=100$ \\
\midrule
Submodular & - & - & - & - & - \\ 
Greedy (correctness) & 7753 & 6271 & 4226 & 2380 & 1101 \\
Greedy (optimality) & 1.060 & 1.203 & 1.408 & 1.352 & 1.250 \\
\bottomrule
\end{tabular}
\caption{Performance of the greedy algorithm on variable partition}
\end{table}

Then Table \ref{tbl:variable2} compares the partitioning algorithms when the oracle is a quadratic function $a^TH_0a$ for some random $H_0$. In this case our second algorithm SM also incurs an error per Theorem \ref{thm:l2est}, but the error in practice is shown to be small enough. It has constantly the best empirical performance in both correctness and optimality.

Since we only replaced heuristic partitioning with our partitioning algorithm in reinforcement learning, it is reasonable that our more accurate partitions will improve reinforcement learning.

\begin{table}[th!]
\centering
\begin{tabular}{llllll}
\toprule
\#Nodes $n$ & $n=5$ & $n=10$ & $n=20$ & $n=40$ & $n=100$ \\
\midrule
Li and Wang \cite{li2018policy} (correctness) & 7553 & 5651 & 2929 & 1251 & 400 \\
PE (correctness) & 7709 & 6108 & 4001 & 2020 & 918 \\
SM (correctness) & \textbf{9896} & \textbf{9630} & \textbf{9243} & \textbf{8193} & \textbf{6802}\\
\midrule
Li and Wang \cite{li2018policy} (optimality) & 1.150 & 1.281 & 1.508 & 1.501 & 1.290 \\
Wu et al. \cite{wu2018variance} (optimality) & 9.049 & 13.54 & 20.96 & 34.42 & 72.55 \\
PE (optimality) & 1.075 & 1.277 & 1.452 & 1.400 & 1.281 \\
SM (optimality) & \textbf{1.020} & \textbf{1.028} & \textbf{1.101} & \textbf{1.110} & \textbf{1.025}\\
\bottomrule
\end{tabular}
\caption{Comparisons of the algorithms on variable partition}
\label{tbl:variable2}
\end{table}

Finally we plug our algorithms into reinforcement learning control, replacing the partitioning steps in \cite{li2018policy}. The tasks we are testing on are standard tasks in reinforcement learning by the MuJoCo physics simulator. This includes training a simplified model of ant, cheetah, or human to run as fast as possible. The score is the cumulative reward over time, where the reward is the speed less the energy cost (which is $0.001\|a\|_2^2$). The control variables $a$ are the forces applied to the joints. We refer to \cite{brockman2016openai} for the exact simulator settings.

We have conducted experiments on all eight environments from MuJoCo that has the action dimensional higher than one, shown in Figure \ref{fig:mujoco} below. In the figure the $x$-axis is the number of Monte-Carlo sample updates, which can be regarded as the time elapsed on the training, while the $y$-axis is the score attained by the model. Our second algorithm (SM) has achieved the highest score among most of these tasks, which agrees with our theoretical finding.

\begin{figure*}[ht!]
\centering 
{\includegraphics[width=0.24\textwidth]{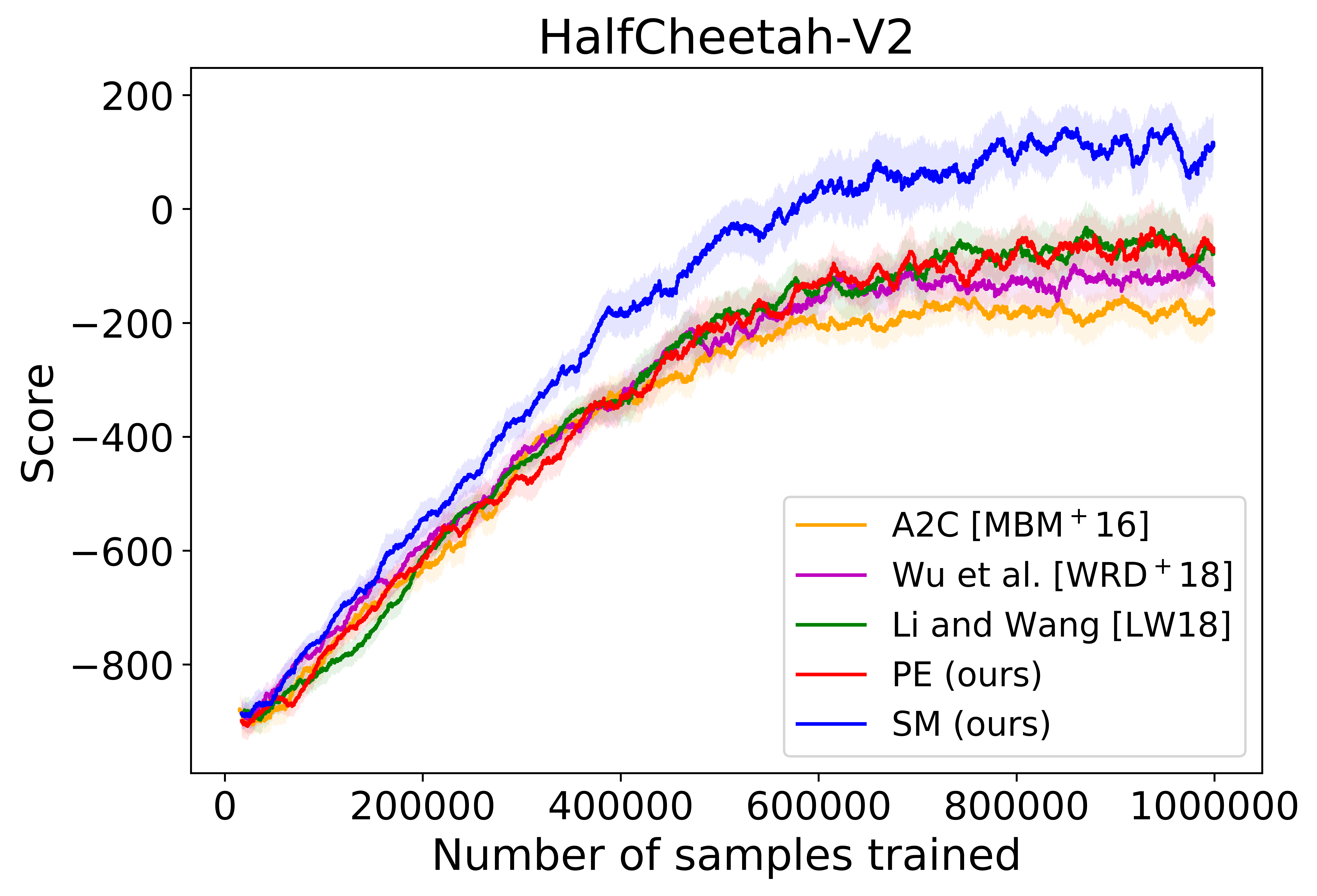}}
{\includegraphics[width=0.24\textwidth]{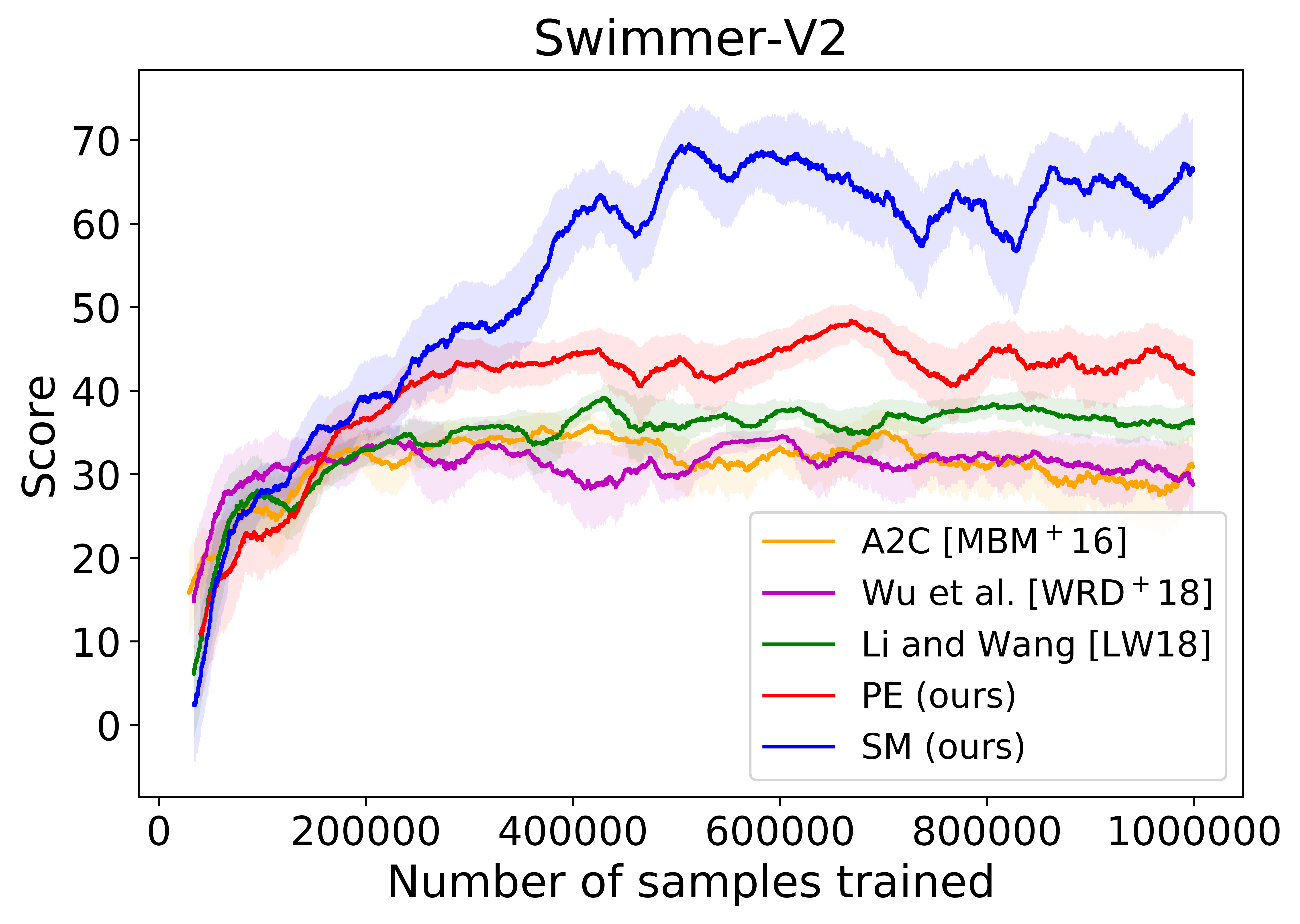}} 
{\includegraphics[width=0.24\textwidth]{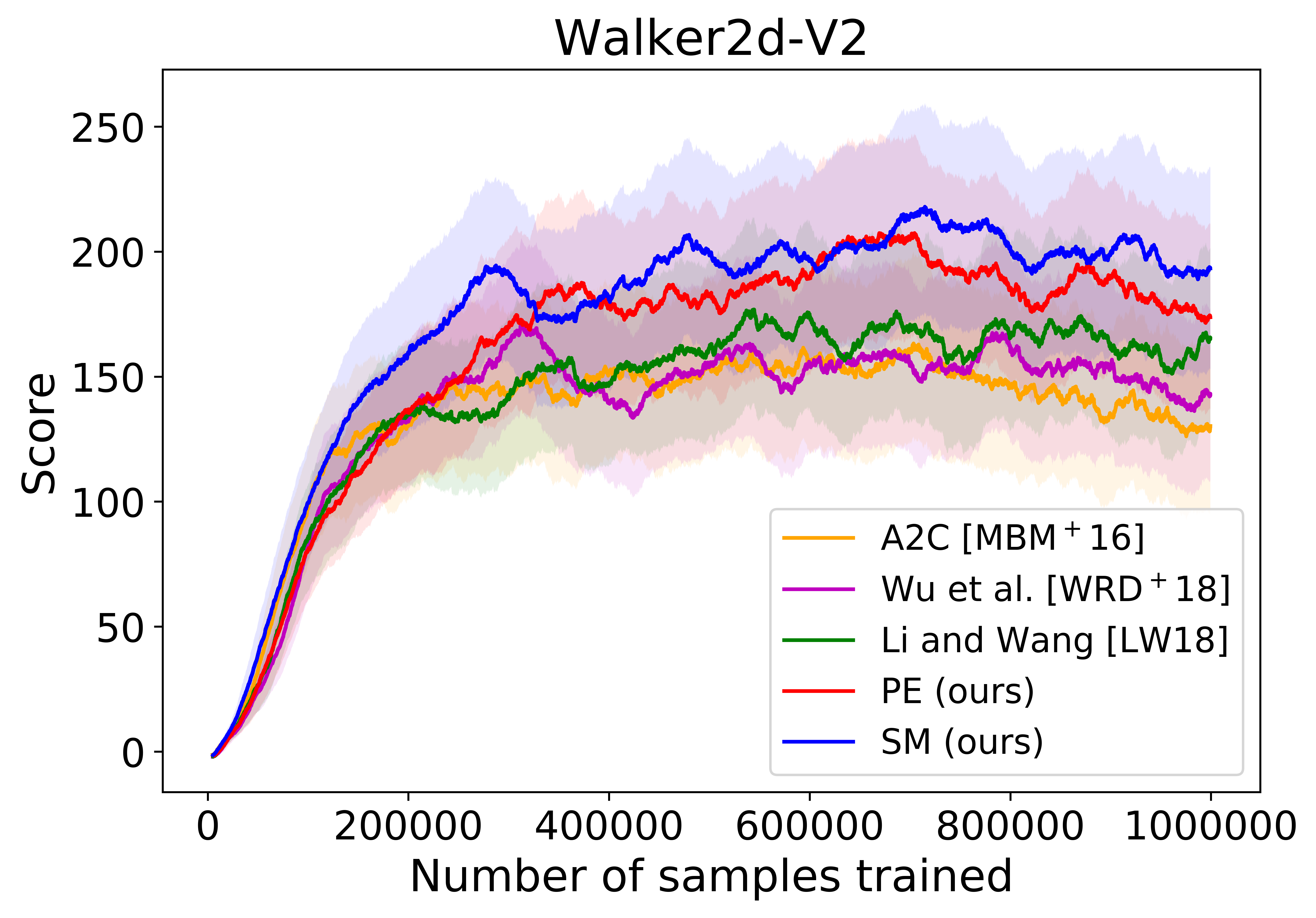}}
{\includegraphics[width=0.24\textwidth]{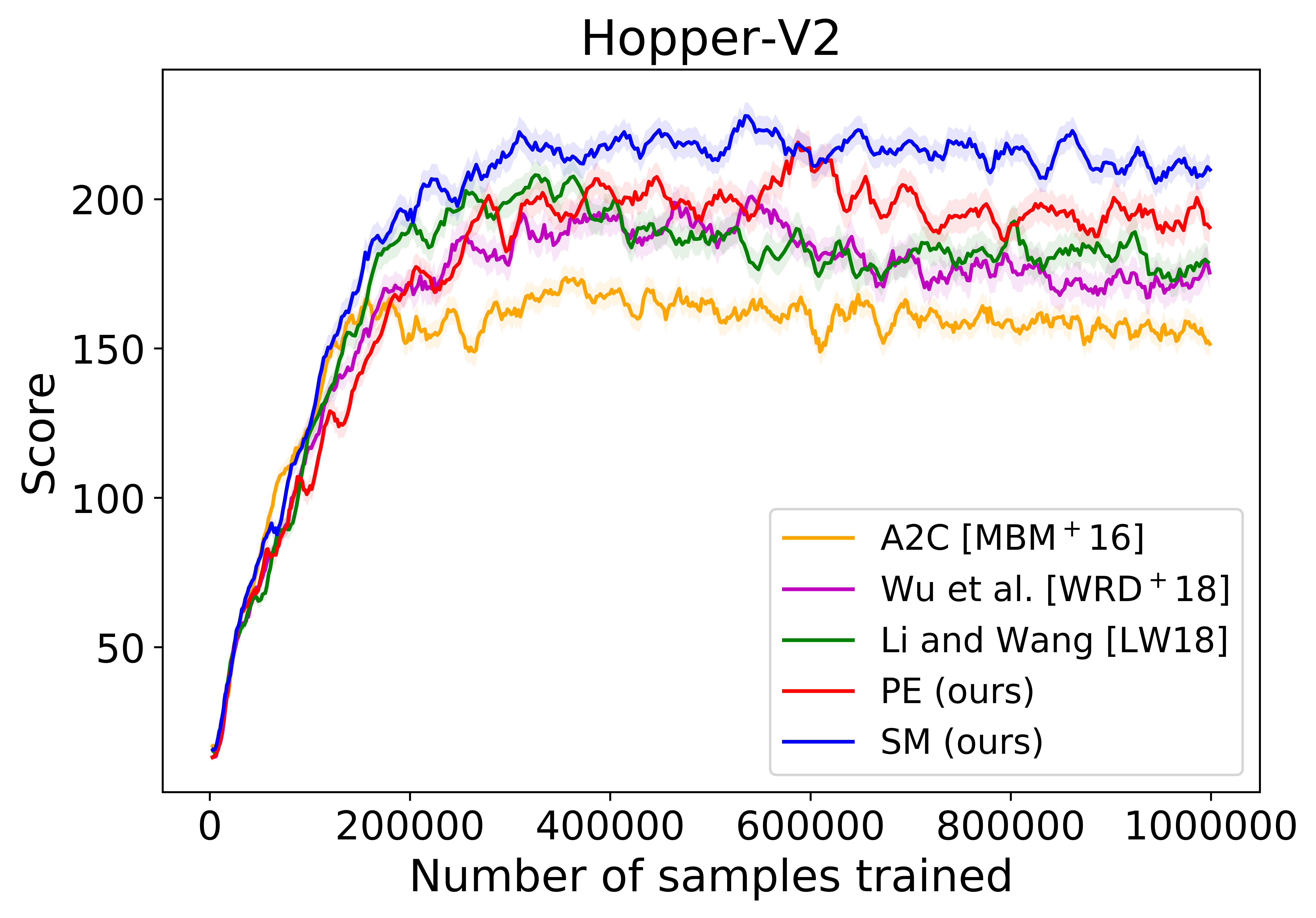}}\\
{\includegraphics[width=0.24\textwidth]{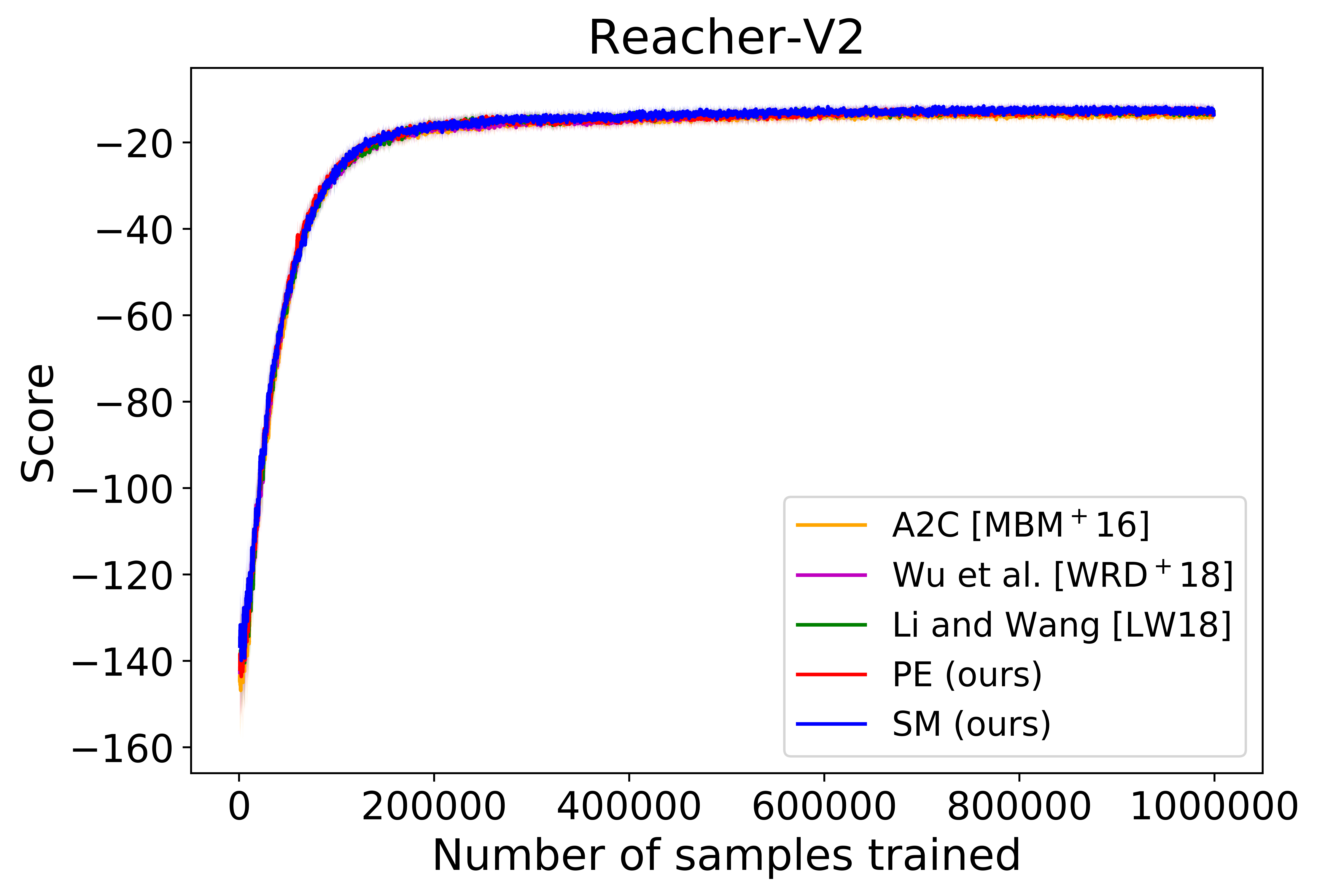}}
{\includegraphics[width=0.24\textwidth]{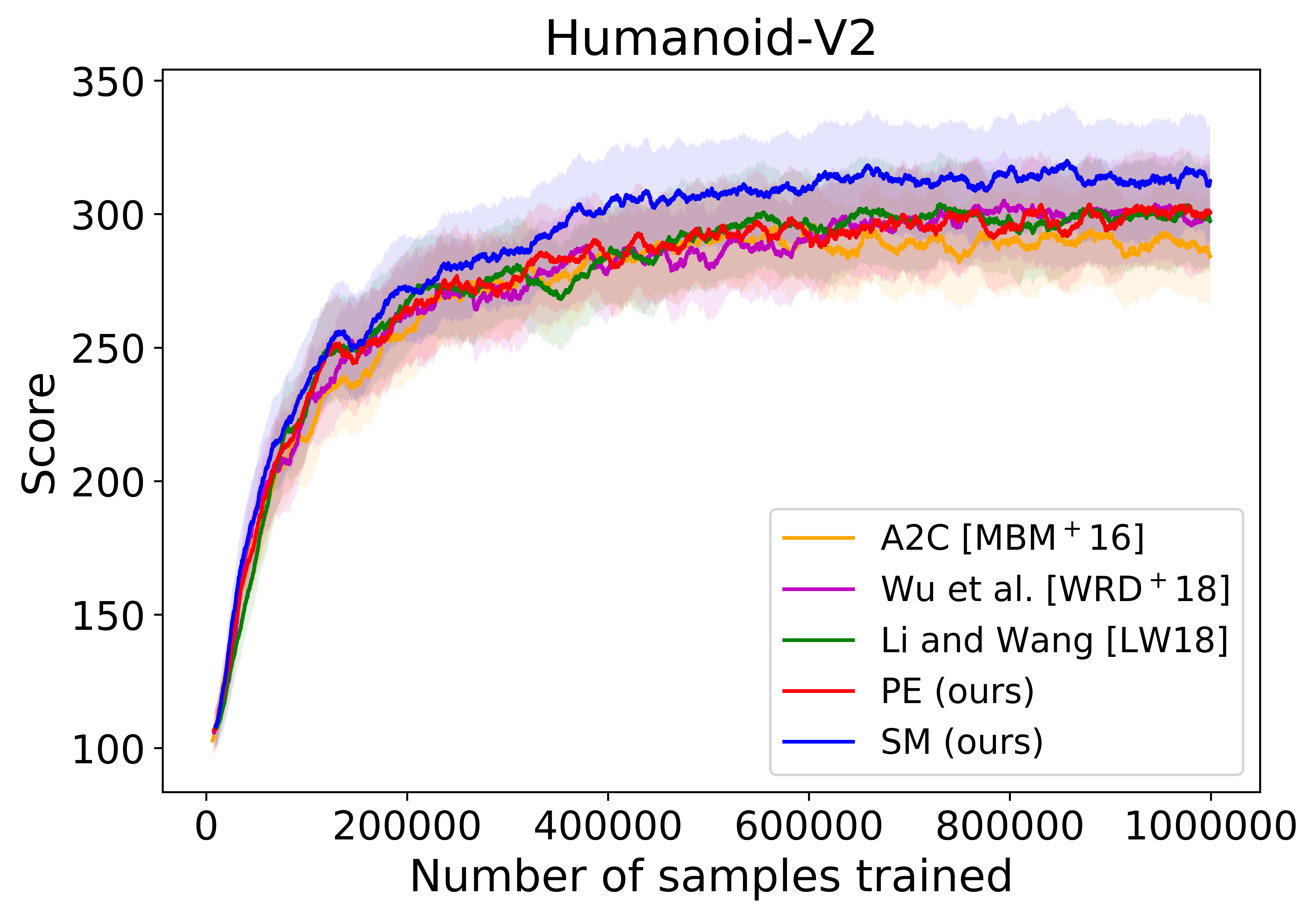}} 
{\includegraphics[width=0.24\textwidth]{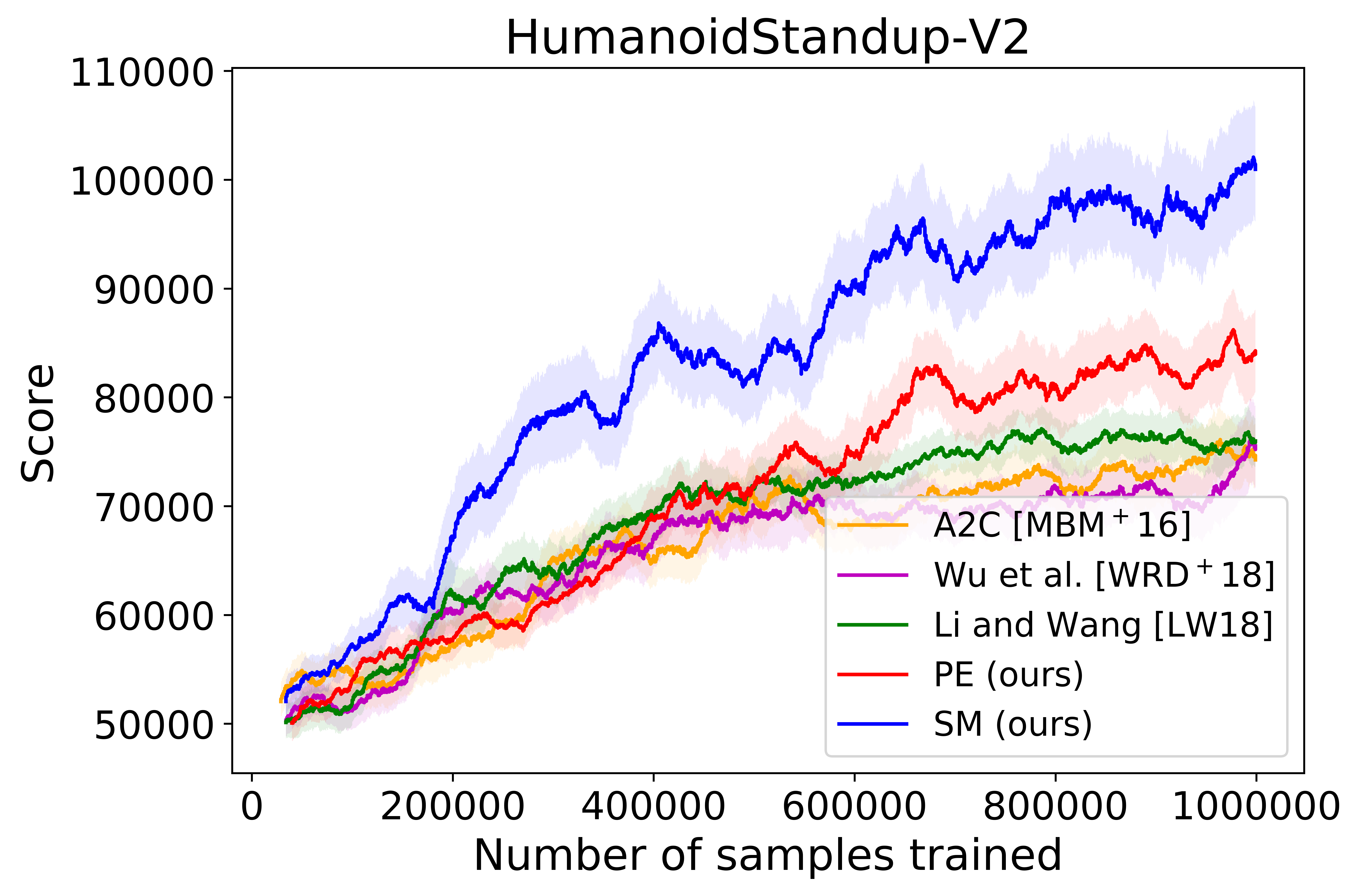}}
{\includegraphics[width=0.24\textwidth]{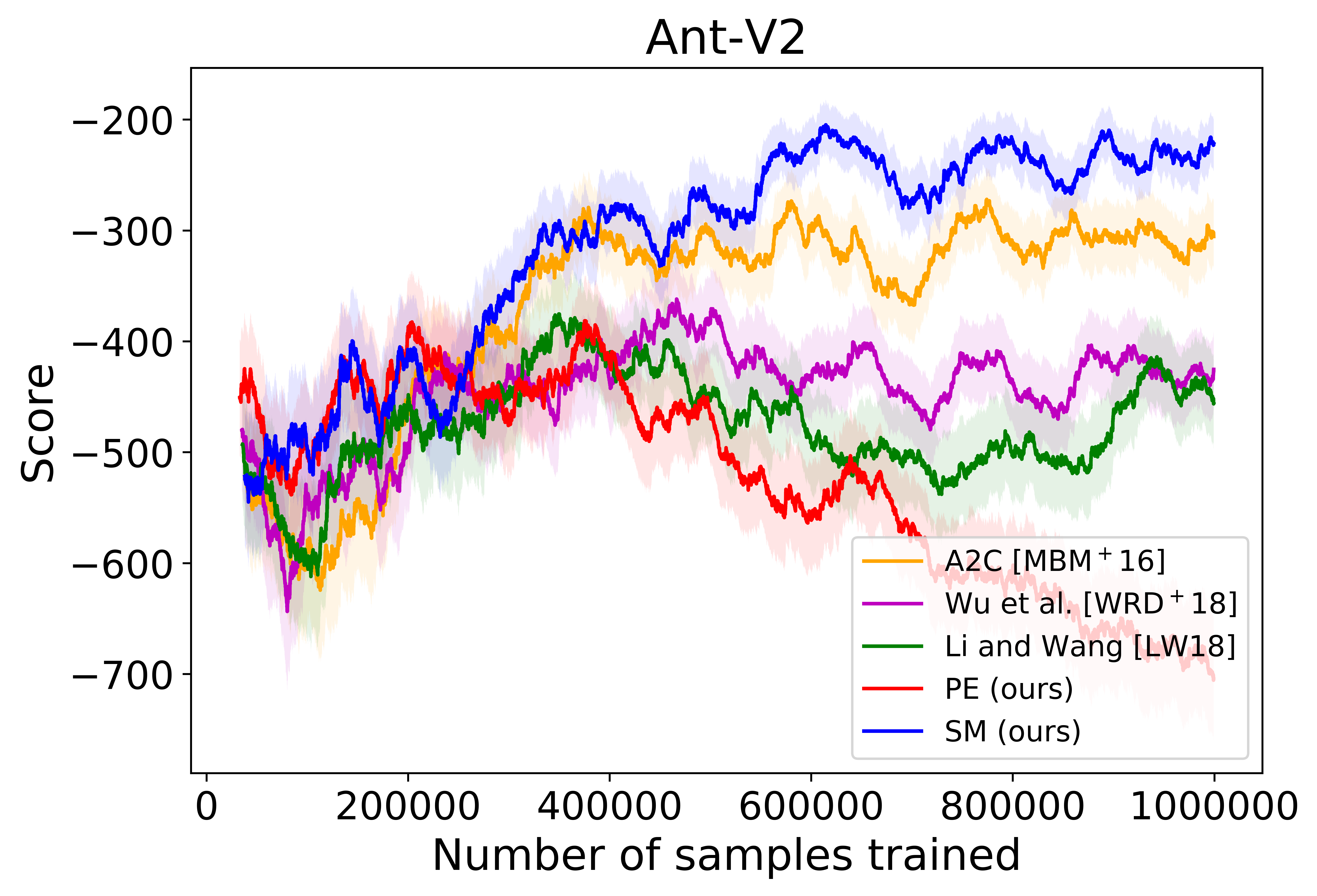}}
\caption{Empirical comparisons on MuJoCo high-dimensional control tasks. Each curve is averaged over 10 independent experiments.}
\label{fig:mujoco}
\end{figure*}

\medskip
\paragraph{Acknowledgments} We would like to thank Arnab Bhattacharyya and Guy Kindler for helpful discussions on our work and its connection to learning and testing juntas, Lap Chi Lau for telling us about the work of Blais et al.~\cite{blais2018tolerant}, and Jiajin Li for pointing out that variable partitioning for reinforcement learning in fact reduces the variance of its policy gradient estimator.

\bibliographystyle{alpha}
\bibliography{itcs2020-arxiv}

\newpage

\appendix

\section{Statistical claims}
\label{appendix:proof1}

\begin{claim}
\label{fact:pscaling}
Assume $t, \hat{t} \geq 0$.  If $t^p \leq \hat{t}^p \leq t^p + (\eps/2)^p$ then $t \leq \hat{t} \leq t + \eps$.
\end{claim}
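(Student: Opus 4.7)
The plan is to prove both inequalities separately using standard properties of $x \mapsto x^p$ on nonnegative reals for $p \geq 1$ (the norm regime).

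First I would establish the lower bound $t \leq \hat{t}$. Since $t, \hat{t} \geq 0$, the function $x \mapsto x^p$ is strictly increasing on $[0, \infty)$, so $t^p \leq \hat{t}^p$ immediately yields $t \leq \hat{t}$ by taking $p$-th roots.

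For the upper bound $\hat{t} \leq t + \eps$, the key tool is the superadditivity inequality $(a + b)^p \geq a^p + b^p$ valid for all $a, b \geq 0$ and $p \geq 1$. This is elementary: dividing through by $(a+b)^p$ reduces it to $u^p + (1-u)^p \leq 1$ for $u \in [0,1]$, which follows from $u^p \leq u$ and $(1-u)^p \leq 1-u$ when $p \geq 1$. Applying this with $a = t$ and $b = \eps/2$ gives
\[
(t + \eps/2)^p \geq t^p + (\eps/2)^p \geq \hat{t}^p,
\]
and taking $p$-th roots yields $\hat{t} \leq t + \eps/2 \leq t + \eps$, as desired.

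There is no real obstacle here; this is a routine calibration lemma used to justify that estimating $\norm{F}_p^p$ within additive error $(\eps/2)^p$ suffices to estimate $\norm{F}_p$ within additive error $\eps$. The only point to note is that the statement is tight only up to a factor of two: the proof actually yields the sharper bound $\hat{t} \leq t + \eps/2$, and the factor of two is absorbed into the $\eps$ to simplify later bookkeeping in the proof of Claim~\ref{claim:estD}.
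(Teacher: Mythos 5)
Your proof is correct, but it takes a genuinely different route from the paper's. The paper handles the upper bound by a case split: if $t \leq \eps/2$ it bounds $\hat{t}$ directly by $\eps$ (so certainly by $t+\eps$), and if $t > \eps/2$ it uses the difference-quotient estimate $\hat{t} - t \leq (\hat{t}^p - t^p)/t^{p-1} \leq (\eps/2)^p/(\eps/2)^{p-1}$. You instead invoke the superadditivity $(a+b)^p \geq a^p + b^p$ for $a,b \geq 0$, $p \geq 1$, applied with $a = t$, $b = \eps/2$, which handles all values of $t$ uniformly in one step and yields the sharper conclusion $\hat{t} \leq t + \eps/2$. Your justification of superadditivity (reduce to $u^p + (1-u)^p \leq 1$ on $[0,1]$) is sound, and you correctly flag the implicit hypothesis $p \geq 1$, which the paper also uses without stating. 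The trade-off is minor: the paper's argument is a direct elementary manipulation needing no auxiliary inequality, while yours is shorter, avoids the case analysis, and shows the lemma is loose by a factor of two in $\eps$ --- a fact the paper's proof does not reveal (its first case genuinely needs the full $\eps$ as written, though only because it bounds crudely).
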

\begin{proof}
The left-hand inequalities are immediate.  For the right-hand ones we start we consider two cases.  If $t \leq \eps/2$, then $\hat{t}^p \leq 2(\eps/2)^p \leq \eps \leq t + \eps$.  If $t > \eps/2$ then
\[ \hat{t} - t \leq \frac{\hat{t}^p - t^p}{t^{p-1}} \leq \frac{(\eps/2)^p}{(\eps/2)^{p-1}} \leq \eps. \hfill\tag*{\qedhere} \]
\end{proof}

\begin{namedtheorem}[Claim \ref{claim:estD}]
Assuming $\norm{F}_{\mathbb{R}, 2p} \leq 1$, the value $\norm{F}_{\mathbb{R}, p}^p$ can be estimated within $\eps^p$, and $\norm{F}_{\mathbb{R}, p}$ can be estimated within $\eps$, from $K^p{\log(1/\gamma)}/{\epsilon^{2p}}$ queries to $F$ in linear time with probability $1 - \gamma$ for some absolute constant $K$.
\end{namedtheorem}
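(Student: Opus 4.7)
The plan is to estimate $\norm{F}_{\R,p}^p = \E[|F(V)|^p]$ by empirical averaging and then apply Claim~\ref{fact:pscaling} to transfer the bound to $\norm{F}_{\R,p}$ itself. Draw $V_1, \dots, V_{N_0}$ i.i.d.\ from the product measure and form $\hat{S} = \tfrac{1}{N_0}\sum_{i=1}^{N_0} |F(V_i)|^p$. This is an unbiased estimator of $\norm{F}_{\R,p}^p$, and each summand has second moment $\E[|F(V)|^{2p}] = \norm{F}_{\R,2p}^{2p} \leq 1$, so the variance of each summand is at most $1$. By Chebyshev's inequality, choosing $N_0 = C \cdot (\eps/2)^{-2p}$ for a sufficiently large absolute constant $C$ gives $\bigl|\hat{S} - \norm{F}_{\R,p}^p\bigr| \leq (\eps/2)^p$ with probability at least $3/4$.

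Next, to boost the success probability from constant to $1 - \gamma$ without paying a Chernoff-style penalty that would require higher-moment assumptions, I would use the standard median-of-means trick. Run $T = \Theta(\log(1/\gamma))$ independent copies of the above estimator and output their median $\hat{M}$. Since each copy falls within $(\eps/2)^p$ of the true value with probability at least $3/4$, a standard Hoeffding bound on the count of "good" copies shows that the median is also within $(\eps/2)^p$ of $\norm{F}_{\R,p}^p$ except with probability at most $\gamma$. The total number of queries is $N_0 \cdot T = O(\log(1/\gamma)/\eps^{2p})$, where the implied constant can be absorbed into $K^p$ for a suitable absolute constant $K$; the runtime is linear in the number of queries since each sample requires one oracle call and constant-time arithmetic (we treat $p$ as fixed in the exponent of $K$).

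Finally, to obtain an estimate of the norm itself rather than its $p$-th power, apply Claim~\ref{fact:pscaling} with $t = \norm{F}_{\R,p}$ and $\hat{t} = \hat{M}^{1/p}$: the bound $|\hat{M} - t^p| \leq (\eps/2)^p$ (after symmetrizing, which only costs a constant factor in $N_0$) yields $|\hat{t} - t| \leq \eps$. This gives both conclusions of the claim simultaneously.

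The only mild obstacle is that the sharp Chebyshev step requires variance $O(1)$ on $|F(V)|^p$, which is precisely what the hypothesis $\norm{F}_{\R,2p} = O(1)$ provides; without that assumption one could not get the $1/\eps^{2p}$ rate. The exponent $p$ enters the constant $K^p$ through the $(\eps/2)^{-2p} = 2^{2p}\eps^{-2p}$ scaling in Chebyshev and through the conversion in Claim~\ref{fact:pscaling}, but since $p$ is a fixed parameter this is benign.
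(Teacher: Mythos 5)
Your proposal is correct and follows essentially the same route as the paper's own proof: Chebyshev on the empirical average of $\abs{F}^p$ (using the $2p$-norm bound to control the variance), a median of $\OO(\log(1/\gamma))$ independent runs to boost the confidence, and Claim~\ref{fact:pscaling} to pass from $\norm{F}_{\R,p}^p$ to $\norm{F}_{\R,p}$. Your added remark about shifting the estimate so that Claim~\ref{fact:pscaling} (which is stated one-sidedly) applies is a fair point of care that the paper glosses over, but it does not change the argument.
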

\begin{proof}
By Chebyshev's inequality, $\E[\abs{F}^p]$ can be estimated within an additive error of $(\eps/2)^p$ by averaging $(2/\eps)^{2p}$ samples with probability $3/4$.  The error can be improved to $1 - \gamma$ by taking the median value of $\OO(\log 1/\gamma)$ runs.   The second bound follows from Claim~\ref{fact:pscaling}.
\end{proof}

\section{Details in the experiments}
\label{appendix:syn}

The exact reinforcement learning control algorithm we used is described below. The algorithm is based on proximal policy optimization \cite{schulman2017proximal} and generalized advantage estimator \cite{schulman2015high,degris2012off} in reinforcement learning.

\begin{algorithm}
  \caption{Policy optimization with variable partitions}
  \label{algo-pg}
  \begin{algorithmic}[1]
    \State{\bfseries Input:} Total number of samples $T$, batch size $B$, partition frequency $M_p$, number of value iterations $M_w$, initial policy parameter $\theta$, initial value and advantage parameters $w$ and $\mu$;
    \State{\bfseries Output:} Optimized policy $\pi_\theta$;
    \For{each iteration $j$ in $[T/B]$}
    \State Collect a batch of trajectory data $\{s_t^{(i)}, a_t^{(i)},r_t^{(i)}\}_{i=1}^B$;
    \For{$M_\theta$ iterations}
    \State Update $\theta$ by one gradient descent step using proximal policy gradient \blue{with the gradient estimator \eqref{gas}};
    \EndFor
    \For{$M_w$ iterations}
    \State Update $w$ and $\mu$ by minimizing $\|V^w(s_t)-R_t\|_2^2$ and $\|\hat{A}-A^\mu(s_t,a_t)\|_2^2$ in one step;
    \EndFor
    \State Estimate $\hat{A}(s_t,a_t)$ using $V^w(s_t)$ by generalized advantage estimator; \label{alg2:step-ppoend}
    \If{$j \equiv 0 \;(\bmod\; M_p)$} \label{alg2:step-partitionif}\tikzmark{top}
       \State Define random function $\xi(\varX)$ to be an estimation of $\EE[D_F(\varX, \varXbar)^2]$; \tikzmark{right}
       \State Run submodular minimization over $\varX$ on $\xi(\varX)$;
       \State Assign $\varX$ and $\bar{\varX}$ to $a_{(1)}$ and $a_{(-1)}$ in \eqref{gas}, respectively;
    \EndIf \label{alg2:step-partitionendif} \tikzmark{bottom}
    \EndFor
  \end{algorithmic}
  \AddNote{top}{bottom}{right}{Variable Partitioning}
\end{algorithm}

The differences between our algorithm and proximal policy gradient \cite{schulman2017proximal} have been highlighted: \blue{Line 6} uses the estimator with partitions on the control variables. \blue{Line $12$-$16$} find the near-optimal variable partition using submodular minimization, by Theorem \ref{thm:l2est}.

We use three neural networks as function approximations: a policy network $\pi_\theta$ and a value network $V^w$ as is in the baseline methods, and an advantage network $A^\mu$ solely used in the partition algorithm. The networks have the same architecture as is in the previous line of works \cite{mnih2016asynchronous,schulman2017proximal}. 

In our MuJoCo experiments, the tasks have been slightly modified (the physics simulator keeps intact). As the number of control variables of the original tasks is relatively low, we augment such dimensions by letting the agent controls two independent instances of the tasks at the same time. 
The scores and the reinforcement signals are then the additions of the scores of the two sub-tasks.
Correspondingly, we use $k=2$ in \cite{li2018policy} and our algorithms.
\end{document}